\definecolor{softGray}{RGB}{240,240,240}
\definecolor{myDarkBlue}{RGB}{26,77,143}  
\definecolor{myDarkGreen}{RGB}{19,113,119} 
\definecolor{darkpastelblue}{rgb}{0.47, 0.62, 1}
\newtheorem{theorem}{Theorem}
\def\eqref#1{equation~\ref{#1}}
\def\1{\bm{1}}
\DeclareMathAlphabet{\mathsfit}{\encodingdefault}{\sfdefault}{m}{sl}
\SetMathAlphabet{\mathsfit}{bold}{\encodingdefault}{\sfdefault}{bx}{n}
\newcommand{\mpar}{\Theta}
\title{Learning a Neural Solver for Parametric PDEs \\ to Enhance Physics-Informed Methods}
\author{Lise Le Boudec \textsuperscript{1} \hspace{2pt} \thanks{Corresponding author: \url{lise.leboudec@isir.upmc.fr}.} \\
\And Emmanuel de Bezenac \textsuperscript{2}\\
\And Louis Serrano \textsuperscript{1}\\
\And Ramon Daniel Regueiro-Espino \textsuperscript{1}\\
\And Yuan Yin \textsuperscript{3} \thanks{Work done during post-doc at Sorbonne University.} \\
\And Patrick Gallinari \textsuperscript{1, 4} \\
\AND \\
\textsuperscript{1} Sorbonne Université, CNRS, ISIR, F-75005 Paris, France \\
\textsuperscript{2} ARCHES, INRIA Paris, France \\ 
\textsuperscript{3} Valeo.ai, Paris, France \\ 
\textsuperscript{4} Criteo AI Lab, Paris, France \\ 
}
\begin{document}

\maketitle

\begin{abstract}

Physics-informed deep learning often faces optimization challenges due to the complexity of solving partial differential equations (PDEs), which involve exploring large solution spaces, require numerous iterations, and can lead to unstable training. These challenges arise particularly from the ill-conditioning of the optimization problem caused by the differential terms in the loss function. To address these issues, we propose learning a solver, i.e., solving PDEs using a physics-informed iterative algorithm trained on data. Our method learns to condition a gradient descent algorithm that automatically adapts to each PDE instance, significantly accelerating and stabilizing the optimization process and enabling faster convergence of physics-aware models. Furthermore, while traditional physics-informed methods solve for a single PDE instance, our approach extends to parametric PDEs. Specifically, we integrate the physical loss gradient with PDE parameters, allowing our method to solve over a distribution of PDE parameters, including coefficients, initial conditions, and boundary conditions. We demonstrate the effectiveness of our approach through empirical experiments on multiple datasets, comparing both training and test-time optimization performance.
The code is available at \url{https://github.com/2ailesB/neural-parametric-solver}. 

\end{abstract}

\section{Introduction}
\label{sec:intro}
Partial Differential Equations (PDEs) are ubiquitous as mathematical models of dynamical phenomena in science and engineering. Solving PDEs is of crucial interest to researchers and engineers, leading to a huge literature on this subject \citep{PDE_evans10, PDEact_Salsa}. 
Traditional approaches to solving PDEs such as finite difference, finite element analysis, or spectral methods \citep{zienkiewicz2005finite, leveque2007finite} often come with stability and convergence guarantees but suffer from a high computational cost. Improving numerical PDE solvers through faster and more accurate algorithms remains an active research topic \citep{zienkiewicz2005finite}.  



PDE solvers usually rely on discretization and/or linearization of the problem through various techniques to simplify the computations. Iterative methods such as Jacobi, Gauss-Seidel, Conjugate Gradient, and Krylov subspace methods can then be used to solve the resulting systems. Unfortunately, many PDEs have an ill-conditioned nature, and these iterative processes can demand extensive computational resources. Preconditioning techniques are often essential to mitigate this, though they require precise customization to the specific PDE problem, making the development of effective solvers a significant research endeavor in itself. Yet, the computational demands, time, and expertise required to develop these algorithms sometimes make them impractical or sub-optimal for specific classes of problems. Instead of relying on hand-designed algorithms, researchers have investigated, as an alternative, the use of machine learning to train iterative PDE solvers \cite{hsieh2019learningNPDESconv, li23e-preconcgpde, rudikov2024fgcno, kopanivcakova2023enhancing}. These approaches usually parallel the classical numerical methods by solving a linear system resulting from the discretization of a PDE, for example, using finite differences or finite elements. A preconditioner is learned from data by optimizing a residual loss computed w.r.t. a ground truth solution obtained with a PDE solver. This preconditioner is used on top of a baseline iterative solver and aims at accelerating its convergence. Examples of baseline solvers are the conjugate gradient \citep{li23e-preconcgpde, rudikov2024fgcno} or the Jacobi method \cite{hsieh2019learningNPDESconv}. 

Another recent research direction investigates the use of neural networks for building surrogate models in order to accelerate the computations traditionally handled by numerical techniques. These methods fall into two main categories: \textit{supervised} and \textit{unsupervised}. The \textit{supervised} methodology consists of first solving the PDE using numerical methods to generate input and target data and then regressing the solution using neural networks in the hope that this surrogate could solve new instances of the PDE. Many models, such as Neural Operators, lie within this class \citep{li2020fno, cno, reno} and focus on learning the solution operator directly through a single neural network pass. 
\textit{Unsupervised} approaches, involve considering a neural network as a solution of the PDE. The neural network parameters are found by minimizing the PDE residual with gradient descent. Methods such as Physics-Informed Neural Networks (PINNs) \citep{PINNs_Raissi19}, or DeepRitz \citep{DR1} fall under this category. This family of methods is attractive as it does not rely on any form of data, but only on information from the PDE residual. However, they exhibit severe difficulties during training \citep{krishnapriyan_characterizing, deryckManu2023PreconditioningPINNs}, often requiring many optimization steps and sophisticated training schemes \citep{krishnapriyan_characterizing,RathoreICML2024}. The ill-conditioned nature of PDE residual loss appears again in this context, making standard optimizers such as Adam inappropriate (see \cref{app:losslandscape} and \cref{ssec:app_vismethod} for a visualization of the ill conditioning of this loss landscape). A detailed review of the existing literature is described in \cref{app:rw}.

In this work, we consider having access to the PDE as in unsupervised approaches and also to some data for training our neural solver. Our objective is to solve the optimization issues mentioned above by \textit{learning an iterative algorithm that solves the PDE} from its residual, defined as in the PINNs framework (see \cref{fig:idea}). This \textit{neural solver} is trained from data, either simulations or observations. Different from the classical ML training problem which aims at learning the parameters of a statistical model from examples, the problem we handle is learning to learn, \cite{andrychowicz2016l2lbygdbygd}, i.e. learning an iterative algorithm that will allow us to solve a learning problem. When vanilla PINNs handle a single PDE instance, requiring retraining for each new instance, we consider the more complex setting of solving parametric PDEs, the parameters may include boundary or initial conditions, forcing terms, and PDE coefficients. Each specific instance of the PDE, sampled from the PDE parameter distribution, will then be considered as a training example. The objective is then to learn a solver from a sample of the parametric PDE distribution in order to accelerate inference on new instances of the equation.  With respect to unsupervised approaches, our model implements an improved optimization algorithm, tailored to the parametric PDE problem at hand instead of using a hand-defined method such as stochastic gradient descent (SGD) or Adam. As demonstrated in the experimental section, this approach proves highly effective for the ill-posed problem of optimizing the PINNs objective, enabling convergence in just a few steps. This is further illustrated through gradient trajectory visualizations in \cref{app:losslandscape}. In the proposed methodology, the neural solver will make use of the gradient information computed by a baseline gradient method to accelerate its convergence. In our instantiation, we will use SGD as our baseline algorithm, but the method could be easily extended to other baselines. Our model deviates from the traditional preconditioning methods by directly optimizing the non-linear PDE residual loss \citep{PINNs_Raissi19} without going through the discretization steps. 
Our contribution includes :
\begin{itemize}
    \item Setting an optimization framework for learning to iteratively solve parametric PDEs from physics-informed loss functions. We develop an instantiation of this idea using an SGD baseline formulation. We detail the different components of the framework as well as training and inference procedures.  
    \item Evaluating this method on challenging PDEs for physics-informed method, including failure cases of classical PINNs and showing that it solves the associated optimization issues and accelerates the convergence. 
    \item Extending the comparison to several parametric PDEs with varying parameters from $1$d static to $2$d+time problems. We perform a comparison with  baselines demonstrating a significant acceleration of the convergence w.r.t. baselines. 
\end{itemize}

\begin{figure}[htbp]
    \centering
    \includegraphics[width=0.8\textwidth, trim={0.5cm 5cm 1.cm 0.5cm}, clip]{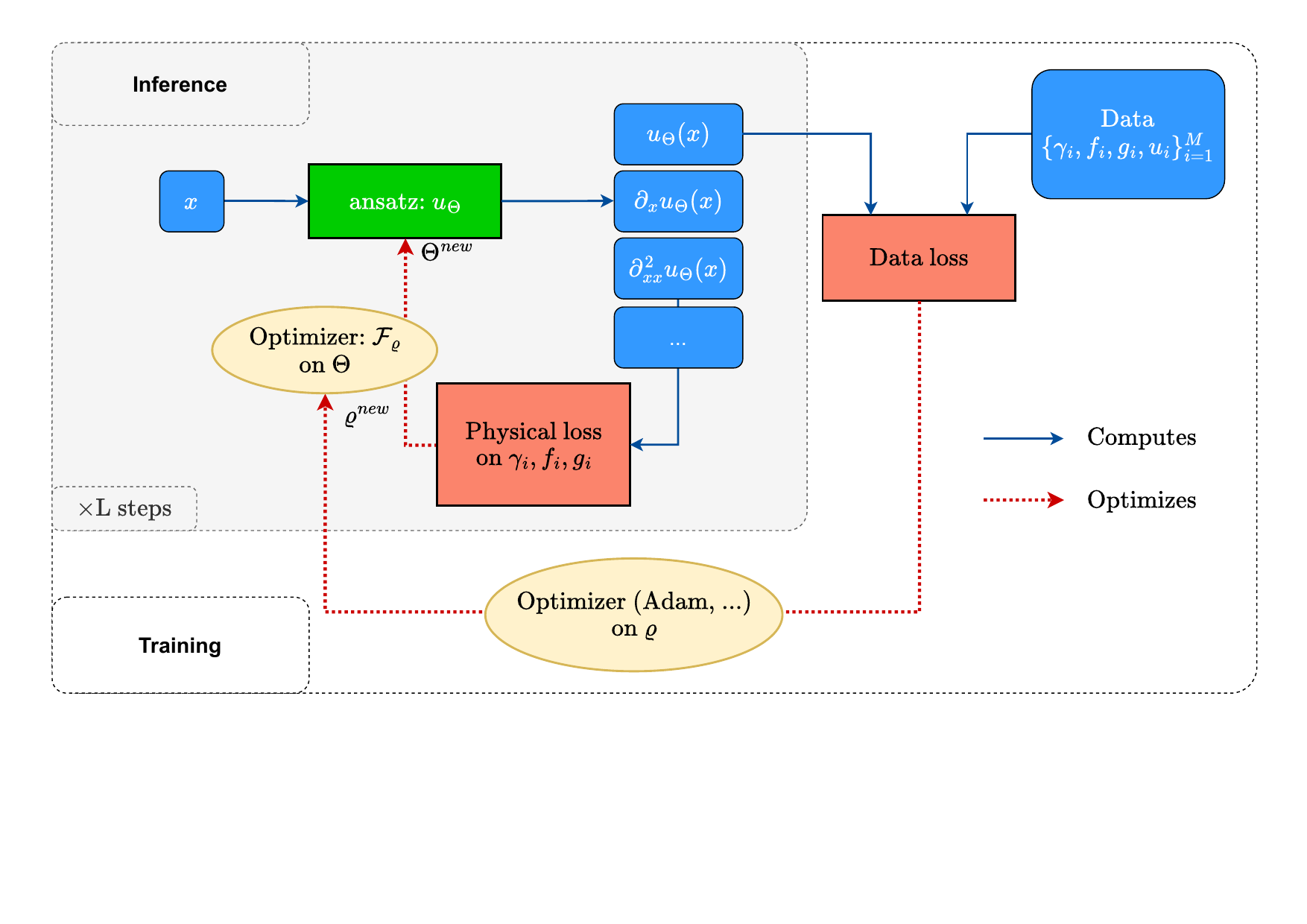}
    \caption{Optimization scheme of a physics-informed method with our framework. }
    \label{fig:idea}
\end{figure}

\section{Motivation\label{sec:motivation}}

Let us first motivate our objective with a simple example. Physics-informed neural networks (PINNs) are a promising tool for optimizing neural networks in an unsupervised way to solve partial differential equations (PDEs). However, these methods are notoriously difficult to train \citep{krishnapriyan_characterizing, deryckManu2023PreconditioningPINNs}. As an illustrative example of this challenge, let us solve the zero-boundary Poisson equation in $1$d on $\Omega = [-\pi, \pi]$. Note that this section is intentionally informal, we provide rigorous statements and proofs in Appendix \ref{app:proof}.

\paragraph{Poisson equation, 1d.} The solution is given by $u(x)= \sin(k x)$.
\begin{align}
  &u''(x) = -k^2 \sin(kx), \notag \\
  &u(-\pi) = 0, \, u(\pi) = 0.
  \label{eq:poisson_zero_bcs}
\end{align}
Physics-informed machine learning relies on an \emph{ansatz space} of \emph{parametric functions}, e.g. neural networks $u_\mpar:\Omega \mapsto \mathbb{R}$, minimizing the following loss in order to satisfy the constraints in \eqref{eq:poisson_zero_bcs}:
\begin{equation*}
    \mathcal{L}_{\textnormal{PDE}} = \mathcal{L}_{_{\textnormal{Res}}} + \lambda\mathcal{L}_{\textnormal{BC}}, \quad \mathcal{L}_{_{\textnormal{Res}}} = \int_{\Omega} \left| u_{\mpar}''(x) - f(x) \right|^2 dx , 
\end{equation*}
\begin{equation*}
    \mathcal{L}_{\textnormal{BC}} = \frac{1}{2} \left[ u_{\mpar}(-\pi) ^2 + u_{\mpar}(\pi)^2 \right].
\end{equation*}

As a simple example, consider the parametrization given by considering a linear combination of Fourier features widely used \citep{FF}\footnote{Note that even though the ansatz is linear in $\mpar$, it is not linear in $x$.}, $u_\mpar(x) = \sum_{k=-K}^K \theta_k\phi_k(x)$, with $\phi_0(x)= \tfrac{1}{\sqrt{2\pi}}$, $\phi_{-k}(x)=\tfrac{1}{\sqrt{\pi}}\cos(kx)$ and $\phi_k(x) = \tfrac{1}{\sqrt{\pi}}\sin(kx)$ for $1\leq k\leq K$. 

This simple but informative example yields a tractable gradient descent algorithm, as the associated updates are linear in the parameters, governed by a matrix $A$ and constant $b$:
\begin{align}
    \mpar_{l+1} &= \mpar_l - \eta \nabla \mathcal{L}_{\textnormal{PDE}}(\mpar_l)\notag \\
    &= (I - \eta A) \mpar_l + b
    \label{eq:gradient_descent}
\end{align}

with $A$ whose condition number is $\kappa(A) := {\lambda_{\max}(A)}/{\lambda_{\min}(A)} \geq K^4$:
\begin{align}
    A &= \begin{bmatrix}
    0^4 & 0 & \cdots & 0 \\
    0 & 1^4 & \cdots & 0 \\
    \vdots & \vdots & \ddots & \vdots \\
    0 & 0 & \cdots & K^4
    \end{bmatrix} + \lambda 
    \begin{bmatrix}
    \phi_1(\pi) \\ 
    \phi_2(\pi) \\ 
    \vdots \\ 
    \phi_K(\pi)
    \end{bmatrix}
    \begin{bmatrix}
    \phi_1(\pi) & \phi_2(\pi) & \cdots & \phi_
    K(\pi)
    \end{bmatrix}.
\label{eq:A}
\end{align}

This implies that the condition number of $A$ increases extremely rapidly in the ratio between the highest and lowest frequencies of the network. Given that the rate of convergence to the optimum $\mpar^\ast = \mpar_0 + A^{-1}b$ can be bounded as
\begin{align}
        \|\mpar_l - \mpar^\ast\|_2 \leq \left(1-{c}/{\kappa(A)}\right)^{l}\| \mpar_0 - \mpar^\ast \|_2,
\end{align}

the number of steps $N(\varepsilon)$ required to obtain an error of size at most $\varepsilon$, i.e., $ \|\mpar_l - \mpar^\ast\|_2 \leq \varepsilon$ increases linearly in the condition number, i.e. as the fourth power of the maximal frequency $K$:
\begin{equation}
\label{eq:nsteps}
N(\varepsilon) = O\left(\kappa(A)\ln \tfrac{1}{\epsilon}\right) = O\left(K^4\ln \tfrac{1}{\epsilon}\right).
\end{equation}

We believe that this simple example clearly illustrates and highlights the fact that PINNs--even when considering a linear basis, and when the PDE is linear--suffer heavily from ill-conditioning: if $500$ steps are required in order to achieve a given error when $K=5$, roughly speaking, $312\:500$ steps are required for only $K=25$. This result extends to more general linear systems of equations and linear ansatz, as explained in \cref{app:proof}. 

Our objective in the following will be to accelerate the convergence of such systems in this context--as well as extend them to the non-linear setting. To do so, in the following section, we will learn how to transform the optimization problem in such a way that the number of gradient descent iterations is small. The resulting method can be seen as a standalone, iterative solver as it is not only applicable to different PDEs but can handle a wide range of initial/boundary conditions and parameters.

\section{Approach}

In order to optimize PDE-based losses, we propose to learn a physics-based optimizer that will fulfill two objectives: (i) allowing a fast test-time optimization given a new PDE and (ii) solving without retraining parametric PDEs, with varying PDE coefficients $\gamma$ \footnote{Note that PDE coefficients can be functions, an example is the Darcy PDE in \cref{subsec:datasets}}, forcing terms $f$, and initial/boundary conditions $g$ using the same model. We present the general framework below and propose an instantiation that leverages a linear combination of basis functions as the ansatz. 

\subsection{Problem statement\label{sec:problem_statement}}

Let us consider the following family of boundary value problems parameterized by $\gamma$ with domain $\Omega$, representing both space and time, with $\mathcal{N}$ a potentially nonlinear differential operator, $\mathcal{B}$ the boundary operator, $g$ the initial/boundary conditions, and source term $f$:
\begin{align}
    \mathcal{N}(u; \gamma) &= f \quad \text{in } \Omega, \label{eq:PDE}\\
    \mathcal{B}(u) &= g \quad \text{on } \partial\Omega. \label{eq:BC}
\end{align}

Note that different PDEs can be represented in this form, amounting to changing the parameters $\gamma$. The goal here is to develop a generic algorithm that is able to solve the above problem, yielding an approximate solution $u$ given the PDE and different sets of inputs  $(\gamma, f, g)$.

For training, we assume access to a dataset of $M$ problem instances, represented by the PDE parameters $(\gamma_i, f_i, g_i)_{i=1}^M$ and to associated target solution $(u_i)_{i=1}^M$ given on a $m$ point grid $(x_j)_{j=1}^m$. The solutions  $(u_i)_{i=1}^M$ will be used to train the neural solver. At inference, for a new PDE instance, only the PDE parameters are provided and we do not have access to solution points $(u_i)_{i=1}^M$. 

\subsection{Methodology}
\label{ssec:methodology}

Physics-informed neural networks consider an ansatz $u_\mpar$ parametrized by some finite-dimensional $\mpar$. The parameters $\mpar$ are iteratively updated by minimizing a criterion $\mathcal{L}_{\textnormal{PDE}}$ (e.g. the PDE residual), which assesses how well the ansatz $u_\mpar$ meets the conditions specified in equations \ref{eq:PDE} and \ref{eq:BC}. As introduced for example for PINNs \citep{PINNs_Raissi19} or Deep Galerkin method \citep{Sirignano2018}, we consider $\mathcal{L}_{\textnormal{PDE}}$ to be given by the strong formulation of the residual $\mathcal{L}_{\textnormal{Res}}$, plus a boundary discrepancy term\footnote{Note that other formulations of the loss may also be considered in a straightforward manner.} $\mathcal{L}_{\textnormal{BC}}$: $\mathcal{L}_{\textnormal{PDE}} = \mathcal{L}_{_{\textnormal{Res}}} + \lambda\mathcal{L}_{\textnormal{BC}}, \quad, \lambda > 0$. 

\begin{equation}
\mathcal{L}_{_{\textnormal{Res}}} = \sum_{x_j \in \Omega} |\mathcal{N}(u_{\mpar}; \gamma)(x_j) - f(x_j)|^2 , \quad \mathcal{L}_{\textnormal{BC}} = \sum_{x_j \in \partial \Omega} |\mathcal{B}(u_{\mpar})(x_j) -g(x_j)|^2 
\label{eq:pinnsloss}
\end{equation}

As illustrated in \cref{sec:motivation}, performing gradient descent, or alternatives such as Adam and L-BFGS on such a highly ill-conditioned loss $\mathcal{L}_{\textnormal{PDE}}$ leads to severe training difficulties \citep{krishnapriyan_characterizing}.
The key idea in our work is to improve a baseline gradient descent algorithms with the neural solver. More precisely starting from a baseline gradient algorithm, SGD in our instantiation, instead of considering the classical update, we first \textit{transform} the gradient using a neural network $\mathcal{F}_{\varrho}$ with parameters $\varrho$, depending on the values of the PDE parameters as well as on other inputs such as the residual gradient provided by SGD: $\nabla_{\mpar}\mathcal{L}_{\textnormal{PDE}}$. The objective is to transform, through the neural solver $\mathcal{F}_{\varrho}$,  the ill-conditioned problem into a new, simpler problem that requires fewer steps in order to achieve a given error.

Once the neural solver $\mathcal{F}_{\varrho}$ has been learned, inference can be performed on any new PDE as follows (see inference \cref{alg:infngd}). Starting from an initial ansatz parameter $\mpar_0$, it is iteratively updated by this solver. At iteration $l$, the steepest direction of the loss $\mathcal{L}_{\textnormal{PDE}}$ is first computed with autograd. Then, the gradient is transformed, in a PDE parameter dependant way, with $\mathcal{F}_{\varrho}$:
\begin{equation}
    \mpar_{l+1} = \mpar_l - \eta \mathcal{F}_{\varrho}(\nabla_{\mpar}\mathcal{L}_{\textnormal{PDE}}(\mpar_l), \; \gamma, f, g)
\label{eq:NGinfstep}
\end{equation}

The objective is to iteratively refine the ansatz to closely approximate the true solution after a series of $L$ iterations, \textit{ideally small} for efficiency.

This approach can be seen as learning the iterates in a PDE solver to achieve a low loss, similar to the residual minimization methods in PDEs \citep{elman2014finite}. By design, this solver is intended to be applicable to different PDEs, as well as various sources, boundary conditions, and initial conditions. This flexibility allows for a broad range of applications, making it a versatile tool in solving complex PDEs with varying characteristics.

Designed as a \textit{parametric PDE solver}, $\mathcal{F}_{\varrho}$  \footnote{In the following we use "solver" with $\mathcal{F}_{\varrho}$ as a short-hand to refer to our proposed method.} is trained with input target data from different sets of PDE parameters, as outlined in \cref{sec:problem_statement}. Once trained, it will be used without retraining on new instances of the PDE, i.e. with new values of the PDE parameters. The underlying hypothesis is that even though the solutions may be different for different inputs and parameters, the solution methodology remains relatively consistent. This consistency is expected to enhance the algorithm's ability to generalize across novel scenarios effectively.

\subsection{Training of a physics-informed solver\label{sec:training}}

\textbf{Choice of Ansatz $u_\mpar$.} A very common choice \citep{SpectralMethodsShen} is to consider a family of basis functions $\Psi(x)=\{\psi_i(x)\}_{i=1}^N$ and consider the ansatz to be given by its linear span ${u_{\mpar}(x) = \sum_{i=0}^N \theta_i\psi_i(x)}$. In the following, we consider this linear reconstruction, although our formulation is generic in the sense that it can also accommodate nonlinear variants. \footnote{Although we have found this may further complicate training.}

As indicated in \cref{eq:dataloss}, the solver $\mathcal{F}_{\varrho}$ will be trained from samples of the PDE parameter distribution $(\gamma, f, g)$ and from the associated samples of the solution $u$. We first describe below the inference step aiming at iteratively updating the parameters $\mpar$ of the solution function $u_\mpar$ while $\mathcal{F}_{\varrho}$ is held fixed. We then describe how the ${\varrho}$ parameters of the solver are trained. Please refer to \cref{fig:idea} that illustrates the interaction between the two steps.

          

\begin{wrapfigure}[26]{R}{0.5\textwidth}
\RestyleAlgo{ruled}
\begin{algorithm}[H]
    \caption{Inference using the neural PDE solver.}
    \label{alg:infngd}
    \KwData{$\mpar_0 \in \mathbb{R}^n$, PDE ($\gamma, f, g)$}
    \KwResult{$\mpar_L \in \mathbb{R}^n$}
        \For{l = 0...L-1}{
            $\mpar_{l+1} = \mpar_l - \eta \mathcal{F}_{\varrho}( \nabla\mathcal{L}_{\textnormal{PDE}}(\mpar_l), \gamma, f, g)$
        }
        \Return $\mpar_{L}$
    \end{algorithm}
\RestyleAlgo{ruled}
\begin{algorithm}[H]
    \caption{Training algorithm for learning to optimize physics-informed losses.}
    \label{alg:trngd}
    \KwData{$\mpar_0 \in \mathbb{R}^n$, PDE ($\gamma, f, g)$, sample values $u(x)$}
    \KwResult{$\mathcal{F}_{\varrho}$}
    \For{$e = 1... $ epochs}  {
 \For{(\textnormal{PDE}, x, u) in dataset}{
          
             Initialize $\mpar_0$\\
             Estimate $\mpar_L$ from $\mpar_0, (\gamma, f, g)$ using \cref{alg:infngd} \\
             Reconstruct $u_{\mpar_L}(x)$ \\
             Update $\varrho$ with gradient descent from the data loss in \cref{eq:dataloss}

         }
     }
     \Return
$\mathcal{F}_{\varrho}$
\end{algorithm}
\end{wrapfigure}

\textbf{Inference}
The inference step is performed at fixed values of the solver parameters $\mathcal{F}_{\varrho}$. It  consists, for a given instance of the PDE with parameters $(\gamma_i, f_i, g_i)$, in finding the best $\mpar$ with a few steps of the solver using \cref{eq:NGinfstep}. It is illustrated in \cref{fig:idea} - grey box and formalized in \cref{alg:infngd}: starting from  initial parameters $\mpar_0$, we compute the Physical loss $\mathcal{L}_{\textnormal{PDE}}$ using the ansatz $u_{\mpar_0}$. The PDE derivatives in $\mathcal{L}_{\textnormal{PDE}}$, can be computed by hand or automatic differentiation depending on the application \footnote{In our experiments, we computed the derivative by hand when possible since it fastens computations.}. 
Then, \cref{eq:NGinfstep} is used to update the parameters $\mpar$ for a given number of steps $L$.
The final solution is reconstructed using the linear combination, ${u_{\mpar}(x) = \sum_{i=0}^N \theta_i\psi_i(x)}$, introduced in \cref{ssec:methodology} with the computed coefficients $\mpar_L$. Note that inference does not make use of the sampled
 target solutions $(u_i)_{i=1}^M$ computed on the grid points$(x_j)_{j=1}^m$.These targets are used exclusively for training the neural solver.

\textbf{Training the neural solver}
Training amounts to learning the parameters of the solver $\mathcal{F}_{\varrho}$ and is performed with a training set of PDE parameters and simulation data considered as ground truth $(\gamma_i, f_i, g_i, u_i)_{i=1}^M$, corresponding to PDE instances (\textit{i.e.} with \textbf{different} parameters $\gamma$ and/or forcing terms $f$ and/or initial/boundary conditions $g$). See \cref{fig:idea} - white box and \cref{alg:trngd}. The objective is to learn a solver $\mathcal{F}_{\varrho}$ able, at inference, to converge to a target solution in a small ($2$ to $5$ in our experiments) number of steps. For that, an optimizer (Adam in our experiments) is used to update the $\mathcal{F}_{\varrho}$ parameters. The training algorithm makes use of the data associated with the different PDE instances by sampling PDEs in  batches and running \cref{alg:infngd} on several PDE instances. For each PDE instance, one starts from an initial parameter value $\mpar_0$ and then performs two optimization steps (see \cref{alg:trngd}): (i) one consists in solving in the ansatz parameters $\mpar$ using the neural solver using \cref{alg:infngd}, leading to $\mpar_L$; (ii) the second one is the optimization of the solver parameters $\varrho$.
We train the outputs directly to match the associated ground truth $(u_i(x_j))_{j=1}^m$ using the data loss:
\begin{equation}
    \mathcal{L}_{\text{DATA}} = \mathbb{E} _{\gamma, f, g}\left[||u_{\mpar_L}-u_{\gamma, f, g}|| \right].
\label{eq:dataloss}
\end{equation}
The expectation is computed on the distribution of the PDE parameters $(\gamma, f, g)$. The solution $u$ is entirely determined by these parameters as indicated by the notation $u_{\gamma, f, g}$. $||u_{\mpar_L}-u||$ denotes a distance between the target $(u(x_j))_{j=1}^m$ and the forecast $(u_{\mpar_L} (x_j))_{j=1}^m$ with $m$ the trajectory size \footnote{To simplify the notation, we used a fixed grid size $m$. However, this framework can be used with different grid sizes, as well as irregular grids. See Ablation in \cref{app:ablation}, \cref{tab:abirgrids}}. In practice, one samples a set of PDE instances $(\gamma_i, f_i, g_i)$ and for each instance a corresponding sample $u_i$.





\paragraph{Theoretical analysis and relation to preconditioning} 
Analyzing the behavior of the inference algorithm is challenging due to the non linear nature of the solver. We however could get some intuition using simplifying assumptions. We build on the ideas introduced in \cref{sec:motivation} for the simple case of the Poisson equation, for which an explicit analytical solution could be derived. We provide in \cref{app:proof}, a proof for a more general case and give below in \cref{thm:ngd_main_proof} our main result. This shows that the number of steps induced by $\mathcal{F}$ for the proposed algorithm is significantly less than the number of steps required by the baseline PINNs algorithm. This results is obtained under two main assumptions: (i) \( \mathcal{F} \) behaves like its linearization and (ii) the descent operator $\mathcal{F}$ used in our algorithm, allows us to reach the optimum of $\mathcal{L}_{\text{DATA}}$. \\

\begin{theorem}{\textbf{(Convergence rate in the linear case).}}
\label{thm:ngd_main_proof}
Given a linear ansatz \( u_\Theta(x) = \sum_{i=1}^N \theta_i \phi_i(x) \), assume the conditioner \( \mathcal{F} \) behaves like its linearization \(P = \text{Jacobian}( \mathcal{F}) \), meaning that \( \mathcal{F} \) can be replaced by \(P \) at any point. 
 Let A be the matrix derived from the PDE loss as \cref{eq:A} for the Poisson equation or \cref{eq:A_def} in the more general case. Denote by \( \kappa(A) \) the condition number of the matrix \( A \). The number of steps \( N'(\varepsilon) \) required to achieve an error \( \| \Theta_l - \Theta^\ast \|_2 \leq \varepsilon \) satisfies: 
\begin{equation}
    N'(\varepsilon) = O\left( \kappa(PA) \ln \left( \tfrac{1}{\varepsilon} \right) \right),
\end{equation}
Moreover, if \( \mathcal{F} \) minimizes \( \mathcal{L}_{\text{DATA}} \) this necessarily implies \( \kappa(PA) = 1 \leq \kappa(A) \). Consequently, the number of steps is effectively reduced, i.e., \( N'(\varepsilon) \ll N(\varepsilon) \) with \(N(\varepsilon) \) the number of steps of the vanilla PINNs.
\end{theorem}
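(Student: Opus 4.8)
The plan is to mirror the linear-iteration analysis of \cref{sec:motivation}, but with the preconditioner $P$ inserted in front of the gradient. Under the linearization assumption $\mathcal{F} \approx P$ and with a linear ansatz, the residual loss $\mathcal{L}_{\textnormal{PDE}}$ is quadratic in $\mpar$, so its gradient is affine, $\nabla \mathcal{L}_{\textnormal{PDE}}(\mpar) = A\mpar - b$, with $A$ the matrix of \cref{eq:A} (or \cref{eq:A_def}). Substituting this into the solver update \cref{eq:NGinfstep} gives the linear recursion
\begin{equation*}
    \mpar_{l+1} = \mpar_l - \eta P(A\mpar_l - b) = (I - \eta PA)\mpar_l + \eta Pb.
\end{equation*}
First I would identify the fixed point. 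Assuming $P$ is invertible, the fixed point is unchanged by the preconditioner: it is still the true optimum $\mpar^\ast$ satisfying $A\mpar^\ast = b$. Subtracting it yields the error recursion $\mpar_{l+1} - \mpar^\ast = (I - \eta PA)(\mpar_l - \mpar^\ast)$, hence $\mpar_l - \mpar^\ast = (I - \eta PA)^l(\mpar_0 - \mpar^\ast)$.

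Next I would run the exact same convergence bound as in \cref{sec:motivation} with $A$ replaced by the preconditioned matrix $PA$. Choosing the standard optimal step size $\eta = 2/(\lambda_{\max}(PA) + \lambda_{\min}(PA))$ makes the spectral radius of $I - \eta PA$ equal to $(\kappa(PA) - 1)/(\kappa(PA) + 1) = 1 - c/\kappa(PA)$ for a constant $c$, so that $\|\mpar_l - \mpar^\ast\|_2 \leq (1 - c/\kappa(PA))^l \|\mpar_0 - \mpar^\ast\|_2$. Solving $(1 - c/\kappa(PA))^l \leq \varepsilon$ for $l$ then gives $N'(\varepsilon) = O(\kappa(PA)\ln(1/\varepsilon))$, identical in form to \cref{eq:nsteps} up to the substitution $A \mapsto PA$. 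This is precisely the preconditioning statement: $P$ reshapes the effective operator from $A$ to $PA$.

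For the second claim I would argue that driving $\mathcal{L}_{\text{DATA}}$ to its minimum pins $PA$ down to a scalar multiple of the identity. Since in the linear case the true solution is reconstructed by $\mpar^\ast$, minimizing $\mathcal{L}_{\text{DATA}} = \mathbb{E}[\|u_{\mpar_L} - u\|]$ to zero forces $\mpar_L = \mpar^\ast$ on the sampled instances, independently of the initialization $\mpar_0$. Plugging this into the error recursion $\mpar_L - \mpar^\ast = (I - \eta PA)^L(\mpar_0 - \mpar^\ast)$ and requiring it to vanish for arbitrary $\mpar_0$ forces $(I - \eta PA)^L = 0$; since $P$ and $A$ are symmetric positive (semi)definite, $PA$ is diagonalizable with real eigenvalues, so nilpotency of $I-\eta PA$ means every eigenvalue of $PA$ equals $1/\eta$, giving $PA = \eta^{-1}I$ and $\kappa(PA) = 1$. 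As any condition number satisfies $1 = \kappa(PA) \leq \kappa(A)$, and \cref{sec:motivation} shows $\kappa(A)$ can be as large as $K^4$, we conclude $N'(\varepsilon) = O(\ln(1/\varepsilon)) \ll O(\kappa(A)\ln(1/\varepsilon)) = N(\varepsilon)$.

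The main obstacle I anticipate is the rigor of this second part, i.e.\ the jump from ``\( \mathcal{F} \) minimizes \( \mathcal{L}_{\text{DATA}} \)'' to ``\( \kappa(PA) = 1 \).'' It relies on (i) the true solution actually lying in the span of the basis so the data loss can reach zero, (ii) a single linearization $P$ being consistent across the sampled instances (or, if $P$ depends on $(\gamma,f,g)$, the learned map being able to realize $A^{-1}$ per instance), and (iii) the diagonalizability of $PA$ needed to pass from equal eigenvalues to $PA \propto I$. The first part, by contrast, is a routine transcription of the motivation's geometric-decay argument under the substitution $A \mapsto PA$.
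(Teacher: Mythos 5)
Your proposal is correct and follows essentially the same route as the paper's proof: the linearized update gives the error recursion $\Theta_{l} - \Theta^\ast = (I - \eta PA)^{l}(\Theta_0 - \Theta^\ast)$, the convergence bound is the motivation-section argument with $A$ replaced by $PA$, and zero data loss for generic initializations forces $(I - \eta PA)^{L} = 0$, hence all eigenvalues of $PA$ equal $1/\eta$ and $\kappa(PA) = 1$. The only (cosmetic) differences are that the paper phrases the genericity of $\Theta_0$ via the almost-sure full rank of the concatenated initialization matrix $\Xi_0 - \Xi^\ast$, uses $\eta = c/\lambda_{\max}$ rather than the optimal step size, and stops at "all eigenvalues equal" without invoking diagonalizability of $PA$; the caveats you flag at the end coincide with those acknowledged in the paper's own discussion.
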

\begin{proof}
We sketch the main insights here and refer to \cref{app:proof} for the proof and a detailed analysis.
    \begin{itemize}
    \item Using a linearization of the neural solver, it can be shown that the solver performs as a pre-conditioner on the linear system.
    \item Assuming that solution \textbf{$u_L$} provided by the solver reaches the optimum \textbf{$u^*$}, and that the training set is such that the learned parameter $\mpar$ vectors span the whole parameter space of the model, then the convergence of the solver is guaranteed at an optimal rate.
    \item In practice, and as shown in the experiments (\cref{sec:expe}), the convergence rate is significantly improved w.r.t. the reference baseline gradient algorithm.
\end{itemize}
\end{proof}

\vspace{-1.cm}
\section{Experiments}
\label{sec:expe}
We present the datasets used in the experiments in \cref{subsec:datasets}, a comparison with selected baselines in \cref{subsec:baselines}, and a test-time comparison with different optimizers demonstrating the remarkable effectiveness of the proposed method in \cref{subsec:testimeopt}. Finally, we make a comparison of the training and inference time in \cref{subsec:comptime}. Experimental details and additional experiments can be found in the appendices: ablations are in \cref{app:ablation} and additional results and visualization are in \cref{app:losslandscape}, \cref{ssec:app_vismethod}, and \cref{app:visu}. 

\subsection{Datasets}
\label{subsec:datasets}

\begin{wraptable}{r}{0pt}
    \centering
    \begin{tabular}{c|c|c}
         Dataset & Parameters & Distribution \\
         \toprule
         \multirow{3}{*}{Helmholtz} & $\omega$ & $\mathcal{U}[0.5, 50]$\\
         & $u_0$ & $\mathcal{N}(0, 1)$\\
         & $v_0$ & $\mathcal{N}(0, 1)$\\
         \midrule
         \multirow{3}{*}{Poisson} & $A_i$ & $\mathcal{U}[-100, 100]$\\
         & $u_0$ & $\mathcal{N}(0, 1)$\\ 
         & $v_0$ & $\mathcal{N}(0, 1)$\\
        \midrule
         \multirow{2}{*}{NLRD} & $\nu$ & $\mathcal{U}[1, 5]$ \\
         & $\rho$ & $\mathcal{U}[-5, 5]$\\
         \midrule
         \multirow{2}{*}{Darcy} & \multirow{2}{*}{$a(x)$} & $\psi_{\#}\mathcal{N}(0, (-\Delta + 9I)^{-2})$ \\
         & & with $\psi = 12*\mathds{1}_{\mathbb{R}_+} + 3*\mathds{1}_{\mathbb{R}_+}$\\
         \midrule
         \multirow{6}{*}{Heat} & $\nu$ & $\mathcal{U}[2\times 10^{-3}, 2\times 10^{-2}]$ \\
         & $J_{max}$ & \{1, 2, 3, 4, 5\}\\
         & $A$ & $\mathcal{U}[0.5, -0.5]$\\
         & $K_x$, $K_y$ & $\{1, 2, 3\}$\\
         & $\phi$ & $\mathcal{U}[0, 2\pi]$ \\
         \bottomrule
    \end{tabular}
    \caption{Parameters changed between each trajectory in the considered datasets. }
    \label{tab:parameters_datasets}
\vspace{-0.5cm}
\end{wraptable}
We consider several representative parametric equations for our evaluation. More details about the data generation are presented in \cref{app:appdataset}. Our objective is to learn a neural solver able to solve quickly and accurately a new instance of a PDE, given its parametric form, and the values of the parameters $\gamma$, forcing terms $f$ and initial/boundary conditions $g$, \textit{i.e.} $(\gamma, f, g) \mapsto u$. Solving is performed with a few iterations of the neural solver (\cref{alg:infngd}).
For that, one trains the neural solver on a sample of the PDE parameter instances, see \cref{tab:parameters_datasets} for the parameter distributions used for each parametric PDE. 
\textbf{Evaluation is performed on unseen sets of parameters within the same PDE family.}
\textbf{Helmholtz}: We generate a dataset following the $1d$ static Helmholtz equation $u''(x) + \omega^2u(x) = 0$ with boundary conditions $u(0) = u_0 \text{ and } u'(0) = v_0$. We generate $1,024$ trajectories with varying $\omega$, $u_0$, and $v_0$ with a spatial resolution of $256$. 
\textbf{Poisson}: We generate a dataset following the $1d$ static Poisson equation with forcing term: $- u''(x) = f(x)$ with $u(0) = u_0$ and $u'(0) = v_0$. The forcing term $f$ is a periodic function, $f(x) = \frac{\pi}{K}\sum_{i=1}^K a_ii^{2r}\sin(\pi x)$, with $K=16$ and $r=-0.5$. We generate $1,000$ trajectories with varying $u_0, v_0$, and $f$ (through changing $a_i$) with a spatial resolution of $64$. 
\textbf{Reaction-Diffusion}: In \cite{krishnapriyan_characterizing, toloubidokhti2024dats}, the authors propose a non-linear reaction-diffusion (\textit{NLRD}). This PDE has been shown to be a failure case for PINNs \citep{krishnapriyan_characterizing}. We generate $1,000$ trajectories by varying the parameters of the PDE: $\nu$ and $\rho$ (see \cref{tab:parameters_datasets}). Spatial resolution is $256$ and temporal resolution is $100$. The PDE is solved on $[0, 1]^2$.  
\textbf{Darcy Flow}: The $2d$ Darcy Flow dataset is taken from \citep{li2020fno} and is commonly used in the operator learning literature \citep{li2023pino, goswami2022PIDON}. For this dataset, the forcing term $f$ is kept constant $f=1$, and $a(x)$ is a piece-wise constant diffusion coefficient taken from \citep{li2020fno}. We kept $1,000$ trajectories (on the $5,000$ available) with a spatial resolution is $64\times 64$.
\textbf{Heat}: The $2d+t$ Heat equation is simulated as proposed in \citep{maepde24}. For this dataset, the parameter $\nu$ is sampled from $\mathcal{U}[2\times 10^{-3}, 2\times 10^{-2}]$ and initial conditions are a combination of sine functions with a varying number of terms, amplitude, and phase. 
\color{black}
A summary of the datasets and the varying parameters for each PDE are presented in \cref{tab:parameters_datasets} and more details on the dataset are provided in \cref{app:appdataset}.
Experiments have been conducted on NVIDIA TITAN V ($12$ Go) for $1d$ datasets to NVIDIA RTX A6000 GPU with 49Go for $1d$ + time or $2d$ datasets. For all datasets, $800$ PDEs are considered during training and $200$ for testing. All metrics reported are evaluated on test samples (\textit{i.e.}\textbf{ PDEs not seen during training}. Coefficients as well as initial and/or boundary conditions can vary from training).

\subsection{Comparison with baselines}
\label{subsec:baselines}
We performed comparisons with several baselines including fully data-driven supervised approaches trained from a data-loss only, unsupervised methods relying only on a PDE loss, and hybrid techniques trained from PDE + DATA losses. Network size and training details are described in \cref{app:impdetails}. In this experiment, we considered training the models using the training sets (physical losses or MSE when possible) unless stated otherwise.




\textbf{Fully supervised}
We train a standard MLP to learn the mapping $(\gamma, f, g)\mapsto \mpar$, using as loss function $\mathcal{L}_{\text{DATA}} = \mathbb{E} _{\gamma, f, g,u}\left[||u_{\mpar_L}-u_{\gamma, f, g}|| \right]$
 with $u_{\mpar}(x) = \sum_{i=0}^N \theta_i\psi_i(x)$, the $\psi_i(.)$ being fixed B-Spline basis functions (see \cref{app:impdetails}). We denote this baseline as \textit{MLP+basis}. 
\textbf{Unsupervised}
We compare our approach with unsupervised physics-informed models \citep{PINNs_Raissi19}. While the initial version of PINNs solves only one PDE instance at a time and requires retraining for each new instance, we developed here a parametric version of PINNs (\textit{PPINNs}) where the PDE parameters are fed to the network (similarly to \citep{23_PPINNs}). Finally, we used \citep{cho2024parameterizedphysicsinformedneuralnetworks}'s (\textit{P2INNs}) method as a physics-informed baseline specifically designed for parametric PDEs. In addition to PINNs-methods, we also compare our solver to the Physic-informed DeepONet (\textit{PO-DeepONet} for Physics-Only DeepONet) \citep{wang2021PIDON}, which is designed to learn an operator for \textit{function-to-function} mappings from physical losses and handles parametric PDEs. The mapping learned for the two unsupervised baselines is $(x, \gamma, f, g) \mapsto u_{\gamma, f, g}(x)$. In order to provide a fair comparison with our optimization method, we fine-tuned the unsupervised baselines for each specific PDE instance for a few steps ($10$ or $20$).  
\textbf{Comparison to preconditioning} We compare our approach with vanilla PINNs \citep{PINNs_Raissi19}, \textit{i.e.} by fitting one PINN per PDE in the test set and averaging the final errors. We optimize the PDE losses using L-BFGS \citep{Liu98LBFGS} and refer to this baseline as \textit{PINNs+L-BFGS}. As discussed in \citep{RathoreICML2024}, L-BFGS can be considered as a nonlinear preconditioning method for Physics-Informed methods and fastens convergence.  Finally, we use the training strategy proposed by \citep{RathoreICML2024} \textit{i.e.} trained PINNs using successive optimizer (Adam + L-BFGS). This baseline is denoted as \textit{PINNs-multi-opt}. For these baselines, one model is trained and evaluated for each PDE in the \textbf{test} set. We report the reader to \cref{app:impdetails} for more details on the training procedure. 
\textbf{Hybrid}
Finally, we compare our proposed method with neural operators, \textit{i.e.}, models trained to learn mappings $(x, \gamma, f, g) \mapsto u_{\gamma, f, g}(x)$ using a combination of physical and data loss: $\mathcal{L}_{\textnormal{DATA}} + \mathcal{L}_{\textnormal{PDE}}$. We use as baselines Physics-Informed Neural Operator (\textit{PINO}) \citep{li2023pino}  and Physics-Informed DeepONet (\textit{PI-DeepONet}) \citep{goswami2022PIDON}. As already indicated, for a fair comparison, the \textit{Unsupervised} and \textit{Hybrid} baselines are fine-tuned on each specific PDE instance for a few steps ($10$ on all datasets except for Heat for which $20$ steps are made). 
\textbf{Ours}
 We represent the solution $u_{\mpar}$ with a linear combination of B-Spline functions for $\Psi$ \citep{PiegTill96_NURBS}.  This was motivated by the nature of B-Splines which allows to capture local phenomena. However, other bases could be used such as Fourier, Wavelet or Chebychev Polynomials. The neural solver $\mathcal{F}_{\varrho}$ is composed of Fourier Layers (FNO) \citep{li2020fno} that allow us to capture the range of frequencies present in the phenomenon. We refer the reader to \cref{app:impdetails} for more details about the construction of the B-Spline basis and the training hyper-parameters.

\begin{table}[htbp]
  \centering
  \begin{tabular}{lcccccc}
    & & \multicolumn{2}{c}{1d} & \multicolumn{1}{c}{1d+time} & 2d  & 2d+time \\ 
    \cmidrule(lr){3-4} \cmidrule(lr){5-5} \cmidrule(lr){6-6} \cmidrule(lr){7-7} 
     & Baseline & Helmholtz & Poisson & NLRD & Darcy-Flow & Heat\\
    \toprule
     Supervised & \textit{MLP + basis} &  \underline{4.66e-2} & 1.50e-1 & \textbf{2.85e-4} & \underline{3.56e-2} & 6.00e-1\\
    \midrule
    \multirow{5}{*}{Unsupervised} & \textit{PINNs+L-BFGS} & 9.86e-1 & 8.83e-1 & 6.13e-1 & 9.99e-1 & 9.56e-1\\
    & \textit{PINNS-multi-opt} & 8.47e-1 & 1.18e-1 & 7.57e-1 & 8.38e-1 & 6.10e-1\\
    & \textit{PPINNs} &  
    9.89e-1 & 4.30e-2 & 3.94e-1 & 8.47e-1 &  1.27e-1\\
    & \textit{P2INNs} & 9.90e-1 & 1.50e-1 & 5.69e-1 & 8,38e-1 & 1.78e-1 \color{black}\\
    & \textit{PO-DeepONet} & 9.83e-1 & 1.43e-1 & 4.10e-1 & 8.33e-1 & 4.43e-1\\
     \midrule
    \multirow{2}{*}{Hybrid} & \textit{PI-DeepONet} & 9.79e-1 & 1.20e-1 & 7.90e-2 & 2.76e-1 & 9.18e-1\\
     & \textit{PINO} & 9.99e-1 & \underline{2.80e-3} & 4.21e-4 & 1.01e-1 & \underline{9.09e-3}\\
    \midrule
    {Neural Solver} & \textit{Ours} & \textbf{2.41e-2} & \textbf{5.56e-5} & \underline{2.91e-4} & \textbf{1.87e-2} & \textbf{2.31e-3}\\
    \bottomrule
  \end{tabular}
  \caption{Results of trained models - metrics in Relative MSE on the test set. Best performances are highlighted in \textbf{bold}, and second best are \underline{underlined}. }
  \label{tab:test-loss}
\end{table}
\textbf{Results:} \Cref{tab:test-loss} presents the comparison with the baselines. 
We recall that the evaluation set is composed of several PDE instances sampled from unseen PDE parameters $(\gamma, f, g)$.
The proposed method is ranked first or second on all the evaluations. The most comparable baselines are the unsupervised methods, since at inference they leverage  only the PDE residual loss, as our method does. Therefore our method should be primarily compared to these baselines. Supervised and hybrid methods both incorporate data loss and make different assumptions while solving a different optimization problem.

\Cref{tab:test-loss} clearly illustrates that unsupervised Physics-informed baselines all suffer from ill-conditioning and do not capture the dynamics. Compared to these baselines, the proposed method improves at least by one order of magnitude in all cases.
PINNs baseline performs poorly on these datasets because of the ill-conditioning nature of the PDE, requiring numerous optimization steps to achieve accurate solutions (\cref{section-implementation-details}).  This is observed on PINNs models for parametric PDEs (PPINNs and P2INNs) as well as on PINNs fitted on one equation only (PINNs+L-BFGS and PINNs-multi-opt). We observe that our neural solver has better convergence properties than other Physics-Informed methods. As will be seen later it also converges much faster.

The supervised baseline performs well on all the PDEs except \textit{Poisson} and \textit{Heat}. The data loss used for training this model is the mean square error which is well-behaved and does not suffer from optimization problems as the PDE loss does. We note that our method reaches similar or better performances on every datasets, while relying only on physical information at inference (\cref{alg:infngd}) and solving a more complex optimization problem.

The hybrid approaches, do not perform well despite taking benefits from the PDE+DATA loss and from adaptation steps at test time. Again, the proposed method is often one order of magnitude better than the hybrids except on NLRD, where it has comparable performances. This shows that the combination of physics and data losses is also hard to optimize, and suffers from ill-conditioning. 

\color{black}
 


\color{black}

\subsection{Optimization for solving new equations}
\label{subsec:testimeopt}
 The main motivation for our learned PDE solver is to accelerate the convergence to a solution, w.r.t. predefined solvers, for a new equation. In order to assess this property, we compare the convergence speed at test time inference with classical solvers, PINNs, and pre-trained PINO as detailed below. Results are presented in \cref{fig:test-time-opt} for the  \textit{Poisson} equation with performance averaged on $20$ new instances of the \textit{Poisson} equation. This experiment is also performed on the other datasets in \cref{app:visu}. 

\textbf{Baseline optimizers}
As for the classical optimizer baselines, we used SGD, Adam \citep{KingBa15_Adam}, and L-BFGS \citep{krishnapriyan_characterizing}. These optimizers are used to learn the coefficient of the B-Spline basis expansion in the model $u_{\mpar}(x) = \sum_{i=0}^N \theta_i\psi_i(x)$. This provides a direct comparison to our iterative neural solver.
\textbf{PINNs} - We also compare to the standard PINNs \citep{PINNs_Raissi19}, \textit{i.e.} by fitting one Neural Network (NN) per equation. Note that this requires full training from scratch for each new equation instance and this is considerably more computationally demanding than solving directly the parametric setting. This baseline is similar to the Adam optimizer mentioned above, except that the ansatz for this experience is a multilayer perceptron instead of a linear combination of a B-Spline basis. 
\textbf{Hybrid pre-training strategies} -
Finally we compare against the hybrid \textit{PINO} pre-trained on a set of several parametric PDE instances and then fine-tuned on a new PDE instance using only the PDE loss associated to this instance. 
\textbf{Ours} - We train our model as explained in \cref{alg:trngd} and show here the optimization process at \textbf{test time}. In order to perform its optimization, our model leverages the gradient of the physical loss and the PDE parameters (coefficients, initial/boundary conditions). In this experiment, we use $L=5$ steps for a better visualization (whereas, we used $L=2$ in \cref{tab:test-loss}). 

\begin{wrapfigure}{r}{0.5\textwidth}
        \includegraphics[width=\textwidth]{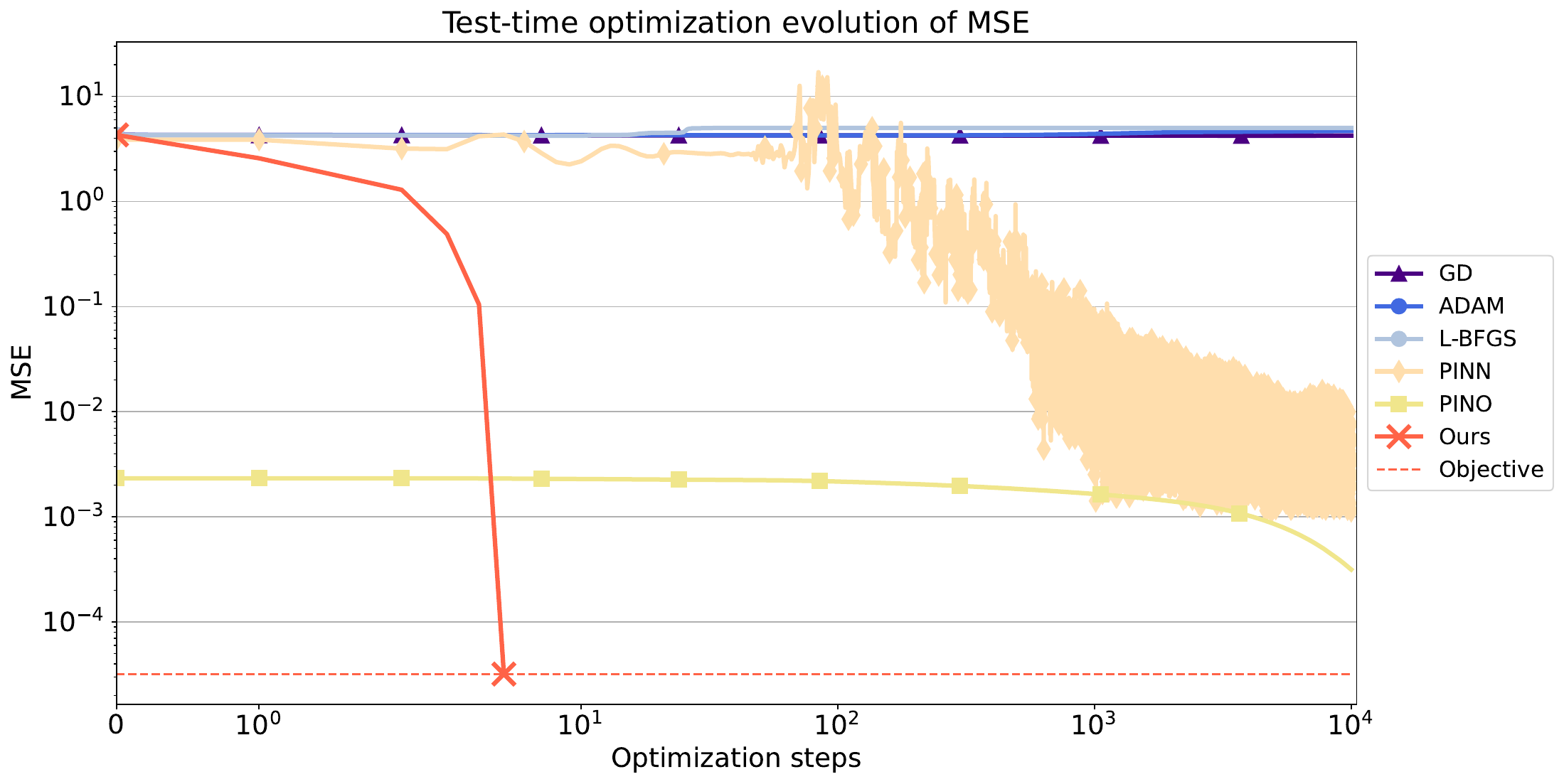}
    \caption{Test-time optimization based on the physical residual loss $\mathcal{L}_{\textnormal{PDE}}$ for new PDE on \textit{Poisson}.}
    \label{fig:test-time-opt}
\end{wrapfigure}
\textbf{Result}: \cref{fig:test-time-opt} compares the number of optimization steps required for the different methods. Our neural solver converges very fast in only a few steps (5 here) to a good approximation of the solution, while all the other methods require thousands of iterations - we stopped here at $10,000$ steps. The classical optimizers (SGD, Adam, L-BFGS) do not converge for a new equation. The baseline PINNs trained here from scratch on each new equation show an erratic convergence behavior. Pretrained PINO behaves better than the other baselines but still did not converged after $10,000$ steps. This clearly demonstrates the potential of our learned solver to deal with physical losses w.r.t. alternative pre-defined solvers. 

\subsection{Computational time}
\label{subsec:comptime}

An important aspect of solving PDEs is the computational time required for each solution. Methods should find a trade-off between achieving high performance and maintaining reasonable computational costs. In \cref{app:ctime}, we provide the training (\cref{tab:traintime}) and inference (\cref{tab:testtime}) times for our method compared to various baselines. Our results show that while our method has comparable training and inference times to other approaches, it demonstrates substantially better precision (\cref{tab:test-loss}).




\section{Conclusion}
We have presented a PDE solver learned from data that allows fast test-time optimization of physical losses. Our method succeeds to considerably accelerate the optimization process for the complex problem of minimizing physical losses and is several orders of magnitude faster than classical  hand-defined optimization methods such as Adam or L-BFGS.
\textbf{Limitations and Future Work}
While efficient, the proposed method can be further improved. First, training our iterative algorithms requires more memory than standard machine learning models due to the complexity of backpropagation through iterations, which becomes challenging in higher-dimensional bases. Second, we have focused on solution approximations expressed as a linear expansion in predefined bases. More expressive representations, such as neural networks, could be explored; however, our preliminary experiments indicate increased ill-conditioning due to the compositional nature of neural networks. More sophisticated training schemes could enhance the optimization process. Future work will investigate these directions to improve scalability and broaden the applicability of the proposed method.
\textbf{Reproducibility Statement}
Hyper-parameters, baselines configurations and training details are detailed in \cref{app:impdetails,tab:pinnslbfgshp,tab:archbase,alg:trngd,alg:infngd}. The creation of the datasets is explained in \cref{app:appdataset,tab:parameters_datasets,tab:parameters_datasets_app}. Finally, we provide a theoretical analysis of the model, under an ideal scenario in \cref{app:proof}. Code is available at \url{https://github.com/2ailesB/neural-parametric-solver}. 

\subsubsection*{Ethics Statement}
Solving PDE is of crucial interest in many applications of science and engineering. While we do not directly target such real-world applications in this paper, one should acknowledge that solvers can be used in various ranges of scenarios including weather, climate, medical, aerodynamics, industry, and military applications. 


\subsubsection*{Acknowledgments}
We acknowledge the financial support provided by DL4CLIM (ANR-19-CHIA-0018-01), DEEPNUM (ANR-21-CE23-0017-02), PHLUSIM (ANR-23-CE23-0025-02), and PEPR Sharp (ANR-23-PEIA-0008”, “ANR”, “FRANCE 2030”). This project was provided with computer and storage resources by GENCI at IDRIS thanks to the grant 2025-AD011014938R1 on the supercomputer Jean Zay's V100/A100/H100 partitions.

\clearpage

\bibliography{iclr2025_conference}
\bibliographystyle{iclr2025_conference}
\clearpage

\appendix
\section{Detailed related Work}
\label{app:rw}

\paragraph{PDE solvers:} Many tools for numerically solving PDEs have been developed for years. The standard methods for PDE include Finite Differences (FD), Finite Volume (FV), Finite Element Method (FEM), spectral and multigrid methods, and many others \citep{s2012numericalAnPDE, liu2009meshfree}. While these methods are widely used, they often suffer from a high computational cost for complex problems or high-precision simulations. To address these challenges, integrating deep learning (DL) into solvers has emerged as a promising approach. Current solutions include incorporating correction terms into mathematical solvers to reduce numerical errors \citep{um2021solverintheloop}. Some work such as \cite{hsieh2019learningNPDESconv, li23e-preconcgpde, rudikov2024fgcno, kopanivcakova2023enhancing} build a method to directly enhance the convergence of numerical solvers through preconditioner learning. As an example, \cite{rudikov2024fgcno, li23e-preconcgpde} use a neural operator to approximate conditioner for the flexible conjugate gradient method or \cite{hsieh2019learningNPDESconv} for the Jacobi method. Another example of preconditioner learning lies in \cite{li23e-preconcgpde}, where the author uses GNN to assess symmetry and positive definiteness. 

\paragraph{Unsupervised training:} Physics-Informed Neural Networks (PINNs) \citep{PINNs_Raissi19} have been a pioneering work in the development of DL method for physics. In these models, the solution is a neural network that is optimized using the residual loss of the PDE being solved. 
However, this method suffers from several drawbacks. First, as formulated in \cite{PINNs_Raissi19}, PINNs can solve one instance of an equation at a time. Any small change in the parameters of the PDE involves a full retraining of the network. Efforts such as \cite{22PINNSparammagneotstatic, 23_PPINNs, cho2024parameterizedphysicsinformedneuralnetworks} have attempted to address this limitation by introducing parametric versions of PINNs capable of handling parametric equations, while \cite{huang2022metaautodecoder, qin2022metapde} explores meta-learning approaches. Other approaches to improve PINNs generalization include using neural operators \citep{wang2021PIDON}, or hyper-network \citep{belbute-peres2021hyperpinn}. 
Moreover, PINNs have shown convergence difficulties: \cite{krishnapriyan_characterizing} show that PINNs' losses have complex optimization landscapes, complicating training despite adequate neural network expressiveness. Approaches like those detailed in \cite{WANG2022_WhenAndWhyPINNsFailed} adopt a Neural Tangent Kernel (NTK) perspective to identify reasons for failure and suggest using adaptive weights during training to enhance performance. Additionally, studies such as \cite{deryckManu2023PreconditioningPINNs} demonstrate that PINNs suffer from ill-conditioned losses, resulting in slow convergence of gradient descent algorithms. Recently, \citep{RathoreICML2024} shows how training strategies can improve the convergence of Physics-Informed Neural Networks and show that specific optimizers such as L-BFGS act as conditioners on the physical losses. 

\paragraph{Supervised training:} In contrast to the unsupervised training of Physics-informed Neural Networks, purely data-driven models have demonstrated remarkable capabilities for PDE simulation and forecasting. In most of the existing literature, the entire solver is replaced by a DL architecture and focuses on directly computing the solution from a given input data. A widely studied setting is operator learning which learns mappings between function spaces \citep{li2020fno, kovachki2023neuralop, Lu_2021_DON}. This method is very efficient, with the downside of relying on quite large quantities of data for training in order to ensure adequate generalization. Additionally, the neural network does not have access to the PDE in itself, only indirectly through the data.
To ensure physical constraints in purely data-driven training, hybrid models have been proposed. The latter relies on both the available physical knowledge and some data. Some examples include the Aphinity model \citep{Yin_2021_aphinity} (where the authors assume partial knowledge of the physics and learned the remaining dynamics from data), Physics-informed Deep Operator Networks (PIDON) \citep{wang2021PIDON, goswami2022PIDON}, Physics-informed Neural Operator (PINO) \citep{li2023pino} (DeepONet architecture \citep{Lu_2021_DON} or Neural Operator models \citep{kovachki2023neuralop, li2020fno} respectively with a combination of data and physical losses). 

\paragraph{Learning to solve:}
Improving the learning scheme and optimizers through data-driven training has been studied since \cite{li2016learning2o} and \cite{andrychowicz2016l2lbygdbygd}. These works propose to learn the optimizer of neural networks, which are classically optimized through gradient-based algorithms such as Adam. They focus on improving training strategies for neural networks, which do not suffer from the optimization issues and ill-conditioning properties of physics-informed losses. We refer the reader to the survey of \cite{chen2021l2o} for a complete overview. The closer work to ours is the very recent work of \cite{bihlo24PasCoolJMLR} in which the author assesses the capabilities of learned optimizers for physics-informed neural networks. The main difference with our work relies on the problem setting. \cite{bihlo24PasCoolJMLR} considers learning an optimizer on a single equation, and for different neural networks initialization, while we focus on efficiently solving several instances of parametric PDE with varying PDE parameters $\gamma, f, g$.




\section{Theoretical Analysis of our Method and PINNs}
\label{app:proof}

\paragraph{Setting.} We consider the following linear PDE:

\begin{equation}
\label{eq:PDEproof}
\begin{aligned}
     \mathcal{D} u(x) &= f(x), \quad x \in \Omega,\\
     u(x) &= g(x), \quad x \in \partial \Omega,
\end{aligned}
\end{equation}

where \( \Omega \subset \mathbb{R}^d \) is an open bounded domain, \( \mathcal{D} \) is a linear differential operator, \( f(x) \) is a given function in \( \Omega \), and \( g(x) \) is a given function on the boundary \( \partial \Omega \).

\subsection{Theoretical Analysis of PINNs\label{sec:theoretical_analysis_our_method}}

Our aim is to find an approximate solution \( u_\Theta(x) \), parameterized by \( \Theta \in \mathbb{R}^N \), $\Theta = \{\theta_i\}_{i=1}^N$ that minimizes the loss function:

\begin{equation}
\label{eq:app_loss}
\mathcal{L}_{\textnormal{PDE}}(\Theta) = \mathcal{L}_{\textnormal{Res}}(\Theta) + \lambda \mathcal{L}_{\textnormal{BC}}(\Theta),
\end{equation}

where:

\[
\mathcal{L}_{\textnormal{Res}}(\Theta) = \frac{1}{2} \int_{\Omega} \left( \mathcal{D} u_\Theta(x) - f(x) \right)^2 dx, \quad \mathcal{L}_{\textnormal{BC}}(\Theta) = \frac{1}{2} \int_{\partial \Omega} \left( u_\Theta(x) - g(x) \right)^2 dx,
\]

and \( \lambda > 0 \) is a regularization parameter balancing the PDE residual and boundary conditions.

We perform gradient descent updates with step size \( \eta \). At step $k>0$, updates write as:

\[
\Theta_{k+1} = \Theta_k - \eta \nabla_\Theta \mathcal{L}_{\textnormal{PDE}}(\Theta_k).
\]

We establish the following theorem regarding the convergence rate of gradient descent.
\vspace{.4cm}

\begin{theorem}[\textbf{Convergence rate of PINNs}]
\label{thm:convergence_rate}
Given a linear ansatz \( u_\Theta(x) = \sum_{i=1}^N \theta_i \phi_i(x) \), the number of steps \( N(\varepsilon) \) required to achieve an error \( \| \Theta_k - \Theta^\ast \|_2 \leq \varepsilon \) satisfies:

\begin{equation}
\label{eq:nstepsproof}
N(\varepsilon) = O\left( \kappa(A) \ln \left( \tfrac{1}{\varepsilon} \right) \right),
\end{equation}

where \( \kappa(A) \) is the condition number of the matrix \( A \in \mathbb{R}^{n \times n} \) defined by:
\vspace{-.1cm}
\begin{equation}
\label{eq:A_def}
A_{i,j} = \int_{\Omega} \left( \mathcal{D} \phi_i(x) \right) \left( \mathcal{D} \phi_j(x) \right) dx + \lambda \int_{\partial \Omega} \phi_i(x) \phi_j(x) dx.
\end{equation}

\end{theorem}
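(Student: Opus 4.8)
The plan is to exploit the fact that, because the ansatz $u_\Theta$ is linear in $\Theta$ and $\mathcal{D}$ is a linear operator, the loss $\mathcal{L}_{\textnormal{PDE}}$ is an \emph{exact} quadratic in $\Theta$; gradient descent on a quadratic then obeys the classical condition-number convergence bound. First I would substitute $u_\Theta(x) = \sum_i \theta_i\phi_i(x)$ into $\mathcal{L}_{\textnormal{Res}}$ and $\mathcal{L}_{\textnormal{BC}}$ and differentiate term by term. Using linearity, $\mathcal{D}u_\Theta = \sum_i \theta_i \mathcal{D}\phi_i$, so a direct computation gives
\begin{equation*}
\nabla_\Theta\mathcal{L}_{\textnormal{PDE}}(\Theta) = A\Theta - b,
\end{equation*}
where $A$ is precisely the matrix in \cref{eq:A_def} and $b_j = \int_\Omega f(x)\,\mathcal{D}\phi_j(x)\,dx + \lambda\int_{\partial\Omega} g(x)\,\phi_j(x)\,dx$. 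This identifies $A$ as the (constant) Hessian of the loss; its Gram-matrix form shows it is symmetric positive semidefinite, and assuming the basis is chosen so that the residual-plus-boundary bilinear form is nondegenerate, $A$ is positive definite and the loss has the unique minimizer $\Theta^\ast = A^{-1}b$.

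Second I would analyze the gradient-descent recursion around $\Theta^\ast$. Writing the update as $\Theta_{k+1} = (I - \eta A)\Theta_k + \eta b$ and subtracting the fixed-point relation $A\Theta^\ast = b$ yields the error recursion $\Theta_{k+1} - \Theta^\ast = (I - \eta A)(\Theta_k - \Theta^\ast)$, hence $\Theta_k - \Theta^\ast = (I - \eta A)^k(\Theta_0 - \Theta^\ast)$. Diagonalizing the symmetric matrix $A$ and choosing $\eta$ in the stable range — for instance $\eta = 1/\lambda_{\max}(A)$, which gives the form $(1 - c/\kappa(A))$ appearing in \cref{sec:motivation}, or the optimal $\eta = 2/(\lambda_{\min}(A) + \lambda_{\max}(A))$ — the spectral radius of $I - \eta A$ equals $\rho = (\kappa(A)-1)/(\kappa(A)+1) < 1$, producing the contraction
\begin{equation*}
\|\Theta_k - \Theta^\ast\|_2 \leq \rho^k\,\|\Theta_0 - \Theta^\ast\|_2.
\end{equation*}

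Finally I would convert this geometric rate into a step count. Imposing $\rho^k\|\Theta_0 - \Theta^\ast\|_2 \leq \varepsilon$ and solving for $k$ gives $N(\varepsilon) \leq \ln(\|\Theta_0 - \Theta^\ast\|_2/\varepsilon)/(-\ln\rho)$, and since $-\ln\rho$ behaves like $1/\kappa(A)$ for large $\kappa(A)$ (because $-\ln(1 - c/\kappa) \to c/\kappa$), this collapses to $N(\varepsilon) = O(\kappa(A)\ln(1/\varepsilon))$, as claimed. The Poisson computation of \cref{sec:motivation} is then recovered as the special case where the $\phi_i$ are Fourier features and $A$ reduces to the matrix of \cref{eq:A}. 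The main obstacle I anticipate is not the convergence estimate, which is standard for strongly convex quadratics, but justifying the nondegeneracy of $A$: one must show the chosen basis makes the combined residual-and-boundary bilinear form positive definite, so that $\Theta^\ast$ is unique and $\kappa(A)$ finite. Without this, the contraction holds only on the orthogonal complement of $\ker A$, and the statement must be interpreted modulo that subspace.
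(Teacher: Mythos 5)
Your proposal is correct and follows essentially the same route as the paper's proof: derive $\nabla_\Theta\mathcal{L}_{\textnormal{PDE}} = A\Theta - b$ from linearity of the ansatz and of $\mathcal{D}$, obtain the error recursion $\Theta_k - \Theta^\ast = (I-\eta A)^k(\Theta_0-\Theta^\ast)$, bound the spectral radius by $1 - c/\kappa(A)$ for a suitable step size, and convert the geometric contraction into the $O(\kappa(A)\ln(1/\varepsilon))$ step count via $-\ln(1-x)\geq x$. Your closing remark about needing nondegeneracy of $A$ is a point the paper itself elides (it labels $A$ positive semi-definite but then treats it as positive definite), so flagging that assumption explicitly is a small improvement rather than a deviation.
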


\begin{proof}
Since \( u_\Theta(x) = \sum_{i=1}^N \theta_i \phi_i(x) \), we have:

\[
\frac{\partial u_\Theta(x)}{\partial \theta_i} = \phi_i(x), \quad \frac{\partial (\mathcal{D} u_\Theta(x))}{\partial \theta_i} = \mathcal{D} \phi_i(x).
\]

The gradient of the residual loss is:

\[
\nabla_\Theta \mathcal{L}_{\textnormal{Res}}(\Theta) = \int_{\Omega} \left( \mathcal{D} u_\Theta(x) - f(x) \right) \mathcal{D} \phi(x) \, dx,
\]

where \( \mathcal{D} \phi(x) \) is the vector with components \( \mathcal{D} \phi_i(x) \). Similarly, the gradient of the boundary loss is:

\[
\nabla_\Theta \mathcal{L}_{\textnormal{BC}}(\Theta) = \int_{\partial \Omega} \left( u_\Theta(x) - g(x) \right) \phi(x) \, dx,
\]

where \( \phi(x) \) is the vector of basis functions evaluated at \( x \). Therefore, the total gradient is:

\[
\nabla_\Theta \mathcal{L}_{\textnormal{PDE}}(\Theta) = A \Theta - b,
\]

where the (positive, semi-definite) matrix \( A \) and vector \( b \) are defined as:

\begin{equation}
\label{eq:A_and_b}
\begin{aligned}
A_{i,j} &= \int_{\Omega} \left( \mathcal{D} \phi_i(x) \right) \left( \mathcal{D} \phi_j(x) \right) dx + \lambda \int_{\partial \Omega} \phi_i(x) \phi_j(x) dx, \\
b_i &= \int_{\Omega} f(x) \mathcal{D} \phi_i(x) \, dx + \lambda \int_{\partial \Omega} g(x) \phi_i(x) \, dx.
\end{aligned}
\end{equation}

Thus, the gradient descent update becomes:

\[
\Theta_{k+1} = \Theta_k - \eta (A \Theta_k - b).
\]

Subtracting \( \Theta^\ast \) (the optimal parameter vector satisfying \( A \Theta^\ast = b \)) from both sides:

\[
\Theta_{k+1} - \Theta^\ast = \Theta_k - \Theta^\ast - \eta A (\Theta_k - \Theta^\ast).
\]

Simplifying:
\vspace{-.1cm}
\[
\Theta_{k+1} - \Theta^\ast = (\text{Id} - \eta A)(\Theta_k - \Theta^\ast).
\]

By recursively applying the update rule, we obtain:

\[
\Theta_k - \Theta^\ast = (\text{Id} - \eta A)^k (\Theta_0 - \Theta^\ast).
\]

Since \( A \) is symmetric positive definite, it has eigenvalues \( \lambda_1 \leq \lambda_2 \leq \dots \leq \lambda_n \) with \( \lambda_i > 0 \). To ensure convergence, we require \( 0 < \eta < \frac{2}{\lambda_{\max}(A)} \). Choosing \( \eta = \frac{c}{\lambda_{\max}(A)} \) with \( 0 < c < 2 \), we have:

\[
1 - \eta \lambda_i = 1 - c \frac{\lambda_i}{\lambda_{\max}(A)}.
\]

The spectral radius \( \rho \) of \( \text{Id} - \eta A \) is:

\[
\rho = \max \left\{ \left| 1 - c \frac{\lambda_{\min}(A)}{\lambda_{\max}(A)} \right|, \left| 1 - c \right| \right\} = \max \left\{ 1 - \frac{c}{\kappa(A)}, |1 - c| \right\},
\]

where \( \kappa(A) = \frac{\lambda_{\max}(A)}{\lambda_{\min}(A)} \) is the condition number of \( A \). By choosing \( 0 < c < 1 \), we ensure \( |1 - c| < 1 \), and since \( \kappa(A) \geq 1 \), we have \( 1 - \frac{c}{\kappa(A)} < 1 \). Thus, the convergence factor is:

\[
\rho = 1 - \frac{c}{\kappa(A)}.
\]

Therefore:
\vspace{-.07cm}
\[
\| \Theta_k - \Theta^\ast \|_2 \leq \left( 1 - \frac{c}{\kappa(A)} \right)^k \| \Theta_0 - \Theta^\ast \|_2.
\]

To achieve \( \| \Theta_k - \Theta^\ast \|_2 \leq \varepsilon \), the number of iterations \( N(\varepsilon) \) satisfies:

\[
N(\varepsilon) \geq \frac{ \ln \left( \varepsilon / \| \Theta_0 - \Theta^\ast \|_2 \right) }{ \ln \left( 1 - \frac{c}{\kappa(A)} \right) }.
\]

Using the inequality \( \ln(1 - x) \leq -x \) for \( 0 < x < 1 \), we get:

\[
N(\varepsilon) \leq \frac{ \kappa(A) }{ c } \ln \left( \frac{ \| \Theta_0 - \Theta^\ast \|_2 }{ \varepsilon } \right ).
\]

Thus:
\vspace{-.07cm}
\[
N(\varepsilon) = O\left( \kappa(A) \ln \left( \tfrac{1}{\varepsilon} \right) \right).
\]

\end{proof}

We have shown that for a linear ansatz \( u_\Theta(x) = \sum_{i=1}^N \theta_i \phi_i(x) \), the convergence rate of gradient descent depends linearly on the condition number \( \kappa(A) \) of the system matrix \( A \). A large condition number impedes convergence, requiring more iterations to achieve a desired accuracy \( \varepsilon \).

\hfill \(\qedsymbol\)

\subsection{Theoretical Analysis of our Method}

In practice, we often work with multiple data points. For each data point, there is an associated parameter vector \( \Theta \in \mathbb{R}^N \). We are interested in the iterative update where the gradient is transformed by a neural network \( \mathcal{F} \):

\begin{equation}
    \Theta_{l+1} = \Theta_l - \eta \mathcal{F} \left( \nabla_{\Theta} \mathcal{L}_{\textnormal{PDE}}(\Theta_l) \right),
    \label{eq:update_nn_psd}
\end{equation}

where \( \Theta_l \) represents the parameter vector at iteration \( l \). Recall that \( \mathcal{F} \) is trained to minimize the loss after \( L \) iteration steps for $M$ data points:

\begin{equation}
    \mathcal{L}_{\text{DATA}} = \frac{1}{m} \sum_{k=1}^M \left\| u_{\Theta_L^{(k)}} - u^\ast_k \right\|_2^2,
    \label{eq:dataloss_app}
\end{equation}

\vspace{.1cm}
\begin{theorem}{\textbf{(Convergence rate of our method).}}
Given a linear ansatz \( u_\Theta(x) = \sum_{i=1}^N \theta_i \phi_i(x) \), assume \( \mathcal{F} \) behaves like its linearization \(P = \left. \tfrac{\partial \mathcal{F}}{\partial v} \right|_{v=0}
 \). The number of steps \( N'(\varepsilon) \) required to achieve an error \( \| \Theta_l - \Theta^\ast \|_2 \leq \varepsilon \) satisfies:

\begin{equation}
    N'(\varepsilon) = O\left( \kappa(PA) \ln \left( \tfrac{1}{\varepsilon} \right) \right),
\end{equation}

Moreover, if \( \mathcal{F} \) minimizes \( \mathcal{L}_{\text{DATA}} \) this necessarily implies \( \kappa(PA) = 1 \leq \kappa(A) \). Consequently, the number of steps is effectively reduced, i.e., \( N'(\varepsilon) \ll N(\varepsilon) \).
\end{theorem}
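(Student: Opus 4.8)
The plan is to use the linearization hypothesis to turn the nonlinear update \cref{eq:update_nn_psd} into ordinary preconditioned gradient descent, and then reuse the spectral analysis of \cref{thm:convergence_rate} verbatim with $A$ replaced by $PA$. First I would substitute $\mathcal{F}\approx P$ into the iteration and insert the identity $\nabla_\Theta\mathcal{L}_{\textnormal{PDE}}(\Theta_l)=A\Theta_l-b$ already derived there. Using $A\Theta^\ast=b$, the error obeys
\begin{equation*}
\Theta_{l+1}-\Theta^\ast=(\text{Id}-\eta PA)(\Theta_l-\Theta^\ast),\qquad\text{hence}\qquad \Theta_l-\Theta^\ast=(\text{Id}-\eta PA)^l(\Theta_0-\Theta^\ast),
\end{equation*}
so the effective system matrix is now $PA$ instead of $A$.

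The first claim then follows by the identical spectral-radius argument, provided $\kappa(PA)$ is meaningful. The point needing care is that $PA$ is a product and need not be symmetric. I would resolve this by taking the natural conditioner $P$ to be symmetric positive definite, so that $PA$ is similar to the SPD matrix $P^{1/2}AP^{1/2}$; consequently $PA$ has real positive eigenvalues and, after the corresponding change of basis, the decay estimate of \cref{thm:convergence_rate} applies unchanged. Choosing $\eta$ of order $1/\lambda_{\max}(PA)$ gives a contraction factor $1-c/\kappa(PA)$ and therefore $N'(\varepsilon)=O(\kappa(PA)\ln(1/\varepsilon))$.

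For the equality $\kappa(PA)=1$, I would work in the regime where the differential operator (hence $A$ and $P$) is fixed while the data $(f,g)$ --- and thus $b$ and the optima $\Theta^\ast_k$ --- vary across the $M$ training instances. Suppose $\mathcal{F}$ drives $\mathcal{L}_{\text{DATA}}$ to $0$, i.e. $u_{\Theta_L^{(k)}}=u^\ast_k$ for every $k$; since the linear reconstruction is injective in $\Theta$ on the relevant span, this forces $\Theta_L^{(k)}=\Theta^\ast_k$, so $(\text{Id}-\eta PA)^L$ kills every error vector $\Theta_0-\Theta^\ast_k$. If these errors span $\mathbb{R}^N$, then $(\text{Id}-\eta PA)^L$ is the zero operator; because $PA$ is diagonalizable with positive eigenvalues $\mu_i$, nilpotency forces $1-\eta\mu_i=0$ for all $i$, i.e. every eigenvalue equals $1/\eta$. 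Hence $\kappa(PA)=1\le\kappa(A)$, and comparing $N'(\varepsilon)=O(\ln(1/\varepsilon))$ with $N(\varepsilon)=O(\kappa(A)\ln(1/\varepsilon))$ gives $N'(\varepsilon)\ll N(\varepsilon)$.

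The main obstacle I anticipate is the logical bridge in the last step: passing from ``$\mathcal{F}$ minimizes $\mathcal{L}_{\text{DATA}}$'' to ``$(\text{Id}-\eta PA)^L$ vanishes on all of $\mathbb{R}^N$'' rests on two idealizations flagged in the hypotheses --- that the data loss is exactly zero rather than merely small, and that the training errors genuinely span the parameter space. Relaxing either yields only $\kappa(PA)\approx1$ on a subspace rather than exact equality. A secondary technical point is justifying that the linearization $P$ may be taken symmetric positive definite, which is what makes $\kappa(PA)$ well defined and the similarity to $P^{1/2}AP^{1/2}$ valid; both are precisely the ``ideal scenario'' caveats under which the theorem is stated.
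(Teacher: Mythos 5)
Your proposal follows essentially the same route as the paper's proof: linearize $\mathcal{F}$ to $P$, reduce the iteration to $\Theta_{l+1}-\Theta^\ast=(\text{Id}-\eta PA)(\Theta_l-\Theta^\ast)$, reuse the spectral argument of the PINNs convergence theorem with $A$ replaced by $PA$, and then derive $\kappa(PA)=1$ from exact minimization of $\mathcal{L}_{\text{DATA}}$ plus a spanning assumption on the initial errors, which forces $(\text{Id}-\eta PA)^L=0$ and hence all eigenvalues of $PA$ equal to $1/\eta$. If anything, you are slightly more careful than the paper on two points it glosses over --- justifying that $\kappa(PA)$ is well defined by taking $P$ symmetric positive definite (so $PA$ is similar to $P^{1/2}AP^{1/2}$), and noting that nilpotency plus diagonalizability is what forces $\text{Id}-\eta PA$ to vanish --- so the proposal is correct and matches the paper's argument.
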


\begin{proof}

Since \( \mathcal{F} \) behaves like its linearization \( P \), the gradient descent update becomes (refer to proof of Theorem~\ref{thm:convergence_rate} for steps):

\[
\Theta_{l+1} = \Theta_l - \eta P (A \Theta_l - b).
\]

Let \( \Theta^\ast\) be the optimal parameter vector minimizing \( \mathcal{L}_{\textnormal{PDE}} \). Then, the difference between the parameter vector at iteration \( l \) and the optimal parameter vector is:

\[
\Theta_{l+1} - \Theta^\ast = \Theta_l - \Theta^\ast - \eta P A (\Theta_l - \Theta^\ast) = (\text{Id} - \eta P A)(\Theta_l - \Theta^\ast).
\]

By recursively applying this update until the final step $L$, we obtain:

\[
\Theta_L - \Theta^\ast = (\text{Id} - \eta P A)^L (\Theta_0 - \Theta^\ast).
\]

Since we have multiple data points, each with its own parameter vector, we consider the concatenation when necessary. Let's introduce \( \Xi_l \) as the matrix whose columns are the parameter vectors:

\[
\Xi_L = [\Theta_L^{(1)}, \Theta_L^{(2)}, \dots, \Theta_L^{(m)}].
\]

Similarly, \( \Xi^\ast \) contains the optimal parameter vectors for each data point. The update for all data points can be written collectively:

\[
\Xi_L - \Xi^\ast = (\text{Id} - \eta P A)^L (\Xi_0 - \Xi^\ast).
\]

Since \( \mathcal{F} \) minimizes \( \mathcal{L}_{\text{DATA}} \), we have \( \Xi_L = \Xi^\ast \), implying:

\[
(\text{Id} - \eta P A)^L (\Xi_0 - \Xi^\ast) = 0.
\]

Given that the values of $\Xi_0$ are iid and sampled randomly  from a continuous distribution, because the set of singular matrices has measure zero, the square matrix \( (\Xi_0 - \Xi^\ast)(\Xi_0 - \Xi^\ast)^\top \) is full rank (i.e., invertible), with probability 1. Thus, the only way for the above equality to hold is if:

\[
(\text{Id} - \eta P A)^L = 0.
\]

This means \( \text{Id} - \eta P A \) is nilpotent of index \( L \). Consequently, all eigenvalues of \( \text{Id} - \eta P A \) are zero, implying that all eigenvalues of \( P A \) are equal to \( \dfrac{1}{\eta} \), leading to $\kappa(PA) = \lambda_{\max}(PA) / \lambda_{\min}(PA) = 1
$, which is the optimal condition number. Referring to the convergence analysis in Theorem~\ref{thm:convergence_rate}, we have:

\[
N'(\varepsilon) \leq \frac{\kappa(PA)}{c} \ln \left( \frac{\| \Xi_0 - \Xi^\ast \|_2}{\varepsilon} \right ).
\]

Which directly implies
\begin{equation}
    N'(\varepsilon) = O\left( \kappa(PA) \ln \left( \tfrac{1}{\varepsilon} \right) \right),
\end{equation}

With \( \kappa(PA) = 1 \), this leads us to the desired result:

\[
N'(\varepsilon)  = O\left(\ln \left( \tfrac{1}{\varepsilon} \right) \right) \ll O\left( \kappa(A) \ln \left( \tfrac{1}{\varepsilon} \right) \right) =: N(\epsilon),
\]

Thus, the number of iterations required is significantly reduced compared to the case without the neural network preconditioner.

\end{proof}

\vspace{1cm}
\paragraph{Discussion}

The convergence proofs for our method fundamentally rely on the assumption of linearity in the underlying problems. It is important to note that the theoretical analysis does not extend to non-linear cases. Consequently, for non-linear scenarios, the theory should be viewed primarily as a tool for building intuition or providing motivation, rather than a definitive proof. This is due to the lack of established methods for rigorously studying the non-linear regime, as no known results currently address such cases.
\clearpage
Under these conditions: 
\begin{itemize}
    \item This optimal condition number implies that \textbf{convergence is not only guaranteed but also optimal, requiring fewer iterations.}
    \item \textbf{Guaranteed Convergence}: The method reliably achieves convergence to the optimal solution due to the reduced condition number.
    \item \textbf{Optimal Convergence Speed}: With \(\kappa(PA) = 1\), the neural network provides an enhanced convergence rate, resulting in fewer required iterations compared to the original system without the neural network. 
\end{itemize}
\hfill \(\qedsymbol\)

\clearpage
\section{Dataset details}
\label{app:appdataset}
For all datasets, we kept $800$ samples for training and $200$ as testing examples (except otherwise stated in the experiments). 

\subsection{Helmholtz}
We generate a dataset following the $1d$ static Helmholtz equation \cref{eq:helmholtz}. 
For $x \in [0, 1[$, 
    \begin{equation}
        \begin{cases}
             u (x)'' + \omega^2u(x) &= 0, \\
            u(0) &= u_0, \\
            u'(0) &= v_0.
        \end{cases}
        \label{eq:helmholtz}
    \end{equation}
The solution can be analytically derived: $u(x) = \alpha\cos(\omega x + \beta)$, with $\beta = \arctan(\frac{-v_0}{\omega u_0})$, $\alpha = \frac{u_0}{\cos(\beta)}$ and directly computed from the PDE data.  We generate $1,024$ trajectories for training and $256$ for testing with $u_0, v_0 \sim \mathcal{N}(0, 1)$, and $\omega \sim \mathcal{U}(0.5, 50)$ and compute the solution on $[0, 1]$ with a spatial resolution of $256$. For training, we keep $800$ samples and use the complete dataset for the additional experiments presented in \cref{fig:abntrain_h,fig:abntrain_d}. Moreover, we sub-sample the spatial resolution by $4$ and keep $64$ points for training. 

\begin{figure}[htbp]
    \begin{subfigure}{0.48\textwidth} 
        \centering
        \includegraphics[width=\textwidth]{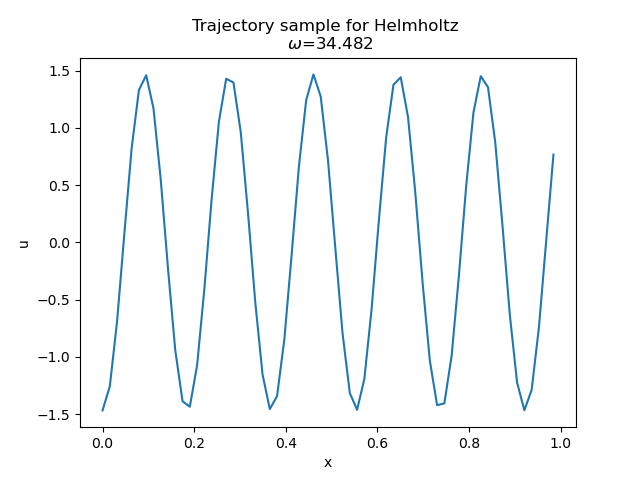}
        \caption{}
        \label{fig:vish1}
  \end{subfigure}
  \hfill
  \begin{subfigure}{0.48\textwidth} 
        \centering
        \includegraphics[width=\textwidth]{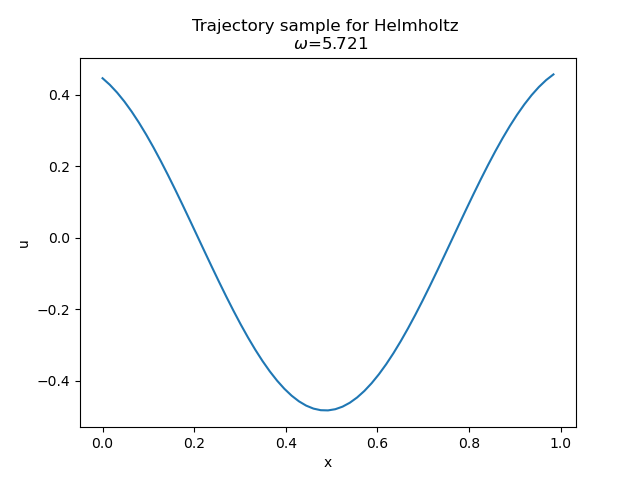}
        \caption{}
        \label{fig:vish2}
  \end{subfigure}
  \caption{Samples from the Helmholtz Dataset. }
\end{figure}

\subsection{Poisson}

We generate a dataset following the $1d$ static Poisson equation \cref{eq:poisson} with forcing term. For $x \in [0, 1[$, 
    \begin{equation}
        \begin{cases}
            - u''(x) &= f(x), \\
            u(0) &= u_0,\\
            u'(0) &= v_0.
        \end{cases}
        \label{eq:poisson}
    \end{equation}
We chose $f$ to be a non-linear forcing term: $f(x) = \frac{\pi}{K}\sum_{i=1}^K a_ii^{2r}\sin(\pi x)$, with $a_i\sim \mathcal{U}(-100, 100)$, we used $K=16$, $r=-0.5$, and solve the equation using a backward finite difference scheme. We generate $1,000$ trajectories with $u_0, v_0 \sim \mathcal{N}(0, 1)$ and compute the solution on $[0, 1]$ with a spatial resolution of $64$. 

\begin{figure}[htbp]
    \begin{subfigure}{0.48\textwidth} 
        \centering
        \includegraphics[width=\textwidth]{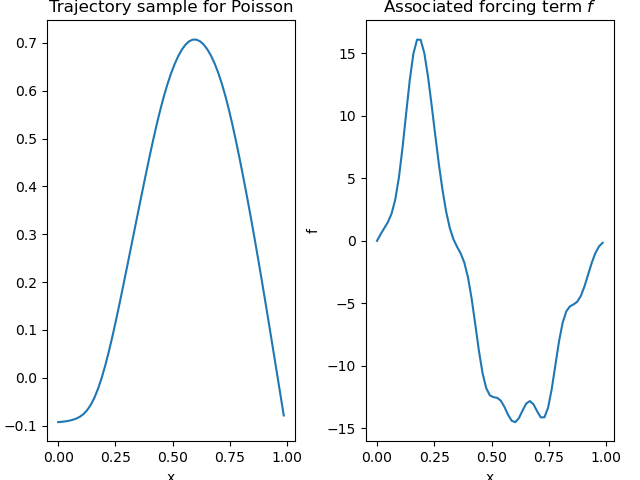}
        \caption{}
        \label{fig:visp1}
  \end{subfigure}
  \hfill
  \begin{subfigure}{0.48\textwidth} 
        \centering
        \includegraphics[width=\textwidth]{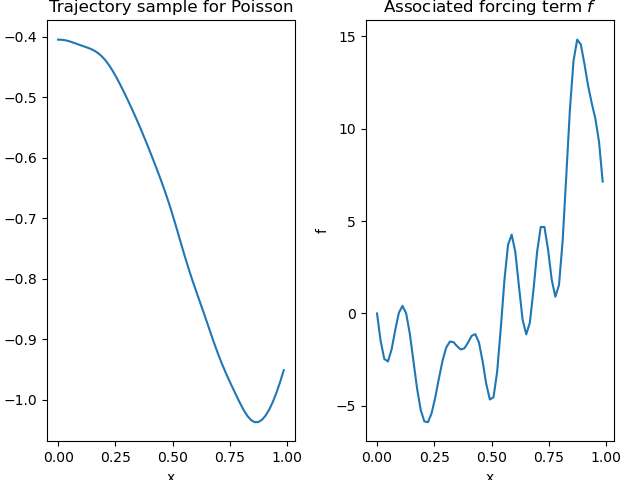}
        \caption{}
        \label{fig:visp2}
  \end{subfigure}
  \caption{Samples from the Poisson Dataset. }
\end{figure}

\paragraph{Reaction-Diffusion}
We use a non-linear reaction-diffusion used in \citep{krishnapriyan_characterizing, toloubidokhti2024dats}. This PDE has been shown to be a failure case for PINNs \citep{krishnapriyan_characterizing}. The PDE states as follows:
\begin{align}
    \frac{\partial u(t, x)}{\partial t} - \nu \frac{\partial^2u(t, x)}{\partial x^2} - \rho u(t, x)(1-u(t, x)) &= 0, \\
    u(0, x) = e^{-32(x-1/2)^2}.
\end{align}

We generate $800$ trajectories by varying $\nu$ in $[1, 5]$ and $\rho$ in $[-5, 5]$. Spatial resolution is $256$ and temporal resolution is $100$, which we sub-sample by $4$ for training, leading to a spatial resolution of $64\times25$. The PDE is solved on $[0, 1]^2$ as in \citep{toloubidokhti2024dats}.  

\begin{figure}[htbp]
    \begin{subfigure}{0.48\textwidth} 
        \centering
        \includegraphics[width=\textwidth]{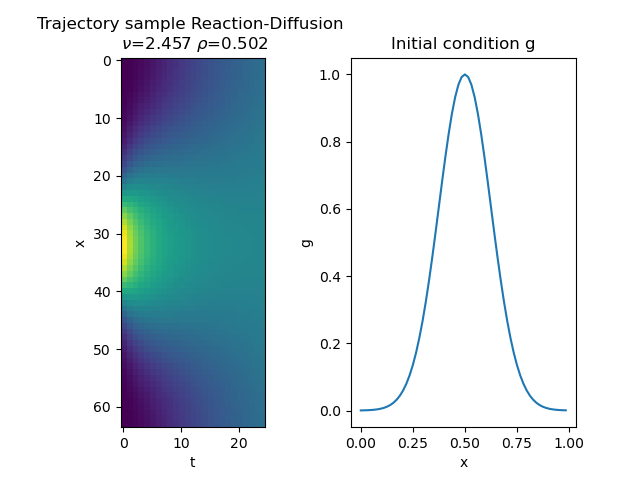}
        \caption{}
        \label{fig:visrd1}
  \end{subfigure}
  \hfill
  \begin{subfigure}{0.48\textwidth} 
        \centering
        \includegraphics[width=\textwidth]{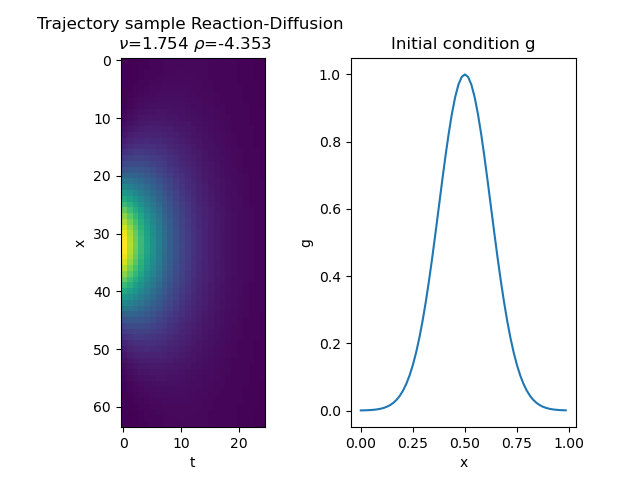}
        \caption{}
        \label{fig:visrd2}
  \end{subfigure}
  \caption{Samples from the Reaction-Diffusion Dataset. }
\end{figure}

\paragraph{Reaction-Diffusion with initial conditions:} To complexify the setting, we also change the initial condition of the problem (\textit{NLRDIC} in the following). The initial condition is expressed as follows:
\begin{equation}
    u(x, 0) = \sum_{i=1}^3 a_i e^{-\frac{\left( \frac{x-h/4}{h}\right)^2}{4}}.
\end{equation}
Where $a_i$ are randomly chosen in $[0, 1]$ and $h=1$ is the spatial resolution. 

\begin{figure}[htbp]
    \begin{subfigure}{0.48\textwidth} 
        \centering
        \includegraphics[width=\textwidth]{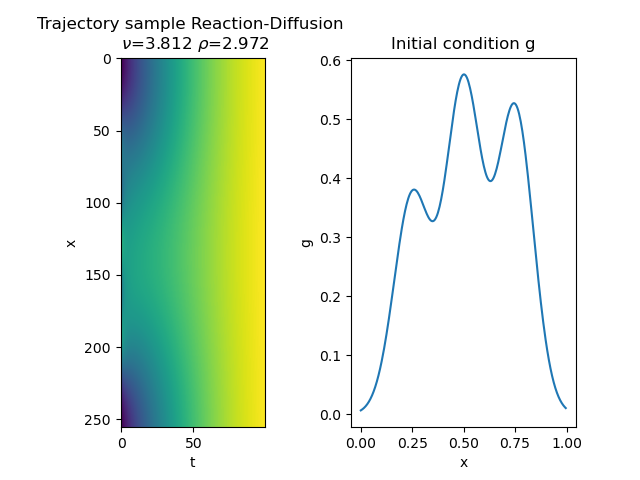}
        \caption{}
        \label{fig:visrdics1}
  \end{subfigure}
  \hfill
  \begin{subfigure}{0.48\textwidth} 
        \centering
        \includegraphics[width=\textwidth]{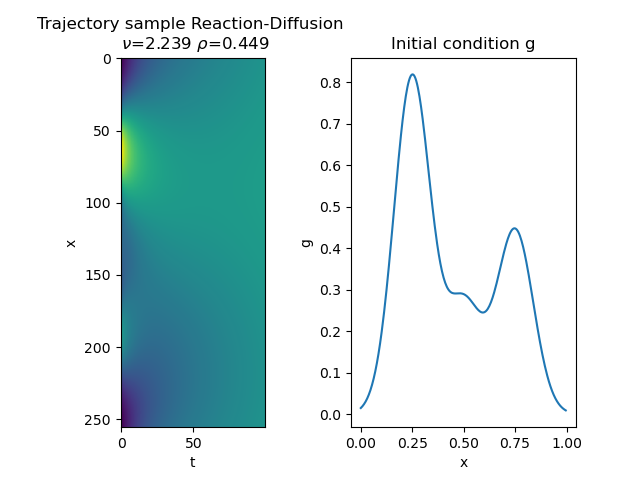}
        \caption{}
        \label{fig:visrdics2}
  \end{subfigure}
  \caption{Samples from the Reaction-Diffusion Dataset with initial conditions changed. }
\end{figure}

\vspace{4cm}
\subsection{Darcy Flow}

The $2d$ Darcy Flow dataset is taken from \citep{li2020fno} and is commonly used in the operator learning literature \citep{li2023pino, goswami2022PIDON}. 
\begin{align}
    -\nabla.(a(x)\nabla u(x)) &= f(x) \hspace{3mm} x \in (0, 1)^2,\\
    u(x) &= 0 \hspace{3mm} x \in \partial(0, 1)^2.
\end{align}
For this dataset, the forcing term $f$ is kept constant $f=1$, and $a(x)$ is a piece-wise constant diffusion coefficient taken from \citep{li2020fno}. We kept $1,000$ trajectories (on the $5,000$ available) with a spatial resolution of $64\times 64$. 

\begin{figure}[htbp]
    \begin{subfigure}{0.48\textwidth} 
        \centering
        \includegraphics[width=\textwidth]{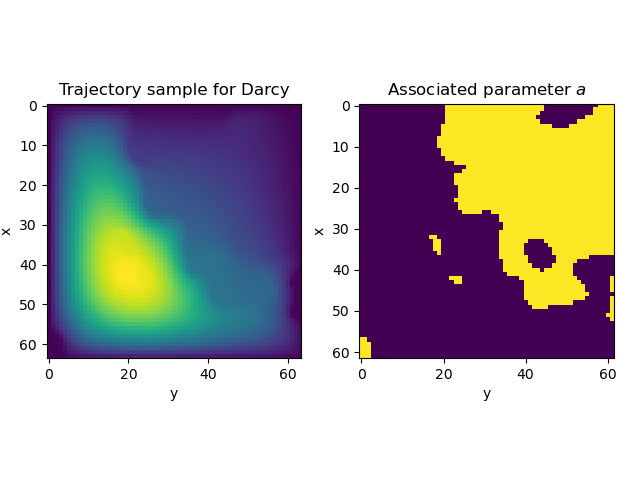}
        \caption{}
        \label{fig:visd1}
  \end{subfigure}
  \hfill
  \begin{subfigure}{0.48\textwidth} 
        \centering
        \includegraphics[width=\textwidth]{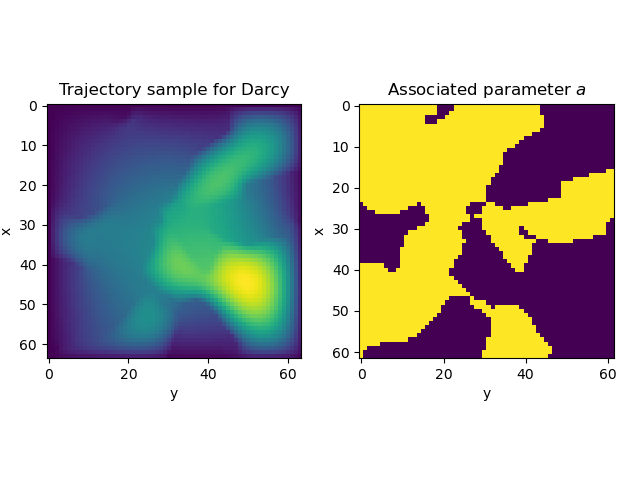}
        \caption{}
        \label{fig:visd2}
  \end{subfigure}
  \caption{Samples from the Darcy Dataset. }
\end{figure}


\subsection{Heat}
As proof that our method can handle $2$d + time, we consider the dataset proposed by \citep{maepde24}. 
\begin{align}
    \frac{\partial u (x, y, t)}{\partial t} - \nu \nabla^2 u(x, y, t) &= 0,\\
    u(x, y, 0) &= \sum_{j=1}^J A_j\sin(\frac{2\pi l_{xj}x}{L} + \frac{2\pi l_{yj}y}{L} + \phi_i).
\end{align}
Where $L=2$, $\nu$ is randomly chosen between $[2\times10^{-3}, 2\times10^{-2}]$, $A_j$ in $[-0.5, 0.5]$, $l_{xj}, l_{xy}$ are integers in $\{1, 2, 3\}$ and $\phi$ is in $[0, 2\pi]$. As a difference with \citep{maepde24}, we randomly chose $J$ between $1$ and $J_{max}=5$ to have more diversity in the represented frequencies in the data. The PDEs are sampled with spatial resolution $64$ in x and y and temporal resolution $100$. However, during training the spatial resolution are subsampled by $4$ and the coordinates are re scaled between 0 and 1. As for other PDEs, we use $800$ trajectories for training and $200$ for testing. 

\begin{figure}[htbp]
    \begin{subfigure}{0.48\textwidth} 
        \centering
        \includegraphics[width=\textwidth, trim={3.5cm 4cm 4cm 4cm}, clip]{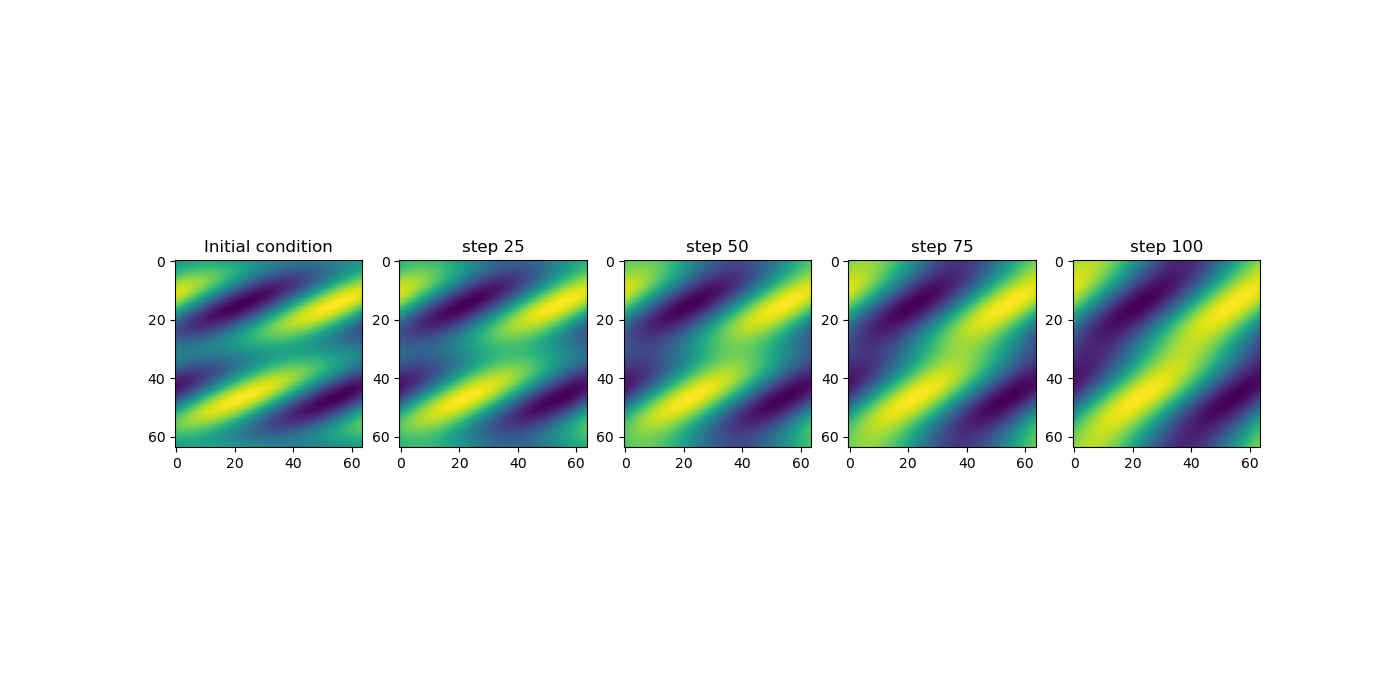}
        \caption{}
        \label{fig:visheat1}
  \end{subfigure}
  \hfill
  \begin{subfigure}{0.48\textwidth} 
        \centering
        \includegraphics[width=\textwidth, trim={3.5cm 4cm 4cm 4cm}, clip]{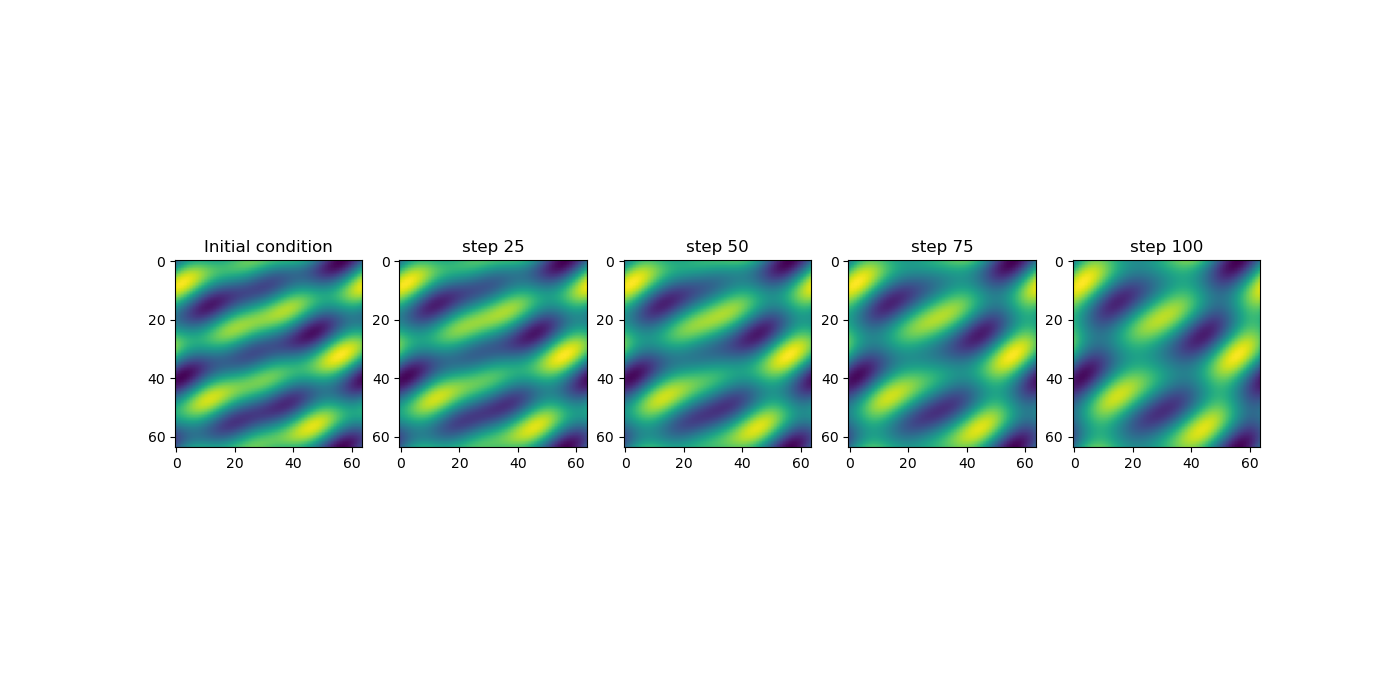}
        \caption{}
        \label{fig:visheat2}
  \end{subfigure}
  \caption{Samples from the Heat dataset. }
\end{figure}

\subsection{Additional dataset: Advection}
The dataset is taken from \citep{takamoto2023pdebench}. 
\begin{align}
    \frac{\partial u(t, x)}{\partial t} + \beta \frac{\partial u(t, x)}{\partial x} &= 0, \hspace{3mm} x\in (0, 1), t\in (0, 2], \\
    u(0, x) &= u_0(x), \hspace{3mm} x\in (0, 1).
\end{align}
Where $\beta$ is a constant advection speed, and the initial condition is $u_0(x) = \sum_{k_i=k_1...k_N} A_i\sin(k_ix+\phi_i)$, with $k_i=\frac{2\pi{n_i}}{L_x}$ and ${n_i}$ are randomly selected in $ [1,8]$. The author used $N=2$ for this PDE. Moreover, $A_i$ and $\phi_i$ are randomly selected in $[0, 1]$ and $(0, 2\pi)$ respectively. Finally, $L_x$ is the size of the domain \citep{takamoto2023pdebench}.

The PDEBench's Advection dataset is composed of several configurations of the parameter $\beta$ ($\{0.1, 0.2, 0.4, 0.7, 1, 2, 4, 7\}$), each of them is composed of $10,000$ trajectories with varying initial conditions. From these datasets, we sampled a total of $1,000$ trajectories for $\beta \in \{0.2, 0.4, 0.7, 1, 2, 4\}$ (which gives about $130$ trajectories for each $\beta$). This gives a dataset with different initial conditions and parameters. Moreover, during training, we sub-sampled the trajectories by $4$, leading to a grid of resolution $25$ for the t-coordinate and $256$ for the x-coordinate. 

\begin{figure}[htbp]
    \begin{subfigure}{0.48\textwidth} 
        \centering
        \includegraphics[width=\textwidth]{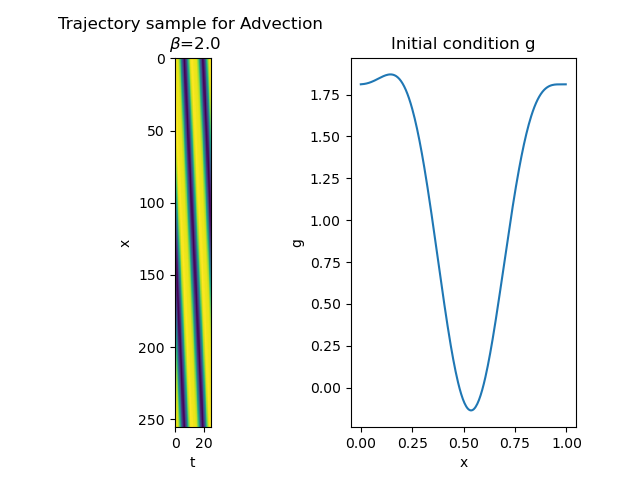}
        \caption{}
        \label{fig:visa1}
  \end{subfigure}
  \hfill
  \begin{subfigure}{0.48\textwidth} 
        \centering
        \includegraphics[width=\textwidth]{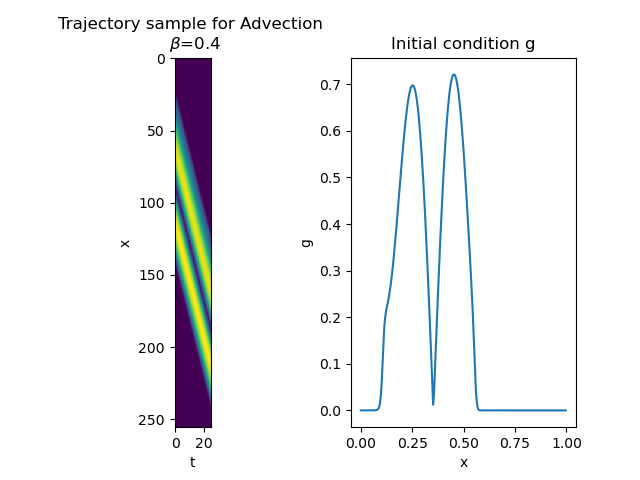}
        \caption{}
        \label{fig:visa2}
  \end{subfigure}
  \caption{Samples from the Advections Dataset. }
\end{figure}

\color{black}

\clearpage
\subsection{Summary of problem settings considered}
A summary of the datasets and parameters changing between $2$ trajectories is presented in \cref{tab:parameters_datasets_app}. 
\begin{table}[htbp]
    \centering
    \begin{tabular}{c|c|c}
         Dataset & Changing PDE data & Range / Generation\\
         \toprule
         \multirow{3}{*}{Helmholtz} & $\omega$ & $[0.5, 50]$\\
         & $u_0$ & $\mathcal{N}(0, 1)$\\
         & $v_0$ & $\mathcal{N}(0, 1)$\\
         \midrule
         \multirow{3}{*}{Poisson} & $A_i$ & $[-100, 100]$\\
         & $u_0$ & $\mathcal{N}(0, 1)$\\ 
         & $v_0$ & $\mathcal{N}(0, 1)$\\
         \midrule
         \multirow{2}{*}{Reaction-diffusion} & $\nu$ & $[1, 5]$ \\
         & $\rho$ & $[-5, 5]$\\
         \midrule
         \multirow{2}{*}{Darcy} & \multirow{2}{*}{$a(x)$} & $\psi_{\#}\mathcal{N}(0, (-\Delta + 9I)^{-2})$ \\
         & & with $\psi = 12*\mathds{1}_{\mathbb{R}_+} + 3*\mathds{1}_{\mathbb{R}_+}$\\
         \midrule
         \multirow{5}{*}{Heat} & $\nu$ & $[2\times 10^{-3}, 2\times 10^{-2}]$ \\
         & $J_{max}$ & \{1, 2, 3, 4, 5\}\\
         & $A$ & $[0.5, -0.5]$\\
         & $K_x$, $K_y$ & $\{1, 2, 3\}$\\
         & $\phi$ & $[0, 2\pi]$ \\
         \bottomrule
        \multirow{4}{*}{Advection} & $\beta$ & $\{0.2, 0.4, 0.7, 1, 2, 4\}$\\ 
         & $A_i$ & $[0, 1]$\\
         & $\phi_i$ & $[0, 2\pi]$\\
         & $k_i$ & $\{2k\pi\}_{k=1}^8$\\
         \midrule
         \multirow{3}{*}{NLRDIC} &$\nu$ & $[1, 5]$ \\
         & $\rho$ & $[-5, 5]$\\
         & $a_i$ & $[0, 1]$ \\
         \bottomrule
    \end{tabular}
    \caption{Parameters changed between each trajectory in the considered datasets in the main part of the paper as well as additional datasets (Advections and NLRDIC). }
    \label{tab:parameters_datasets_app}
\end{table}

\clearpage
\section{Implementation details}

\label{app:impdetails}

We add here more details about the implementation and experiments presented in section \ref{sec:expe}. \label{section-implementation-details}

We implemented all experiments with PyTorch \citep{PyTorch2}. 
We estimate the computation time needed for development and the different experiments to be approximately $300$ days. 

\subsection{B-Spline basis}
We chose to use a B-Spline basis to construct the solution. We manually build the spline and compute its derivatives thanks to the formulation and algorithms proposed in \citep{PiegTill96_NURBS}. We used Splines of degree $d = 3$ and constructed the Splines with $2$ different configurations: \begin{itemize}
    \item Take $N + d + 1$ equispaced nodes of multiplicity $1$ from $\frac{d}{N}$ to $1 + \frac{d}{N}$. This gives a smooth local basis with no discontinuities (see \cref{fig:sh-basis}) represented by a shifted spline along the x-axis (denoted as \verb|shifted| in the following). 
    \item Use $N + 1 - d$ nodes of multiplicity $1$ and $2$ nodes of multiplicity $d$ (typically on the boundary nodes: $0$ and $1$). This means that nodes $0$ and $1$ are not differentiable (see \cref{fig:eq-basis}). We call this set-up \verb|equispaced|. 
\end{itemize}

    \begin{figure}[htbp]
    \centering
    \begin{subfigure}{.5\textwidth}
      \centering
      \includegraphics[width=\linewidth]{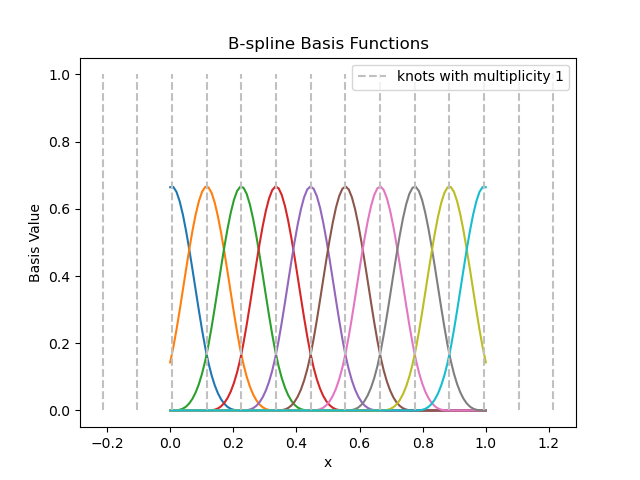}
      \caption{}
      \label{fig:sh-basis}
    \end{subfigure}%
    \begin{subfigure}{.5\textwidth}
      \centering
      \includegraphics[width=\linewidth]{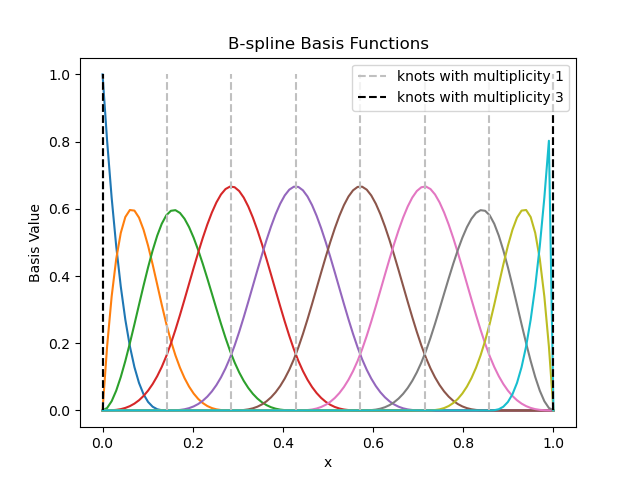}
      \caption{}
      \label{fig:eq-basis}
    \end{subfigure}
    \caption{B-spline basis with $N=10$ terms with shifted spline (Left) and higher multiplicity nodes (Right). Dashed lines represent nodes' position with color the darker, the higher the multiplicity.}
    \label{fig:bspline-basis}
\end{figure}

\paragraph{Higer-dimension basis}
For $1$d + time, $2$d dataset, and $2$d + time we build a $2$ (or $3$) dimensional B-spline basis, \textit{i.e.}, we treat the time coordinates as spatial ones. To build such bases, we compute the Cartesian product between the $2$ (or $3$) $1$d-bases, $1$ per dimension. This means that for a $2$d dataset, for which we chose to use bases of size $N1$ and $N2$ for each coordinate, the resulting basis will have $N1 + N2 + N1*N2$ terms. This method makes the training more costly and several techniques to improve its scalability could be used. For $3$d datasets, the number of terms in the basis is cubic. 

\subsection{Training details}
In our experiments, neural networks are trained using the Adam optimizer. For network optimization, we employ a smooth $l1$-loss for our solver while for other baselines, we use MSE loss and/or physical losses. All models are trained for at least $1,500$ epochs on datasets composed of some sampling of $\gamma$ and/or $f$ and/or $g$. If not stated otherwise, we train our proposed method for $750$ epochs and baselines for $1,500$ epochs. We use the Adam optimizer and an initial learning rate of $0.001$. We use an exponential learning rate scheduler that lowers the learning rate by $0.995$ every epoch. Experiments are conducted on NVIDIA TITAN V ($12$ Go) for $1$d datasets and NVIDIA RTX A6000 GPU with 49Go for $1d$ + time, $2$d datasets or $2$d + time. We recall in \cref{alg:trngd_app} the pseudo-code for training our proposed method. 

\RestyleAlgo{ruled}
\begin{algorithm}[H]
    \caption{Training algorithm for learning to optimize physics-informed losses.}
    \label{alg:trngd_app}
    \KwData{$\mpar_0 \in \mathbb{R}^n$, PDE ($\gamma, f, g)$, sample values $u(x)$}
    \KwResult{$\mathcal{F}_{\varrho}$}
    \For{$e = 1... $ epochs}  {
    \For{(\textnormal{PDE}, x, u) in dataset}{
          
             Initialize $\mpar_0$\\
             Estimate $\mpar_L$ from $\mpar_0, (\gamma, f, g)$ using Algorithm \ref{alg:infngd} \\
             Reconstruct $u_{\mpar_L}(x)$ \\
             Update the parameters $\varrho$ of $\mathcal{F}$ with gradient descent from the data loss in Equation 10.

         }
     }
     \Return $\mathcal{F}_{\varrho}$
    \end{algorithm}

We make use of two nested components: the solver for providing the approximate solution to the PDE and the optimizer that conditions the training of the solver. Both are using gradient descent but with different inputs and objectives. The former optimizes the PDE loss (inner loop), while the latter optimizes the gradient steps of the solver through conditioning (outer loop). Optimization of the two components proceeds with an alternate optimization scheme. In particular, this implies that the map $F_{\varrho}$ is kept fixed during the inner optimization process. 
    

\subsection{Models}
We present here the training details for our model and the baseline. The results can be found in \cref{tab:test-loss}. In all our experiments, we use a GeLU \citep{Hendrycks2016_GELU} activation function. The details of the model architecture on each dataset are presented in \cref{tab:archbase}. 

\begin{table}[htbp]
    \centering
    \begin{tabular}{c|c|c|c|c|c|c}
    \multirow{3}{*}{Model} & \multirow{3}{*}{Architecture} & \multicolumn{5}{c}{Dataset} \\
    \cmidrule(lr){3-7}
    & & \multicolumn{2}{c}{1d} & 1d + time & 2d & 2d + time \\ 
    \cmidrule(lr){3-4} \cmidrule(lr){5-5} \cmidrule(lr){6-6} \cmidrule{7-7}
     & & Helmholtz & Poisson & NLRD & Darcy & Heat \\ 
     \toprule
     \multirow{2}{*}{PINNs} & MLP depth & 3 & 3 & 3 & 3 & 3\\
     & MLP width & 256 & 256 & 256 & 256 & 256\\
     \midrule
     \multirow{2}{*}{PPINNs} & MLP depth & 8 & 3 & 5 & 3 & 3 \\
     & MLP width & 64 & 256 & 256 & 256 & 256\\
     \midrule
     \multirow{4}{*}{PO-DeepONet} & Branch Net depth & 3 & 2 & 5 & 5 & 5 \\
     & Branch Net width & 256 & 256 & 256 & 256 & 256 \\
     & Trunk Net depth & 3 & 2 & 5 & 5 & 5\\
     & Trunk Net width & 256 & 256 & 256 & 256 & 256\\
     \midrule
     \multirow{8}{*}{P2INNs} & Enc params depth & 4 & 4 & 4 & 4 & 4\\
     & Enc params width & 256 & 256 & 256 & 256 & 256 \\
     & Emb params & 128 & 128 & 128 & 128 & 128 \\
     & Enc coord depth & 3 & 3 & 3 & 3 & 3\\
     & Enc coord width & 256 & 256 & 256 & 256 & 256 \\
     & Emb coord & 128 & 128 & 128 & 128 & 128\\
     & Dec depth & 6 & 6 & 6 & 6 & 6\\
     & Dec width & 256 & 256 & 256 & 256 & 256\\
     & Activations & GeLU & GeLU & GeLU & GeLU & GeLU \\
     \midrule
     \multirow{2}{*}{MLP + basis} & MLP depth & 5 & 5 & 5 & 5 & 5\\
     & MLP width & 256 & 256 & 1,024 & 1,024 & 1,024\\ 
     \midrule
     \multirow{4}{*}{PI-DeepONet} & Branch Net depth & 3 & 2 & 5 & 5 & 5\\
     & Branch Net width & 256 & 256 & 256 & 256 & 5\\
     & Trunk Net depth & 3 & 2 & 5 & 5 & 5\\
     & Trunk Net width & 256 & 256 & 256 & 256 & 256\\
     \midrule
     \multirow{6}{*}{PINO} & FNO depth & 3 & 3 & 4 & 3 & 3 \\
     & FNO width & 64 & 64 & 64 & 64 & 64\\
     & FNO modes 1 & 16 & 16 & 10 & 20 & 7\\
     & FNO modes 2 & - & - & 5 & 20 & 7 \\
     & FNO modes 3 & - & - & - & - & 5 \\
     & FNO fc dim & 64 & 64 & 64 & 64 & 64\\
     \midrule
     \multirow{6}{*}{Ours} & FNO depth & 3 & 3 & 3 & 3 & 3\\
     & FNO width & 64 & 64 & 64 & 64 & 64\\
     & FNO modes 1 & 16 & 16 & 10 & 20 & 7\\
     & FNO modes 2 & - & - & 5 & 20 & 7\\
     & FNO modes 3 & - & - & - & - & 5\\
     & FNO fc dim & 64 & 64 & 64 & 64 & 64\\     
     \bottomrule
    \end{tabular}
    \caption{Architecture details of our model and baselines }
    \label{tab:archbase}
\end{table}

\paragraph{Hyperparameters for Helmholtz: } For the baselines, we empirically searched hyperparameters to allow the network to handle the high frequencies involved in the \textit{Helmholtz} dataset. Unfortunately, other network architectures did not improve the results. We proceed similarly for other datasets. 

\paragraph{Training of PPINNs: } For the Helmholtz dataset, we trained our model for 2000 epochs with a plateau learning rate scheduler with patience of 400 epochs. For the Poisson dataset, we trained our parametric PINN model for 5000 epochs with a cosine annealing scheduler with a maximum number of iterations of 1000. For the Reaction-diffusion dataset, we consider an initial learning rate of $0.0001$ instead of $0.001$.
Finally, \textit{P2INNs} is trained for $5,000$ epochs using the Adam optimizer and an exponential scheduler with patience $50$. 

\paragraph{PINNs baselines (PINNs + L-BFGS and PINNs-multi-opt): } The conditioning of the problem highly depends on the parameters of the PDE and initial/boundary conditions. This can lead to unstable training when optimizing PINNs, requiring specific parameter configurations for each PDE. To avoid extensive research of the best training strategy, we found a configuration that allowed a good fitting of \textbf{most of the testing dataset}. This means that in the values reported in \cref{tab:test-loss}, we removed trainings for which the losses exploded (only a few hard PDEs were removed, typically between none to $20$). This prevents us from extensive hyper-parameter tuning on each PDE. Please note that this lowers the reported relative MSE, thus advantaging the baseline. 
We detail in \cref{tab:pinnslbfgshp,tab:pinnsmultihp} the hyper-parameters for each dataset.


\color{black}

\begin{table}[htbp]
    \centering
    \begin{tabular}{c|c|c|c|c|c}
    \multirow{3}{*}{Hyper-parameter} & \multicolumn{5}{c}{Dataset} \\
    \cmidrule(lr){2-6}
    & \multicolumn{2}{c}{1d} & 1d + time & 2d & 2d + time \\ 
    \cmidrule(lr){2-3} \cmidrule{4-4} \cmidrule(lr){5-5} \cmidrule(lr){6-6} 
     & Helmholtz & Poisson & NLRD & Darcy & Heat \\ 
     \toprule
     epochs & $1,000$ & $1,000$ & $1,500$ & $1,500$ & $1,000$\\
     learning rate & 1e-4 & 1e-5 & 1e-4 & 1e-3 & 1e-3 \\
     \bottomrule
    \end{tabular}
    \caption{Hyperparameters for \textit{PINNs+L-BFGS} baseline. }
    \label{tab:pinnslbfgshp}
\end{table}

\begin{table}[htbp]
    \centering
    \begin{tabular}{c|c|c|c|c|c}
    \multirow{3}{*}{Hyper-parameter} & \multicolumn{5}{c}{Dataset} \\
    \cmidrule(lr){2-6}
    & \multicolumn{2}{c}{1d} & 1d + time & 2d & 2d + time \\ 
    \cmidrule(lr){2-3} \cmidrule{4-4} \cmidrule(lr){5-5} \cmidrule(lr){6-6} 
     & Helmholtz & Poisson & NLRD & Darcy & Heat \\ 
     \toprule
     epochs Adams & $800$ & $800$ & $800$ & $1,200$ & $1,200$\\
     epochs L-BFGS & $200$ & $200$ & $200$ & $300$ & $300$\\
     epochs total & $1,000$ & $1,000$ & $1,000$ & $1,500$ & $1,500$\\
     learning rate Adam & 1e-4 & 1e-5 & 1e-4 & 1e-3 & 1e-3 \\
     learning rate L-BFGS & 1 & 1 & 1e-3 & 1 & 1 \\
     \bottomrule
    \end{tabular}
    \caption{Hyperparameters for \textit{PINNs-multi-opt} baseline. }
    \label{tab:pinnsmultihp}
\end{table}

\paragraph{Basis configuration: } For all datasets, we use Splines of degree $3$, built with \verb|shifted| nodes. We change the number of terms in the basis depending on the problem considered. For the $1d$-problem, we use $32$ terms. For the $2d$-problem, $40$ elements are in the basis of each dimension, except for \textit{Reaction-Diffusion} where the variable t has $20$ terms. 
Moreover, during training, we use the projection of the initial conditions and/or parameters and/or forcing terms function in the basis as input to the networks. Finally, for experiments using the \textit{Heat} dataset (\textit{i.e.} 3d basis), we used $15$ terms for the $x$ and $y$ spatial coordinates and $10$ terms for the $t$-coordinate.

\clearpage

\newpage
\section{Additional Experiments}
\label{app:ablation}

\subsection{Ablation}
We experimentally show some properties of our iterative method.
These evaluations are made on a test set composed of several instances of PDE with varying configurations $(\gamma, f, g)$ that \textbf{are unseen during training}. These experiments are performed on the \textit{Helmholtz} equation which appeared as one of the most complex to optimize in our evaluation. We used $L=5$ in our method for a better visualization, unless stated otherwise.

\paragraph{Error w.r.t the number of steps L}
On \cref{fig:abL}, we show that having more optimizer steps allows for a better generalization. However, we observed in our experiments that the generalization error stabilizes or even increases after the proposed $5$ steps. This limitation of the solver should be investigated in future work in order to allow the model to make more iterations. Note that for this experiment, we lowered batch size (and adapted the learning rate accordingly) when the number of steps increased (\cref{fig:abL}). 

\begin{figure}[htbp]
    \centering
    \includegraphics[width=0.5\textwidth]{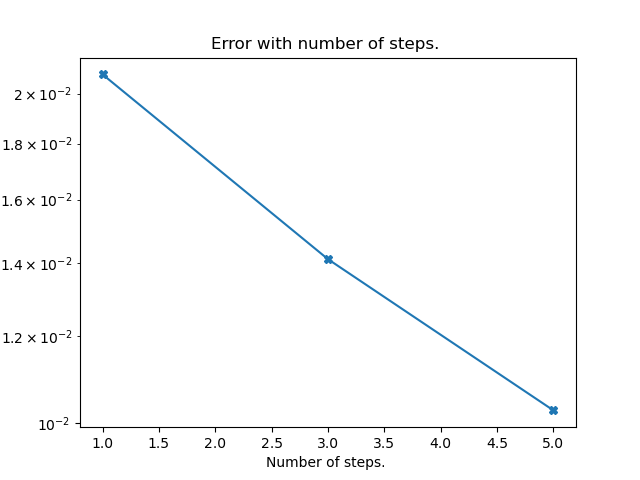}
    \caption{Error on the test set (\textit{Helmholtz} equation) \textit{w.r.t} the number of iterations }
    \label{fig:abL}
\end{figure}

\paragraph{Error w.r.t the number of training samples}
On \cref{fig:abntrain_h,fig:abntrain_d}, we show that compared to other physics-informed baselines, our solver requires less data to learn to solve PDE. Note that in this simple example, the \textit{MLP+basis} baseline also performed well. However, as shown in \cref{tab:test-loss}, this is not the case for all the datasets.  
The contribution of the iterative procedure clearly appears since with $2\times$ less data, our model performs better than this baseline. 

\begin{figure}[htbp]
    \begin{subfigure}{0.48\textwidth} 
        \centering
        \includegraphics[width=\textwidth]{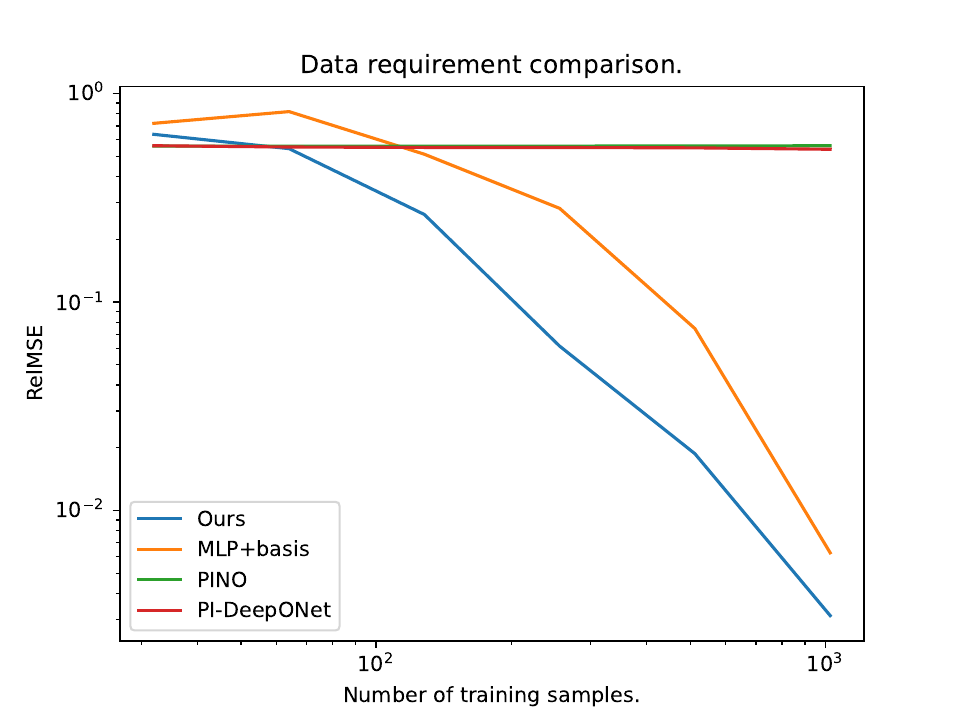}
        \caption{}
        \label{fig:abntrain_h}
  \end{subfigure}
  \hfill
  \begin{subfigure}{0.48\textwidth} 
        \centering
        \includegraphics[width=\textwidth]{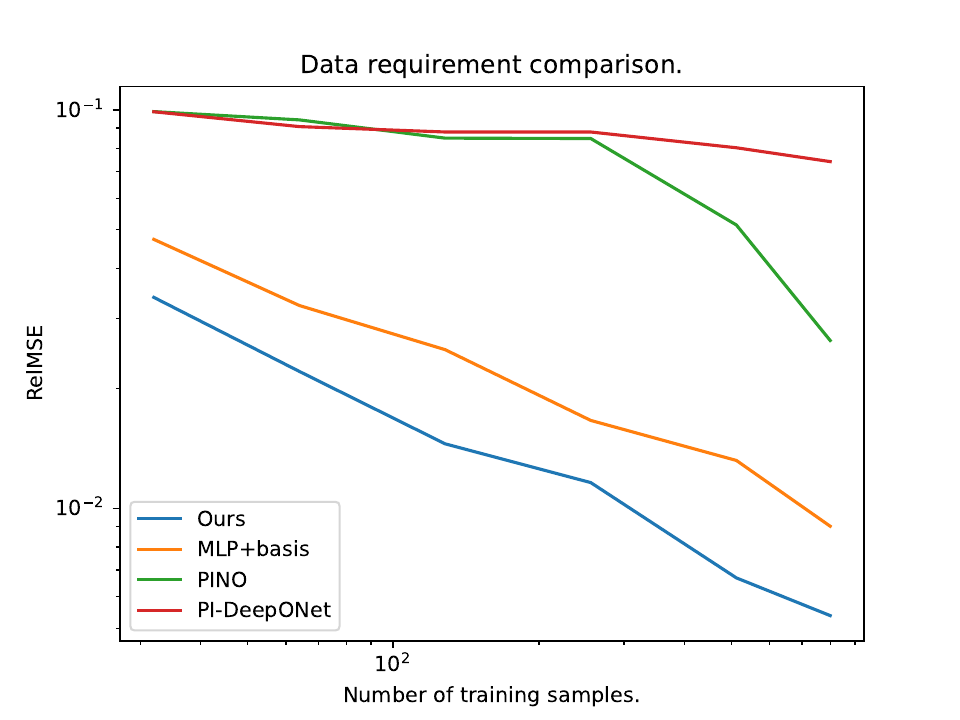}
        \caption{}
        \label{fig:abntrain_d}
  \end{subfigure}
  \caption{Relative MSE on the test set \textit{w.r.t} the number of training samples on \textit{Helmholtz} (left) and \textit{Darcy} (right) datasets.  }
\end{figure}

\paragraph{Optimization with $\mathcal{L}_{\textnormal{PDE}}$} \Cref{tab:abloss} shows that optimizing our network with physical loss greatly complicates training. Indeed, adding an ill-conditioned loss to the standard MSE makes training ill-conditioned. 
\begin{table}[htbp]
    \centering
    \begin{tabular}{lc}
    Input & Relative MSE \\
    \midrule
    $\mathcal{L}_{\textnormal{PDE}}$ + $\mathcal{L}_{\textnormal{DATA}}$ & 1.11\\
    $\mathcal{L}_{\textnormal{DATA}}$  & \textbf{2.19e-2}\\
    \bottomrule
  \end{tabular}
  \vspace{5mm}
  \caption{How the physical loss complexifies training and lowers performances. Metrics in Relative MSE on the test set of \textit{Helmholtz} equation. }
  \label{tab:abloss}
\end{table}
    

\paragraph{Iterative update \& SGD-based update}
In \cref{tab:abitgd}, we compare two optimizer configurations. "Direct" means that the network directly predicts the parameters for the next step (\textit{i.e.} $\Theta_{k+1} = \mathcal{F}_{\varrho}(\nabla\mathcal{L}_{\textnormal{PDE}}(\Theta_k), \gamma, f, g)$), while "GD", corresponds to the update rule described before and using SGD as the base algorithm $\Theta_{k+1} = \Theta_k - \eta \mathcal{F}_{\varrho}(\nabla\mathcal{L}_{\textnormal{PDE}}(\Theta_k), \; \gamma, f, g)$. The latter clearly outperforms the direct approach and shows that learning increments is more efficient than learning a direct mapping between two updates. As shown in \cref{tab:abitgd} increasing the number of steps improves the performance (shown here for 1 and 5 update steps). However, the performance does not improve anymore after a few steps (not shown here).
\begin{table}[htbp]
    \centering
    \begin{tabular}{lcc}
    & \multicolumn{2}{c}{Relative MSE} \\
    \cmidrule(lr){2-3}
    N-steps & Direct & GD \\
    \midrule
    $1$-step & 1.08e-1 & 8.5e-2\\
    $5$-step & 9.07e-2 & \textbf{2.19e-2}\\
    \bottomrule
  \end{tabular}
    \caption{Comparison of different optimizer configurations for solving the \textit{Helmholtz} equation. Metrics in Relative MSE on the test set.}
    \label{tab:abitgd}
\end{table}

\paragraph{Optimization with different inner learning rates}
In \cref{tab:abinnerlr}, we study the performance of our proposed method with different inner learning rates $\eta$. As expected, a higher learning rate leads to better performances since the optimization is taking bigger steps. 

\begin{table}[htbp]
    \centering
    \begin{tabular}{c|c}
        learning rate & Relative MSE \\
        \midrule
         0.01 & 7.32e-2\\ 
         0.1 & 4.93e-2\\ 
         1 & \textbf{2.19e-2} \\ 
    \end{tabular}
    \caption{Ablation on the inner learning rate. Metrics in Relative MSE on the test set of \textit{Helmholtz} equation.}
    \label{tab:abinnerlr}
\end{table}

\paragraph{Quantifying the importance of input feature for the learned solver} As indicated in \cref{eq:NGinfstep} the inputs of our learned solver are $(\gamma, f, g, \nabla\mathcal{L}_{\textnormal{PDE}}(\theta_L))$. 
We performed experiments by removing either $\gamma$ or $\nabla\mathcal{L}_{\textnormal{PDE}}(\theta_l)$ from the input (For the \textit{Helmholtz} equation, there is no forcing term $f$). The BC $g$ are kept since they are part of the PDE specification and are required to ensure the uniqueness of the solution.
This experiment (\cref{tab:abinputs}) illustrates that conditioning on the PDE parameters 
$\gamma$ is indeed required to solve the parametric setting. Without $\gamma$, the solver has no hint on which instance should be used. The addition of the gradient information, $\nabla$, which is at the core of our method, is also crucial for improving the convergence and validates our setting.

\begin{table}[htbp]
    \centering
    \caption{Effect of using the gradient as input \textit{w.r.t} the PDE parameters. Metrics in Relative MSE on the test set of \textit{Helmholtz} equation.}
    \begin{tabular}{lc}
    Input & Relative MSE \\
    \midrule
    $\gamma$ + g & 3.75e-1\\
    $\nabla$ + g & 1.07e-1\\
    $\nabla$ + $\gamma$ + g & \textbf{2.19e-2}\\
    \bottomrule
  \end{tabular}
  \label{tab:abgrad}
    \label{tab:abinputs}
\end{table}



\vspace{2cm}
We showed by a simple experiment that our model can handle nonlinear cases (see \cref{tab:abnl}). We propose to model the solution $u$ using a \textit{non-linear combination of the basis terms $\phi_i$}. The relation between the $\phi_i$ is modeled using a simple NN with one hidden layer and a \verb|tanh| activation function. This experiment is performed on the \textit{Poisson} PDE.

\begin{table}[htbp]
    \centering
    \caption{Nonlinear combination of the basis. 
    Relative MSE on the test set for our proposed method and comparison with other non linear models and optimizers.}
    \begin{tabular}{c|c}
        baselines & Relative MSE \\
        \midrule
         PINNs+L-BFGS & 8.83e-1\\
         PPINNs & 4.30e-2 \\
         Ours & \textbf{3.44e-3}\\ 
    \end{tabular}
    \label{tab:abnl}
\end{table}

\textbf{Network architecture}
We show in \cref{tab:ablayertype}, an ablation on the layer type used for $\mathcal{F}_{\varrho}$ in our experiments: MLP, Residual Network (ResNet), FNO and a modified version of a MLP (ModMLP) taken from \citep{wang2021understanding}, inspired from attention mechanism. 
We conducted this experiment on the Helmholtz dataset.

\begin{table}[htbp]
    \centering
    \caption{Ablation on different layer types. Metric on the test set of the \textit{Helmholtz} equation.}
    \begin{tabular}{c|c}
        Layer type & Relative MSE \\
        \midrule
         MLP & 8.25e-1\\ 
         ResNet & 6.87e-1\\
         ModMLP & 5.55e-2\\
         FNO &  \textbf{2.41e-2}\\
    \end{tabular}
    \label{tab:ablayertype}
\end{table}


\textbf{Irregular grids}
We show in \cref{tab:abirgrids}, an ablation on different types of grids: regular as in \cref{tab:test-loss}, and irregular. 
The latter were created by sampling uniformly 75\% of the points in the original grid.
The sampled grids \textbf{are different between each trajectory both during the training and testing phases}. We conducted this experiment on the Helmholtz dataset. Finally, we show some reconstruction examples in \cref{fig:reggridsol,fig:irrgridsol}. 

\begin{table}[H]
    \centering
    \caption{Comparison of the performances when training our solver using regular or irregular and different grids for each PDE. Metrics on the test set. }
    \begin{tabular}{c|c}
        Grid & Relative MSE \\
        \midrule
         regular & 2.41e-2\\
         irregular & 3.38e-1 \\
    \end{tabular}
    \label{tab:abirgrids}
\end{table}


\begin{figure}[htbp]
\begin{subfigure}{0.48\textwidth}
    \centering
    \includegraphics[width=\textwidth]{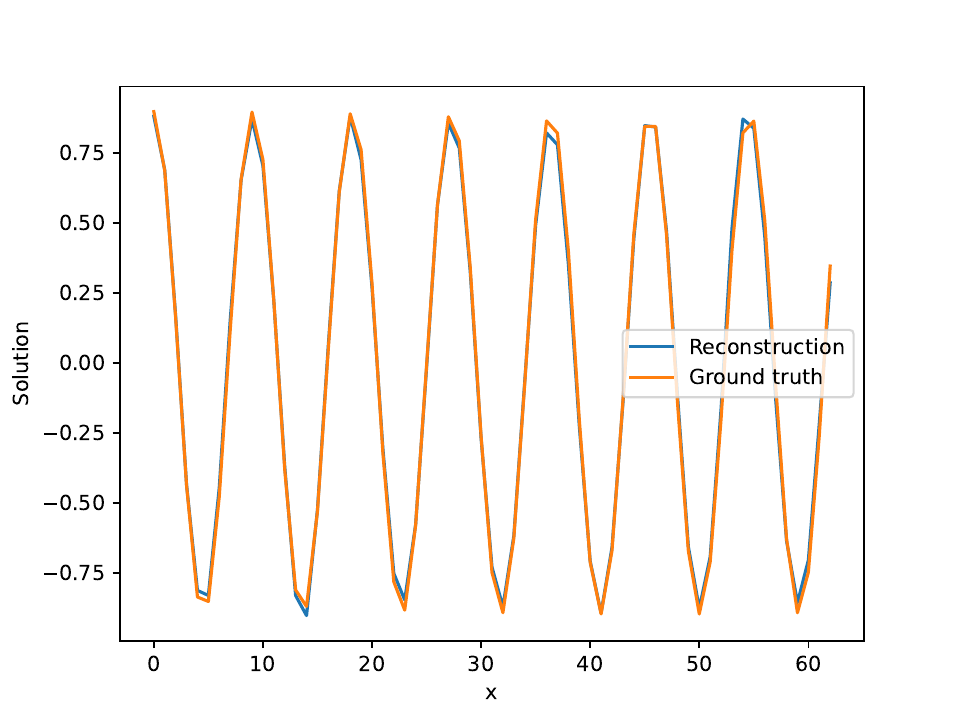}
    \caption{}
    \label{fig:reggridsol}
\end{subfigure}
\hfill
\begin{subfigure}{0.48\textwidth}
    \centering
    \includegraphics[width=\textwidth]{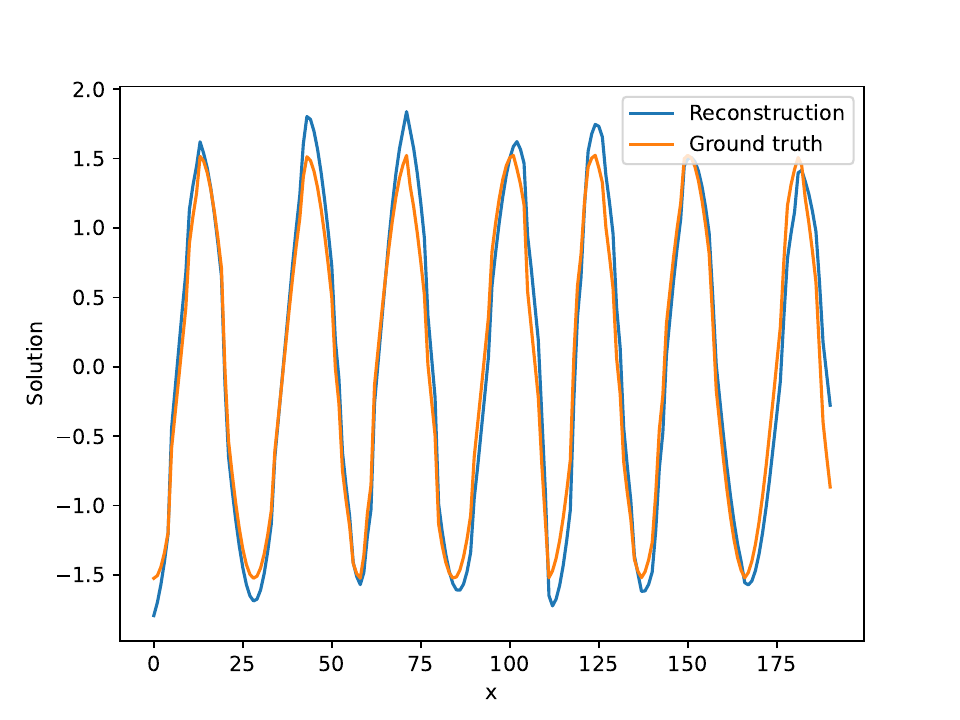}
    \caption{}
    \label{fig:irrgridsol}
\end{subfigure}
\caption{Comparison of the reconstruction of the solution when models are trained on regular (left) and irregular grid (right). }
\end{figure}

This experiment shows that our method is capable of handling irregular grids. We observe a decrease in the performances (\cref{tab:abirgrids}) and in the reconstruction quality \cref{fig:reggridsol,fig:irrgridsol}. However, we also note that our method is still capable of reconstructing the dynamic of the PDE, where most of the considered baselines failed to capture the oscillation in the Helmholtz PDE solution. 

\vspace{2cm}

\textbf{Error as a function of the PDE parameters values}
We illustrate in \cref{fig:mseomega} the behavior of the reconstruction of the MSE varying the PDE parameter values. We conducted this experiment on the Helmholtz PDE and varied $\omega$ from $-5$ to $55$ \textit{i.e.} extrapolation of $10\%$ beyond the parameter distribution (and kept fixed boundary conditions). 

\begin{figure}[htbp]
    \centering
    \includegraphics[width=0.5\linewidth]{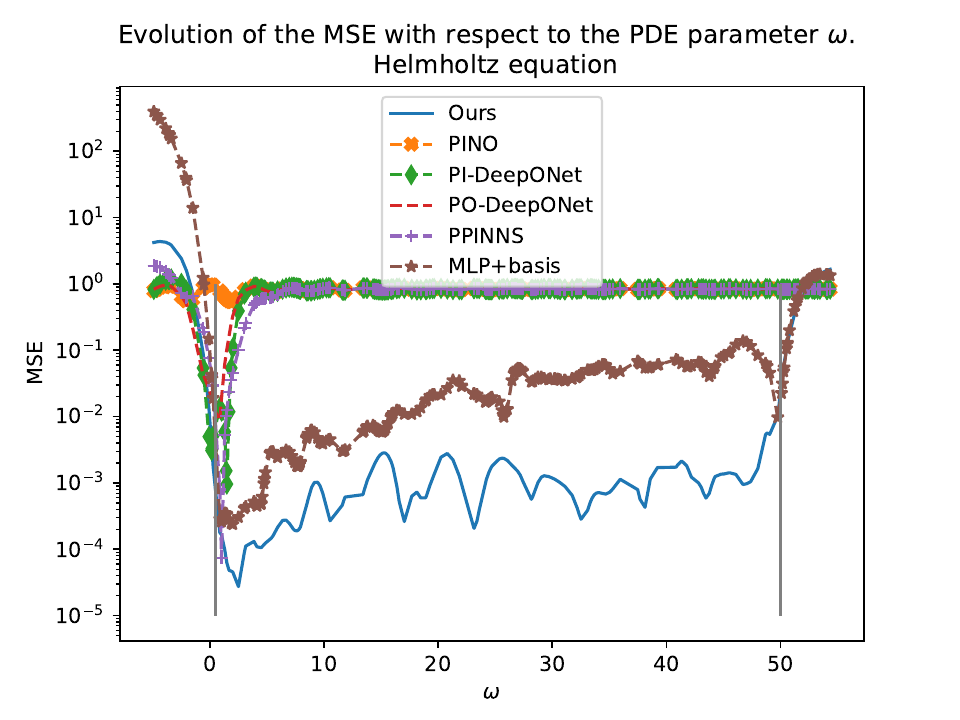}
    \caption{MSE comparison with PDE parameters $\omega$ of the Helmholtz dataset and extrapolation outside of the training distribution. }
    \label{fig:mseomega}
\end{figure}

We observe in \cref{fig:mseomega} that, as expected, solving Helmholtz PDE outside the training distribution fails for our model as well as for all baselines. The proposed method performs well inside the training distribution (as already observed in \cref{tab:test-loss}) and behaves similarly to fully supervised methods on Out-Of-Distribution (OOD) examples. Finally, other baselines, involving physical only or hybrid training, only predict a mean solution.

\subsection{Statistical analysis}

We provide a statistical analysis on some datasets and baselines. These experiments are conducted using $L=5$ (instead of $L=2$ in \cref{tab:test-loss}). For computational cost reasons, we did not make this experiment on all datasets but kept $2$ datasets in $1d$ (\cref{tab:test-loss-stat1d}), 1 dataset in $1$d+time, and 1 dataset in $2$d (\cref{tab:test-loss-stat-2d}) so that several configurations and data sizes are represented. 

\begin{table}[htbp]
  \centering
  \begin{tabular}{lccc}
    & & \multicolumn{2}{c}{1d}\\ 
    \cmidrule(lr){3-4} \\
     & Baseline & Helmholtz & Poisson \\
    \toprule
     Supervised & \textit{MLP + basis} &  \underline{5.26e-2 $\pm$ 7.56e-3} & 1.58e-1 $\pm$ 7.98e-3 \\
    \midrule
    \multirow{2}{*}{Unsupervised} & \textit{PPINNs} &  
    8.33e-1$\pm$5.61e-3 & 3.59e-2$\pm$2.11e-2 \\
    & \textit{PO-DeepONet} & 9.84e-1$\pm$6.93e-4 & 1.79e-1$\pm$3.09e-2 \\
     \midrule
    & \textit{PI-DeepONet} & 9.81e-1 $\pm$ 2.25e-3 & 1.25e-1 $\pm$1.04e-2 \\
    Hybrid & \textit{PINO} & 9.95e-1 $\pm$ 3.30e-3 & \underline{3.27e-3 $\pm$ 1.38e-3} \\
    & \textit{Ours} & \textbf{2.17e-2 $\pm$ 1.12e-3} & \textbf{4.07e-5 $\pm$ 2.65e-5} \\
  \end{tabular}
  \caption{Results of trained models with error bars (std errors) on 1d datasets - metrics in Relative MSE on test set. Best performances are highlighted in \textbf{bold}, and second best are \underline{underlined}. }
  \label{tab:test-loss-stat1d}
\end{table}

\begin{table}[htbp]
  \centering
  \begin{tabular}{lccc}
    & & 1d + time & 2d \\ 
    \cmidrule(lr){3-3} \cmidrule(lr){4-4} 
     & Baseline & 1dnlrd & Darcy-Flow\\
    \toprule
     Supervised & \textit{MLP + basis} & \underline{2.83e-5 $\pm$ 6.83e-7}  & \underline{3.78e-2$\pm$2.09e-3} \\
    \midrule
    \multirow{2}{*}{Unsupervised} & \textit{PPINNs} & 4.64e-1 $\pm$ 1.92e-2 &9.99e-1$\pm$2.63e-2 \\
    & \textit{PO-DeepONet} & 4.18e-1 $\pm$ 1.04e-2 & 8.32e-1$\pm$2.51e-4 \\
     \midrule
    & \textit{PI-DeepONet} & 7.88e-2 $\pm$ 1.96e-4 & 2.72e-1 $\pm$ 4.44e-3 \\
    Hybrid & \textit{PINO} & 8.00e-5 $\pm$ 1.00e-5 & 1.17e-1 $\pm$ 1.42e-2\\
    & \textit{Ours} & \textbf{2.61e-5 $\pm$ 2.53e-6} & \textbf{1.62e-2 $\pm$ 3.06e-4}  \\
    \bottomrule
  \end{tabular}
  \caption{Results of trained models with error bars (std errors) on 1d + time, and 2d datasets - metrics in Relative MSE on test set. Best performances are highlighted in \textbf{bold}, and second best are \underline{underlined}. }
  \label{tab:test-loss-stat-2d}
\end{table}

This analysis shows the robustness of our proposed method \textit{w.r.t.} initial seed. 
\vspace{5cm}

\subsection{Computational cost}
\label{app:ctime}

Finally, we detail here the training and inference times of our method as well as baselines. 

As we can see in \cref{tab:traintime}, our model takes longer to train due to the iterative process occurring at each epoch. 
However, note that this is training time; inference time is similar to other methods (see \cref{tab:testtime}). We detail a justification for this additional training time compared to PINNs variants below:
\begin{itemize}
    \item \underline{Comparison with vanilla PINNs}: 
Consider the following experiment. Suppose we train a classical PINN on a single instance of the Darcy PDE. Based on the training times shown in \cref{tab:traintime}, using this method, we performed $1,500$ steps, which took $420$ minutes for training (please note that the performances were less accurate than our model’s performance). If we wanted to train a PINNs on each PDE of our entire test dataset for $15 000$ epochs (sometimes even more steps are required), this would take $4200$ minutes or stated otherwise approximatly $3$ days. 
Suppose now, that one wants to solve an additional equation. This will require an average of $0.226$ seconds (see \cref{tab:testtime}) with our method, while PINNs would require an entirely new training session of approximately $20$ minutes for $15,000$ steps. This makes our method $5,000$ times faster than traditional PINNs for solving any new equation.

\item \underline{Comparison with PINNs parametric variants}: Now let us consider two parametric variants of PINNs designed to handle multiple PDEs (PPINNs for parametric PINNs, P2INNs for the model proposed by \citep{cho2024parameterizedphysicsinformedneuralnetworks}). In \cref{tab:test-loss}, models was trained for $5,000$ epochs only. Let us consider training it further as suggested for vanilla PINNs, until $15,000$ epochs. First, this training would require approximately $19$h$30$m. Then, the optimization problem is still ill-conditioned, training further would probably not significantly improve the performance. We can extend this reasoning to the \textit{P2INNs} baseline, for which we observed similar performance and behaviors. 
\end{itemize}

\begin{table}[htbp]
    \centering
    \begin{tabular}{c|c|c|c|c|c}
         Dataset & Helmholtz & Poisson & 1dnlrd & Darcy & Heat\\
         \midrule
         MLP + basis & 30m & 20m &  1h10m & 2h & 4h45\\
         PPINNs & 15m & 20m & 4h15m & 6h30m & 1d2h\\
         P2INNs & 2h & 3h & 11h & 1d7h & 1d8h \\
         PODON & 10m & 10m & 3h30m & 1d9h & 22h \\
         PIDON & 10m & 10m & 3h30m & 1d10h & 22h\\
         PINO & 15m & 10m & 1h10m & 45m & 2h40\\
         Ours & 30m & 30m & 4h30 & 10h15 & 1d 13h\\
    \end{tabular}
    \caption{Training time of the experiments shown in Table 2. of the paper on a single NVIDIA TITAN RTX (25 Go) GPU. d stands for days, h for hours, m for minutes.}
    \label{tab:traintime}
\end{table}

\begin{table}[htbp]
    \centering
    \begin{tabular}{c|c|c|c|c|c}
         Dataset & Helmholtz & Poisson & 1dnlrd & Darcy & Heat \\
         \midrule
         MLP + basis & 1.12e-2 & 1.18e-2 & 1.25e-2 & 1.19e-2 & 1.66e-2\\
         PINNs+L-BFGS & 274 & 136 & 369 & 126 & 234 \\
         PINNs-multi-opt & 15.5 & 25.5 & 16.5 & 105 & 90\\
         PPINNs & 3.09e-1 & 2.03e-1 & 2.91e-1 & 3.22e-1 & 5.45e-1\\
         P2INNs & 2.84e-1 & 3.09e-1 & 6.76e-1 & 1.29 & 1.23\\
         PODON & 3.27e-1 & 2.71e-1 & 4.38e-1 & 6.32e-1 & 8.85e-1 \\
         PIDON & 3.32e-1 & 2.96e-1 & 4.43e-1 & 6.35e-1 & 8.80e-1\\
         PINO & 3.14e-1 & 1.24e-1 & 5.19e-1 & 2.21e-1 & 8.08e-1\\
         Ours & 2.58e-1 & 2.16e-1 & 2.84e-1 & 2.26e-1 & 2.90e-1\\
    \end{tabular}
    \caption{Inference time, averaged (in seconds) on the test set. All experiments are conducted on a single NVIDIA RTX A6000 (48Go). We report the mean time computed on the test set to evaluate the baselines as performed in \cref{tab:test-loss} in the paper (\textit{i.e.} with $10$ test-time optimization steps when applicable and $20$ steps on the Heat dataset). We consider as inference the solving of a PDE given its parameters and/or initial/boundary conditions. }
    \label{tab:testtime}
\end{table}

\vspace{3cm}

\subsection{Additional datasets}

We provide additional experiments on $2$ new datasets: Non-linear Reaction-Diffusion in a more complex setting and Advections. We refer the reader to \cref{app:appdataset} for the details about the PDE setting. These datasets were not included in the main part of the paper due to a lack of space in the results table. 


\begin{table}[htbp]
  \centering
  \begin{tabular}{lcccc}
     & Baseline & NLRDIC & Advections \\ 
    \toprule
     Supervised & \textit{MLP + basis} & \textbf{9.88e-4} & 6.90e-2  \\
    \midrule
    \multirow{2}{*}{Unsupervised} & \textit{PPINNs} & 3.71e-1 & 4.50e-1 \\
    & \textit{PO-DeepONet} & 4.36e-1 & 5.65e-1 \\
     \midrule
    & \textit{PI-DeepONet} & 5.39e-2 & 4.26e-1 \\
    Hybrid & \textit{PINO} & 3.79e-3 & \textbf{6.51e-4}\\
    & \textit{Ours} & \underline{1.41e-3} & \underline{5.39e-3} \\
    \bottomrule
  \end{tabular}
  \caption{Results of trained models on additional datasets - metrics in Relative MSE on the test set. Best performances are highlighted in \textbf{bold}, and second best are \underline{underlined}.}
  \label{tab:test-loss-addds}
\end{table}

These additional datasets show cases where baselines are performing very well on the considered PDE. For \textit{Advections}, PINO reached very good performances. We believe that the Fourier layers used in the model fit well to the phenomenon. Indeed, the solution is represented using a combination of sine moving with time. This simple dynamics is easily captured using Fourier transformations. Our B-Splines basis can be sub-optimal for this dataset. For the complexified version of \textit{NLRD}, \textit{NLRDIC}, it is the supervised baseline that performs best. This dataset does not present high frequencies, and the MLP looks sufficiently expressive to find the coefficient in the basis. However, this baseline had difficulties in reconstructing the higher frequencies in \textit{Advections}. 
Even if our model does not perform best on these datasets, it is ranked second. We believe that these additional results show the robustness of our method across different physical phenomena. 

\subsection{Training behavior}

We show in this section the evolution of the MSE as the training progresses (see \cref{fig:MSEtraining}). 
\begin{figure}[htbp]
    \centering
    \includegraphics[width=0.8\linewidth]{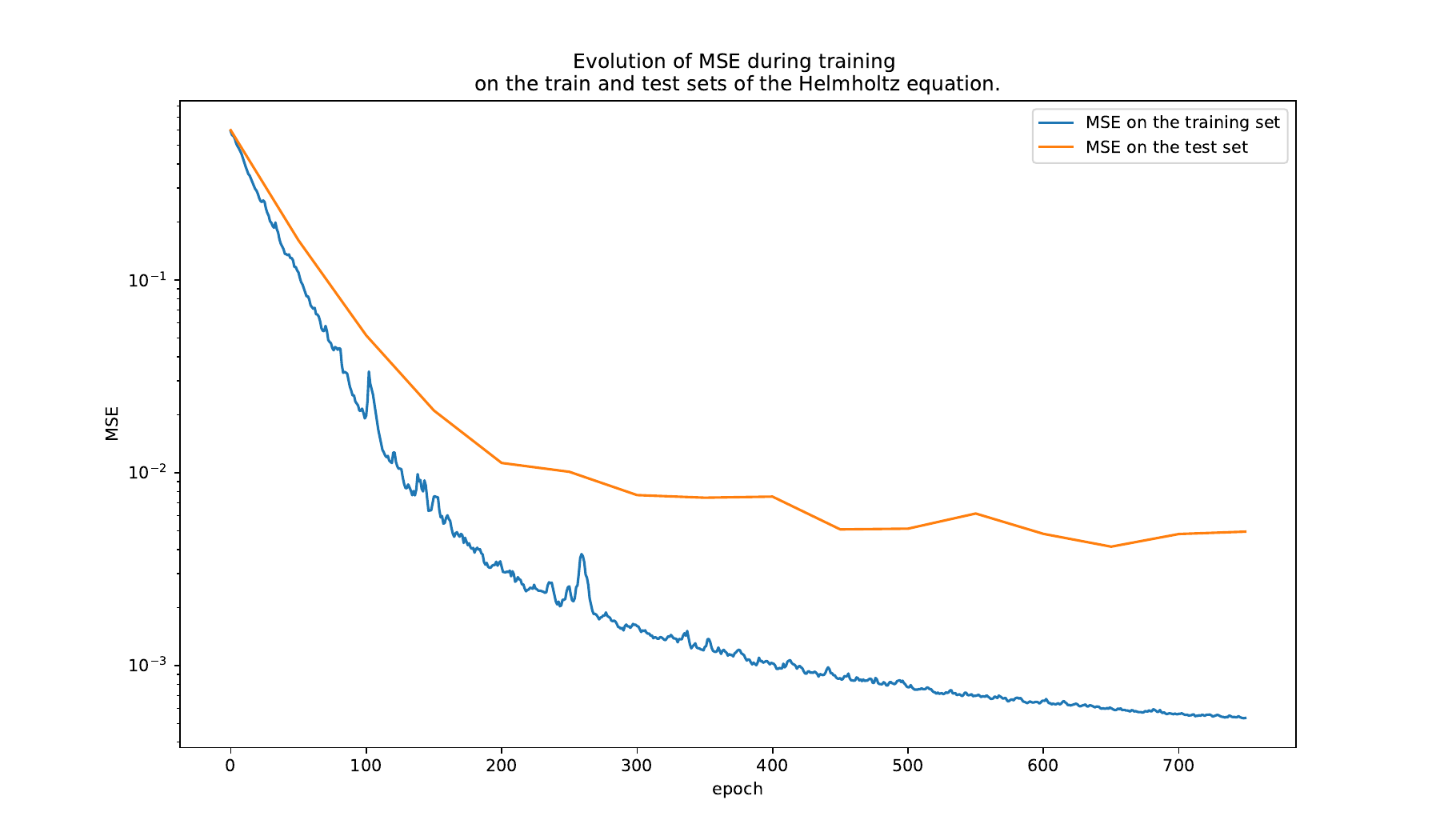}
    \caption{MSE during training on the training and testing sets. Example shown on the Helmholtz equation for results as presented in \cref{tab:test-loss}.}
    \label{fig:MSEtraining}
\end{figure}

\begin{figure}[htbp]
\begin{subfigure}{0.48\textwidth}
    \centering
    \includegraphics[width=\textwidth]{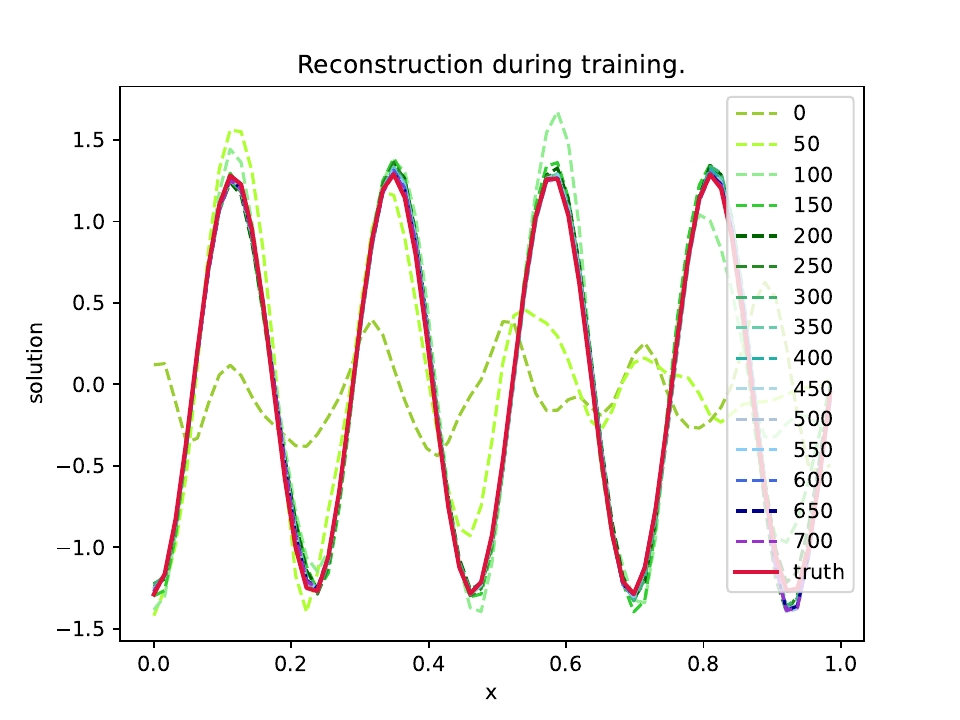}
    \caption{}
    \label{fig:solwrtepochs1}
\end{subfigure}
\hfill
\begin{subfigure}{0.48\textwidth}
    \centering
    \includegraphics[width=\textwidth]{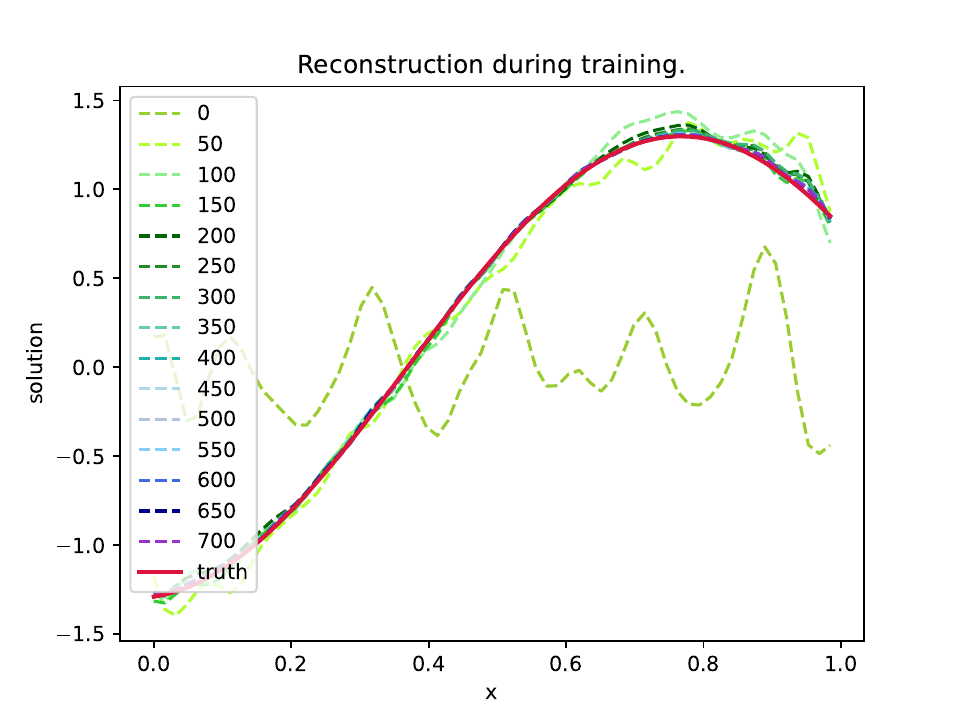}
    \caption{}
    \label{fig:solwrtepochs2}
\end{subfigure}
\caption{Reconstruction of the solution PDEs during training. Example drawn from the test set. }
\end{figure}

In \cref{fig:MSEtraining}, we show the MSE evolution on both the train and test sets (evaluation every $50$ epochs) and on \cref{fig:solwrtepochs1,fig:solwrtepochs2} some reconstruction (from the test set), with respect to the training epochs.
The training set ($800$ PDE trajectories) corresponds to PDEs used to update the network parameters, while the test set ($200$ trajectories) are unseen PDEs for the network. \textbf{This means that the model has not been trained or optimized on these PDEs.} The test set is composed of PDEs with varying PDE parameter values ($\omega$) and boundary conditions ($u_0, v_0$). This illustrates that the generalization performance on \textbf{new} PDEs \textbf{within} the training distribution is rapidly achieved by the network.

\clearpage
\subsection{Loss landscapes}
\label{app:losslandscape}

In this section, we propose a visual representation of the optimization paths in the loss landscape.  
\Cref{fig:losslandscapesPHY,fig:losslandscapesMSE} illustrates the behavior of the vanilla PINNs algorithm and of our "learning to optimize" method. The plots represent respectively a 2D visualization of the physical loss landscape $\mathcal{L}_{PDE}$ (\cref{fig:losslandscapesPHY}) and of the data loss landscape $\mathcal{L}_{\text{DATA}}$ (\cref{fig:losslandscapesMSE}), around the approximate solution (in our basis) of a given Helmholtz equation. The function basis used in this experiment has a size of $32$. We used the technique in \citep{NEURIPS2018_losslandscape} to visualize the solution coordinates in the basis $\Theta$ in the loss landscape (more details in \cref{sssec:projbasis}).
Superimposed on the loss background, we plot three trajectories obtained by starting from an initial random vector $\Theta_0$.

\Cref{fig:losslandscapesPHY} visualizes the sharp and ill conditioned landscape of the physical loss $\mathcal{L}_{PDE}$, while \Cref{fig:losslandscapesMSE} shows the better conditioned landscape of the MSE data loss $\mathcal{L}_{\text{DATA}}$. The figures provide intuition on how the proposed algorithm operates and improves the convergence. The optimizer $\mathcal{F}$ modifies the direction and magnitude of the stochastic gradient descent (SGD) updates in the physical loss landscape (\Cref{fig:losslandscapesPHY}, left plot). It achieves this by utilizing solution values at various sample points to adjust the gradient, steering it toward the corresponding minimum in the mean squared error (MSE) landscape (\Cref{fig:losslandscapesMSE}, central plot). While the descent remains within the residual physics error space, the learned optimizer provides an improved gradient direction, enhancing convergence efficiency (\Cref{fig:losslandscapesPHY}, right plot).
We describe below the different figures.

\begin{itemize}
    \item  Left Columns in \Cref{fig:losslandscapesPHY,fig:losslandscapesMSE}: these figures show the gradient path ($100$ steps) obtained by directly optimizing the physical loss $\mathcal{L}_{PDE}$, similar to the vanilla PINNs algorithm. This trajectory highlights the ill-conditioning of the optimization problem associated to $\mathcal{L}_{PDE}$.

    \item Center Column in \Cref{fig:losslandscapesPHY,fig:losslandscapesMSE}: these figures plot the trajectory of a gradient-based optimization algorithm trained with a mean squared error $\mathcal{L}_{\text{DATA}}$ data loss ($100$ steps), under the assumption that the solution values are known at collocation points. While these quantities are not available in our case, this visualization is included to illustrate the differences in convergence behavior between physics-based and MSE-based loss functions when both are accessible.
    \item Right Column in \Cref{fig:losslandscapesPHY,fig:losslandscapesMSE}: these plots illustrate the behavior of our algorithm (the solver is trained with 2 gradient steps for this example). It demonstrates the effect of our learned optimizer and the significant improvements achieved compared to a standard gradient descent algorithm on the physical loss. 
\end{itemize}. 
We describe in \cref{ssec:app_vismethod} the construction of the figures \cref{fig:losslandscapesPHY,fig:losslandscapesMSE}. 

\begin{figure}[htbp]
    \centering
    \includegraphics[width=\linewidth]{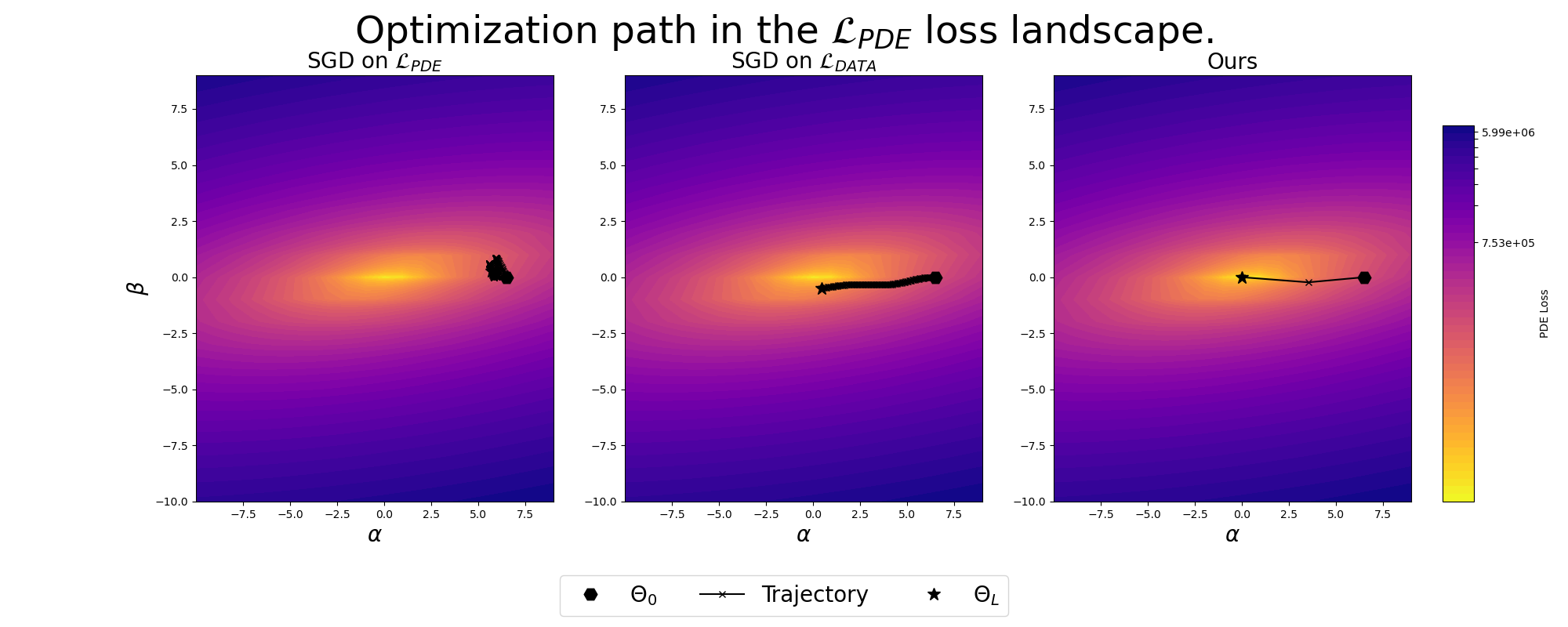}
    \caption{Loss landscapes and optimization trajectories on an instance of the Helmholtz PDE. The figure represents the PDE loss $\mathcal{L}_{\text{PDE}}$ landscape and superimposed are examples of optimization paths computed using SGD on the physical loss (left), the data loss (center), and our method (right).}
    \label{fig:losslandscapesPHY}
\end{figure}
\begin{figure}[htbp]
    \centering
    \includegraphics[width=\linewidth]{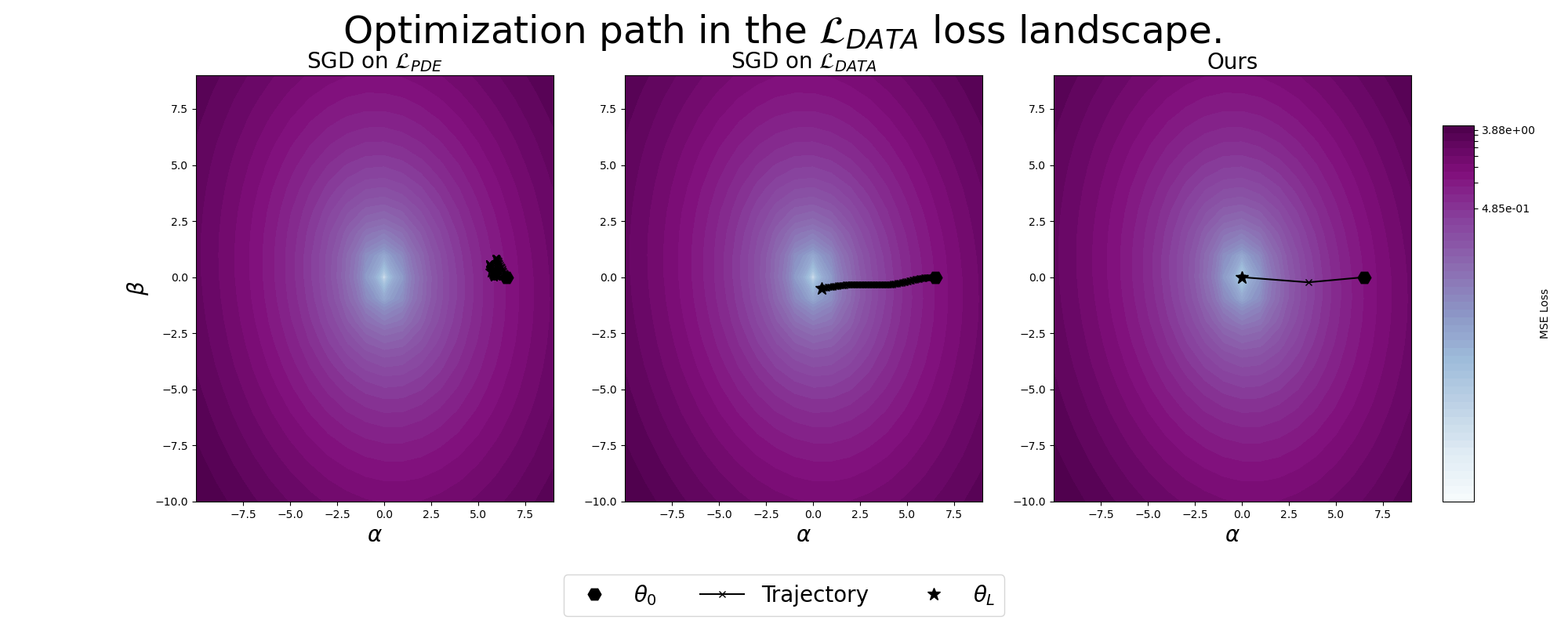}
    \caption{Loss landscapes and optimization trajectories on an instance of the Helmholtz PDE. The figures represents the DATA loss $\mathcal{L}_{\text{DATA}}$ and superimposed are examples of optimization paths computed using SGD on the physical loss (left), the data loss (center), and our method (right). }
    \label{fig:losslandscapesMSE}
\end{figure}



\subsection{Projection of the basis for visualization}
\label{ssec:app_vismethod}

Following the technique described in \citep{NEURIPS2018_losslandscape}, we plot a 2D slice of the loss function around the minimum solution. We explain below the method used for creating \cref{fig:losslandscapesPHY,fig:losslandscapesMSE}.

\subsubsection{Create a basis for projection}
\label{sssec:projbasis}
We want to visualize the loss landscapes around a parameter solution $\Theta^{\star}$ of a given PDE. Due to the high dimensionality of the parameter space, we use the method proposed in \citep{NEURIPS2018_losslandscape}, that involves creating a 2d slice of the loss landscape for visualization. This methods projects the multidimensional landscape onto a 2D basis $(u,v)$ so that any solution vector $\Theta$ could be written as $\Theta=\alpha u + \beta v$, with, $\alpha, \beta \in \mathbb{R}$ and $u, v \in \mathbb{R}^{N}$, $2$ orthonormal vectors. Let $\Omega_{u, v}:= \text{Vect}\{u, v\}$ denote this projection space. For the visualization, the basis must include the solution vectors found by our algorithm. For that: 
\begin{enumerate}
    \item We start by running our optimization algorithm from a random vector parameters $\Theta_0$ to find $\Theta_L \in \mathbb{R}^N$, our approximate PDE solution. We set $w_1 = \Theta_L$ and 
     set a second vector $w_2=\Theta_0$, chosen here as the initial vector of our optimization process. 
    \item We generate a third vector  $w_3 \in \mathbb{R}^N$ randomly. 
    \item We then set: $u=w_2-w_1$ and $v=(w_3-w_1)-\frac{<w_3-w_1, w_2-w_1>}{||w_2-w_1||^2}(w_2-w_1)$
    \item The resulting normalized vectors $\hat{u}=\frac{u}{||u||}$ and $\hat{v}=\frac{v}{||v||}$ form an orthogonal basis containing $w_1, w_2$ and $w_3$. 
    \item Using the solution parameters $\Theta_L$ as the origin for our newly created basis, we can compute a $2$D slice of the loss landscape where each solution $\Theta$ could be represented in the $(u,v)$ basis with its projection coordinates   $(\alpha, \beta)$ as $\Theta_{projection}=w_1 + \alpha \hat{u} + \beta \hat{v}$. 
\end{enumerate}

\subsubsection{Plot trajectories}
Finally, we run a baseline optimization procedures and get  trajectories $\{\Theta_i\}_{i=0}^L$, with L being the number of optimization steps, e.g. a gradient descent, on the physical loss, the MSE loss or our proposed method. We project the $\Theta_i$'s in the  created basis (see \cref{sssec:projbasis}) and plot this projection of the optimization trajectory on the loss landscape $\Omega_{u, v}$ as visualized on \cref{fig:losslandscapesPHY,fig:losslandscapesMSE}.

\subsubsection{Illustration of the ill-conditioning of the PDE loss}

As an illustration of the ill-conditioning of the PDE loss, we replicate \cref{fig:losslandscapesPHY} using a basis that further emphasizes this aspect. To build this basis, we use the procedure described in \cref{sssec:projbasis} by using $2$ eigenvectors of the hessian of the PDE loss. 
First, we compute $\text{Hess}(\mathcal{L}_{\text{PDE}})$ and its eigenvectors decomposition. We select the vectors associated to the highest and lowest eigenvalues, and respectively set them to $w_2$ and $w_3$. The resulting landscape visualization is shown in \cref{fig:losslandscapesPHYeig}. 
\begin{figure}[htbp]
    \centering
    \includegraphics[width=\linewidth]{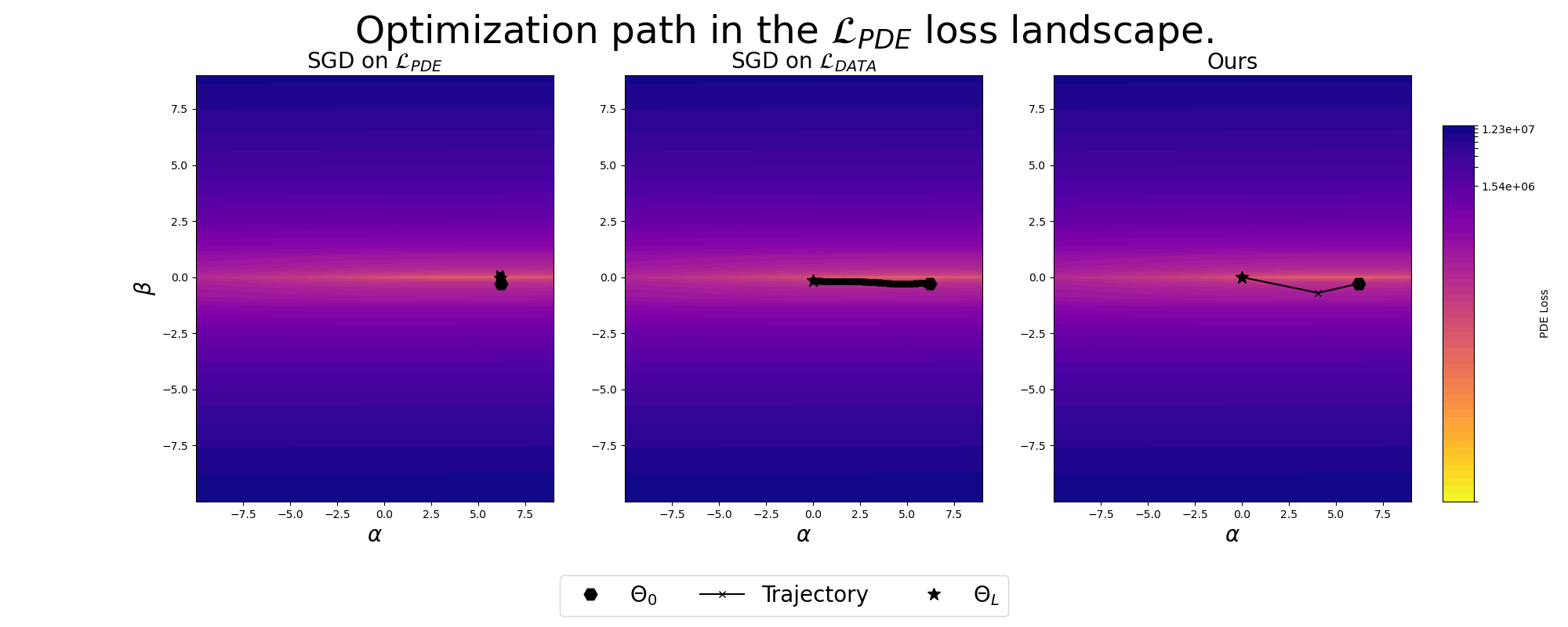}
    \caption{Loss landscapes and optimization trajectories on an instance of the Helmholtz PDE. These optimization trajectories are computed using SGD on the physical loss (left), the data loss (center), and our method (right). The background represents the PDE loss $\mathcal{L}_{\text{PDE}}$. }
    \label{fig:losslandscapesPHYeig}
\end{figure}

\Cref{fig:losslandscapesPHYeig}, clearly illustrates the characteristic shape of the ill-conditioned function $\mathcal{L}_{\text{PDE}}$. The two directions, extracted from the highest and lowest eigenvalues, are clearly visible on this PDE loss landscape. This highlights the difficulty of this optimization problem for standard descent methods.

\newpage
\section{Qualitative results}
\label{app:visu}
This section is dedicated to visualization of the results of our model, baselines and optimizers, presented in \cref{sec:expe}. 
For each PDE considered, we chose $2$ samples in the test sets and compute the solutions with our model and the different baselines. We provide visualization samples with $L=5$ \textit{i.e.} results proposed in \cref{tab:test-loss-stat1d,tab:test-loss-stat-2d} to detail more precisely the evolution of the solution at several steps of optimization. $3$ datasets are shown with $L=2$: \textit{Heat} for computational reasons (training with $L=5$ is much more expensive when the dimension of the problem increases) and the $2$ additional datasets trained only with $L=2$ (\textit{Advections} and \textit{NLRDIC}). Then, we show the evolution of the reconstruction of the solution with our method \textit{i.e.} we plot the solution at each step of the optimization (\cref{fig:ngdhelmholtz1,fig:ngdhelmholtz2,fig:ngdpoisson1,fig:ngdpoisson2,fig:ngdnlrd1,fig:ngdnlrd2,fig:ngddarcy1,fig:ngddarcy2,fig:ngdheat1,fig:ngdheat2,fig:ngdadvections1,fig:ngdadvections2,fig:ngdnlrdics1,fig:ngdnlrdics2}) and we compare the final prediction with baselines' (\cref{fig:baselinehelmholtz1,fig:baselinehelmholtz2,fig:baselinepoisson1,fig:baselinepoisson2,fig:baselinenlrd1,fig:baselinenlrd2,fig:baselinedarcy1,fig:baselinedarcy2,fig:baselineheat1,fig:baselineheat2,fig:baselineadvections1,fig:baselineadvections2,fig:baselinenlrdics1,fig:baselinenlrdics2}). Finally, we chose $20$
($6$ for \textit{Heat}) PDEs and we reproduce \cref{fig:test-time-opt} for every dataset. More precisely, we optimize one PINN per PDE using Adam, we fit our basis using several optimizers (GD, ADAM, L-BFGS and our learned optimization process) and we fine-tune the learned PINO for $10,000$ steps and visualize the evolution of the MSE (averaged at each step on the selected PDE). These figures show the relevance of learning the optimizer when using physics-informed losses (\cref{fig:testopthelmholtz,fig:testoptpoisson,fig:testoptnlrd,fig:testoptdarcy,fig:testoptheat,fig:testoptadvections,fig:testoptnlrdics}).

\subsection{Helmholtz}
\begin{figure}[htbp]
\begin{subfigure}{0.48\textwidth}
    \centering
    \includegraphics[width=\textwidth]{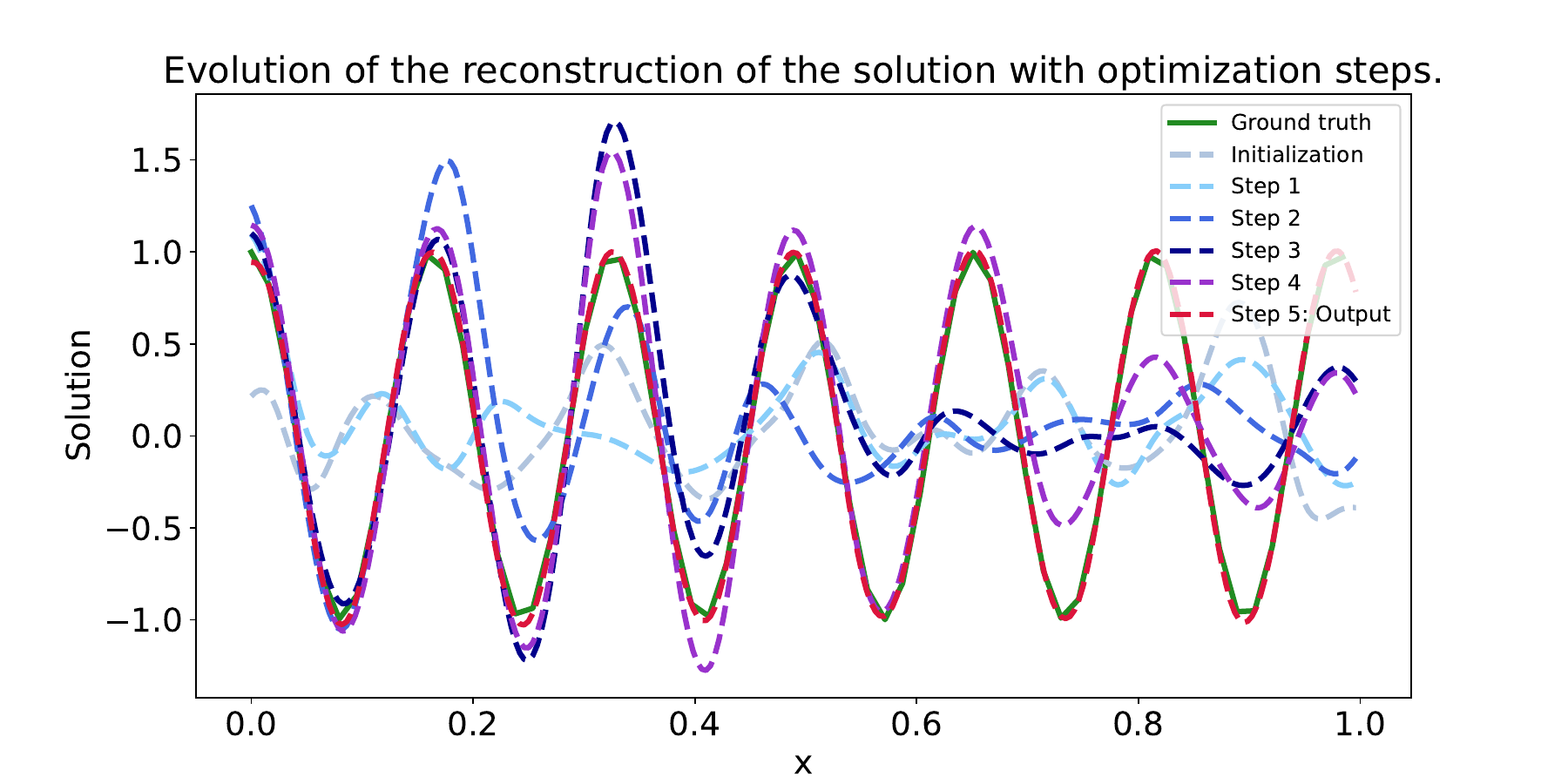}
    \caption{}
    \label{fig:ngdhelmholtz1}
\end{subfigure}
\hfill
\begin{subfigure}{0.48\textwidth}
    \centering
    \includegraphics[width=\textwidth]{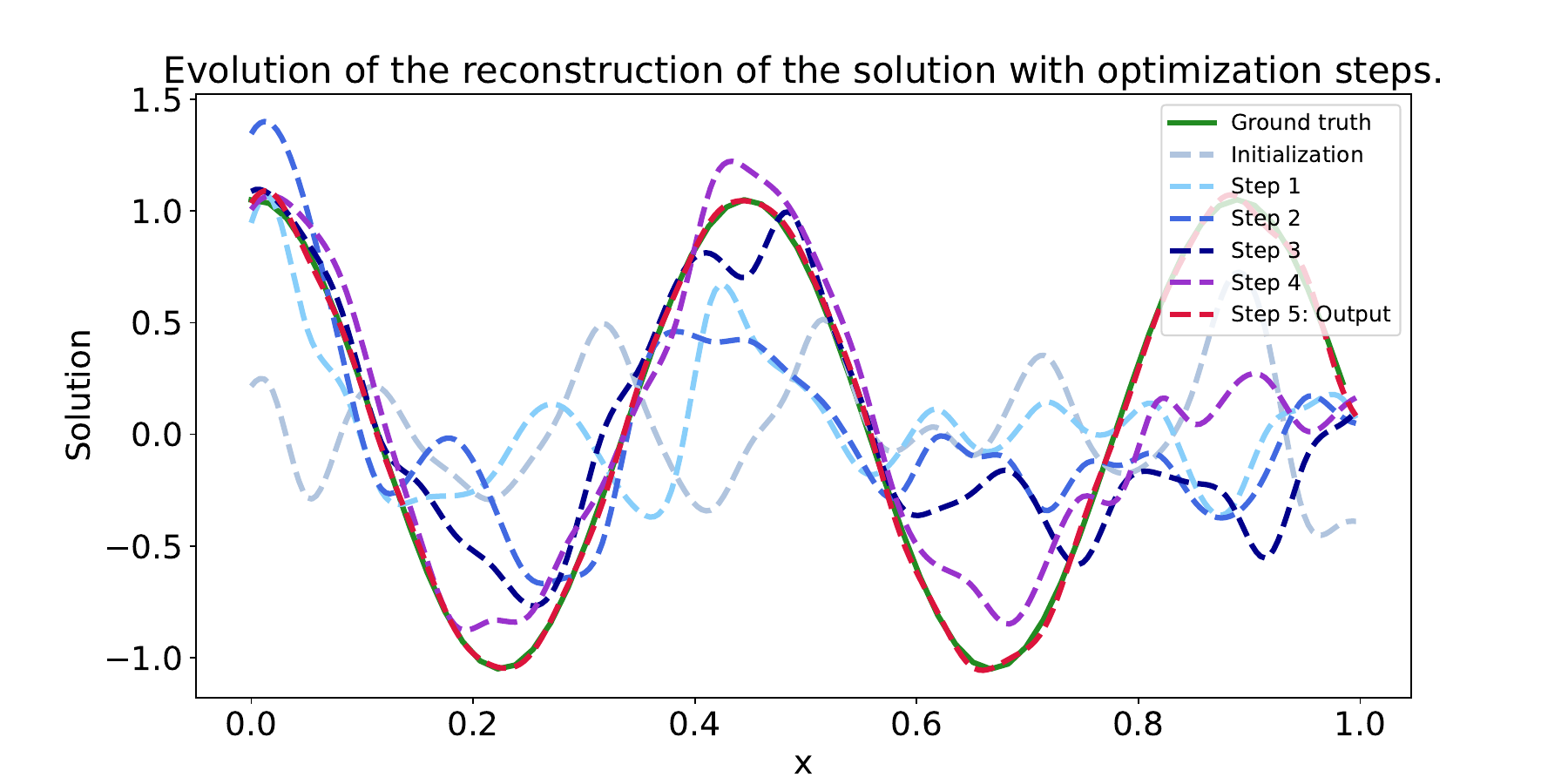}
    \caption{}
    \label{fig:ngdhelmholtz2}
\end{subfigure}
\caption{Reconstruction of the solution using our optimizer on the Helmholtz dataset. }
\end{figure}
\vfill
\begin{figure}[htbp]
\begin{subfigure}{0.48\textwidth}
    \centering
    \includegraphics[width=\textwidth]{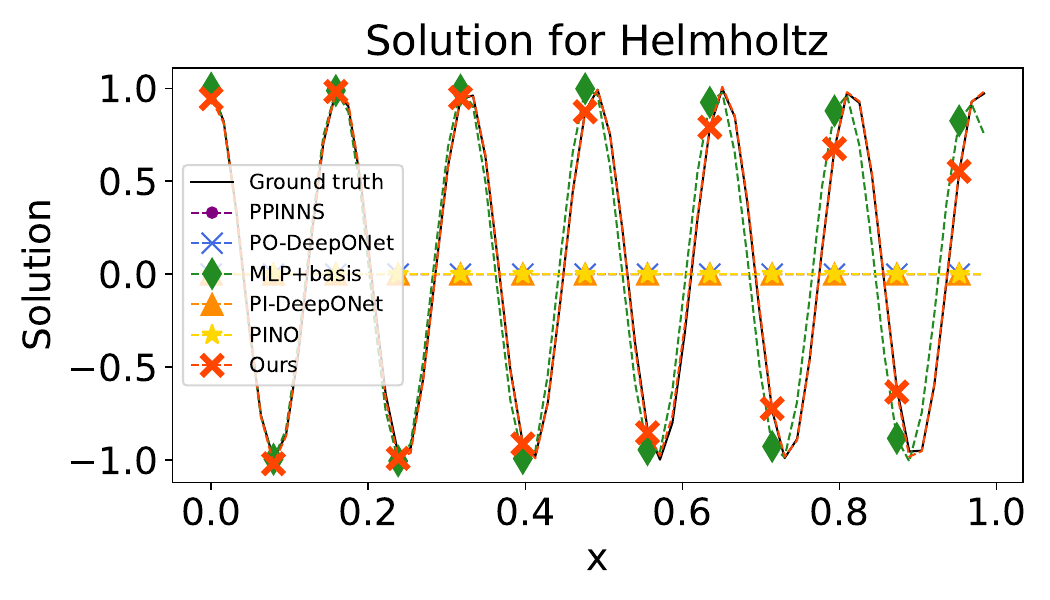}
    \caption{}
    \label{fig:baselinehelmholtz1}
\end{subfigure}
\hfill
\begin{subfigure}{0.48\textwidth}
    \centering
    \includegraphics[width=\textwidth]{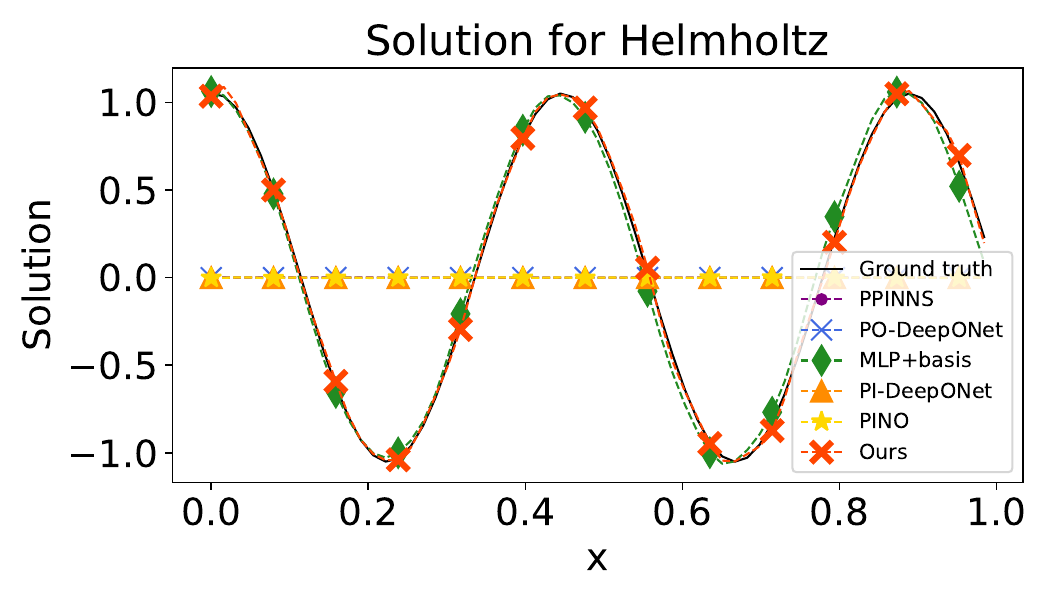}
    \caption{}
    \label{fig:baselinehelmholtz2}
\end{subfigure}
\caption{Visual comparison of the solutions for the Helmholtz equation. }
\end{figure}
\vfill
\begin{figure}[htbp]
    \centering
    \includegraphics[width=\textwidth]{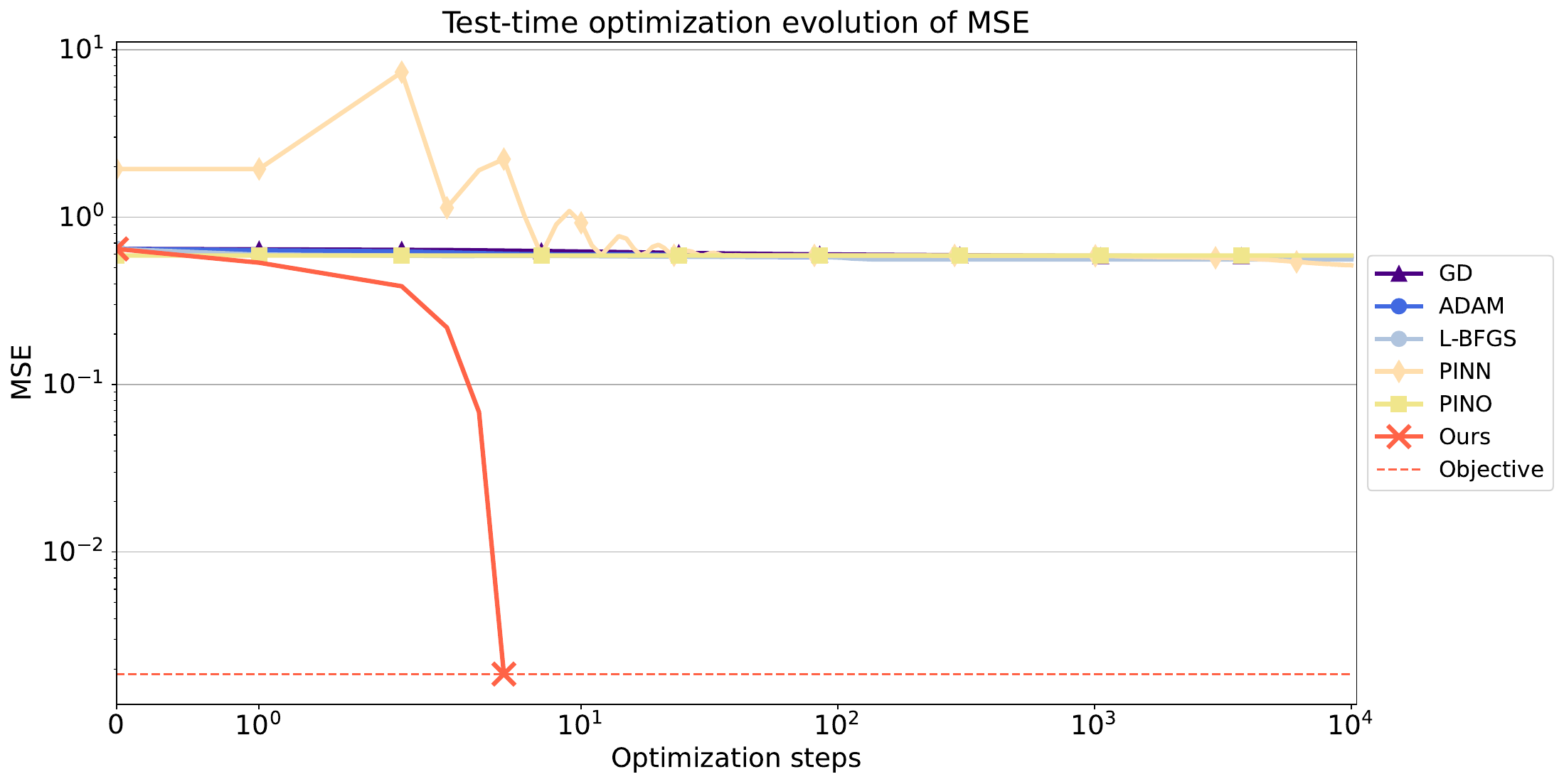}
    \caption{Test-time optimization based on the physical residual loss $\mathcal{L}_{\textnormal{PDE}}$ on \textit{Helmholtz}. Note that, even though hardly visible on this figure, the optimization is running very slowly and the PINN MSE (orange) decreases for the last steps. This dataset will probably need even more steps before convergence. }
    \label{fig:testopthelmholtz}
\end{figure}

\vspace{10cm}
\subsection{Poisson}
\begin{figure}[htbp]
\begin{subfigure}{0.48\textwidth}
    \centering
    \includegraphics[width=\textwidth]{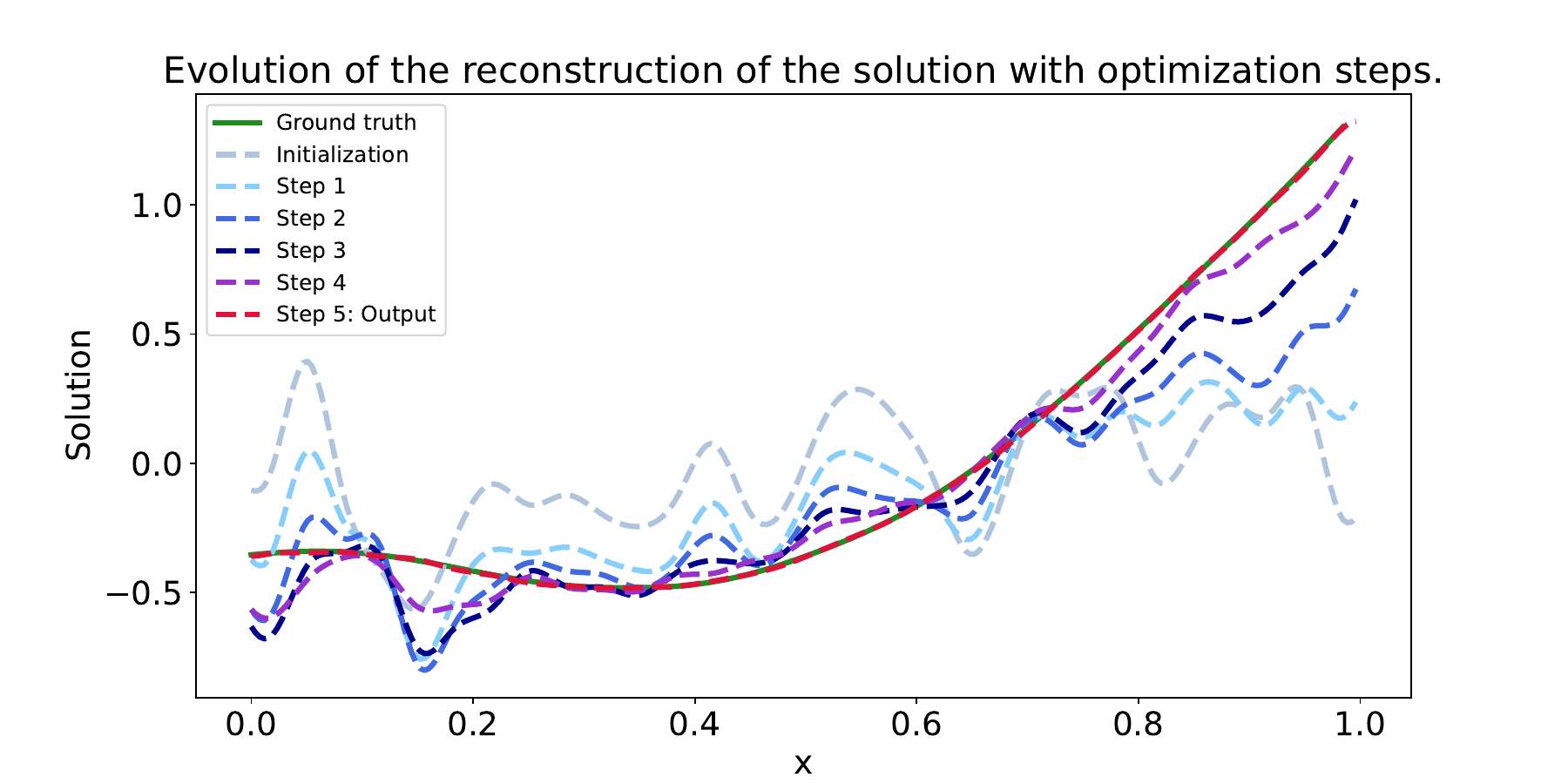}
    \caption{}
    \label{fig:ngdpoisson1}
\end{subfigure}
\hfill
\begin{subfigure}{0.48\textwidth}
    \centering
    \includegraphics[width=\textwidth]{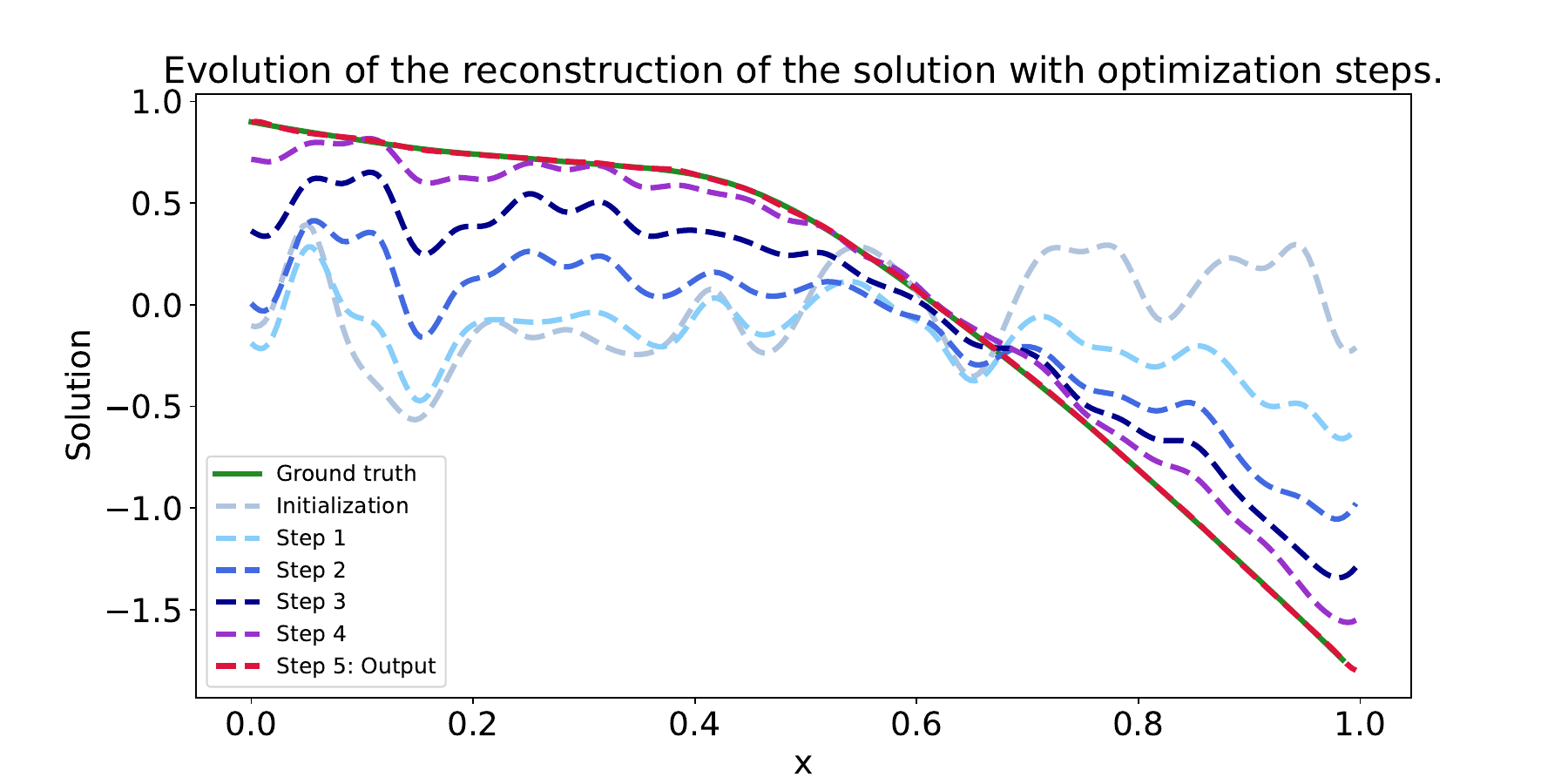}
    \caption{}
    \label{fig:ngdpoisson2}
\end{subfigure}
\caption{Reconstruction of the solution using our optimizer on the Poisson dataset. }
\end{figure}
\vfill
\begin{figure}[htbp]
\begin{subfigure}{0.48\textwidth}
    \centering
    \includegraphics[width=\textwidth]{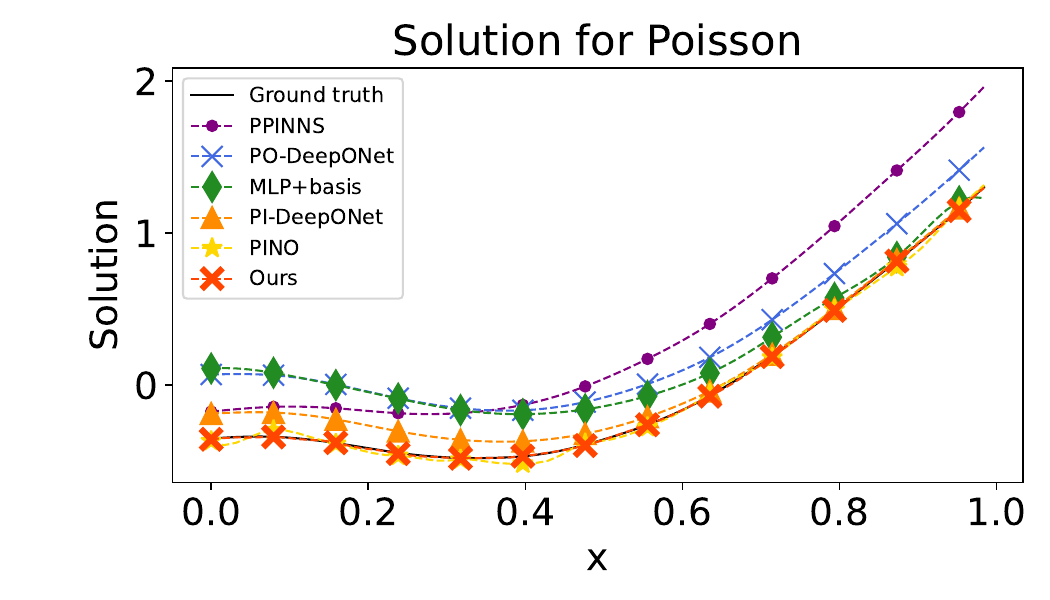}
    \caption{}
    \label{fig:baselinepoisson1}
\end{subfigure}
\hfill
\begin{subfigure}{0.48\textwidth}
    \centering
    \includegraphics[width=\textwidth]{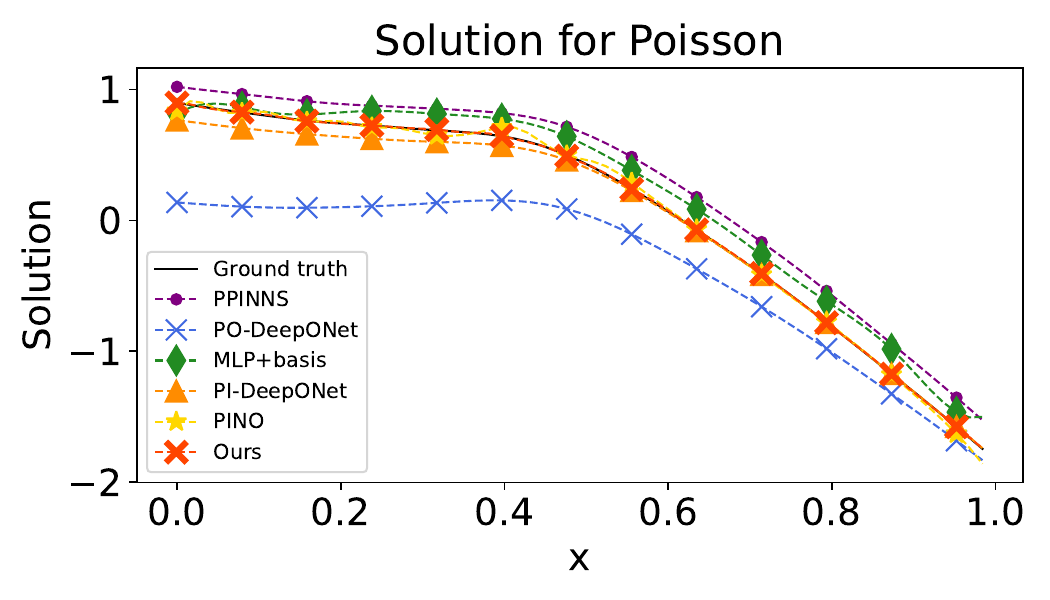}
    \caption{}
    \label{fig:baselinepoisson2}
\end{subfigure}
\caption{Visual comparison of the solutions for the Poisson equation. }
\end{figure}
\vfill
\begin{figure}[htbp]
    \centering
    \includegraphics[width=\textwidth]{img/compare_test_opt_forcingmspoisson_10000_MSE_large_2CR.pdf}
    \caption{Test-time optimization based on the physical residual loss $\mathcal{L}_{\textnormal{PDE}}$ on \textit{Poisson}. }
    \label{fig:testoptpoisson}
\end{figure}

\vspace{4cm}
\subsection{Reaction-Diffusion}
\begin{figure}[htbp]
\begin{subfigure}{\textwidth}
    \centering
    \includegraphics[width=\textwidth]{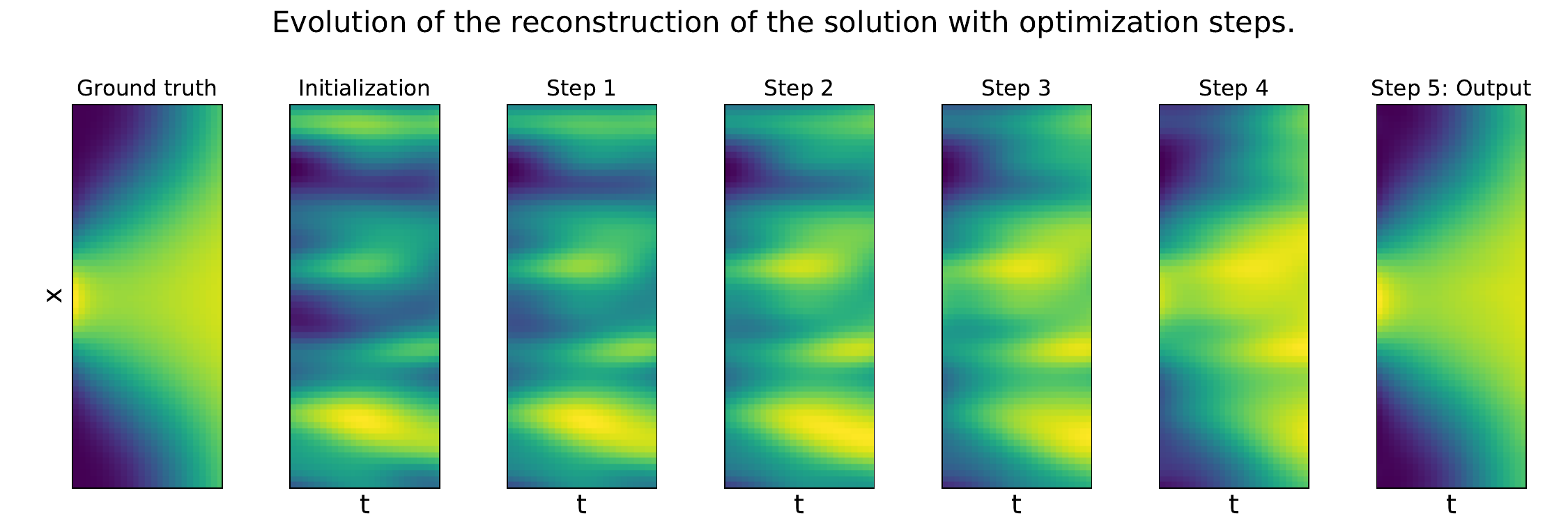}
    \caption{}
    \label{fig:ngdnlrd1}
\end{subfigure}
\vfill
\begin{subfigure}{\textwidth}
    \centering
    \includegraphics[width=\textwidth]{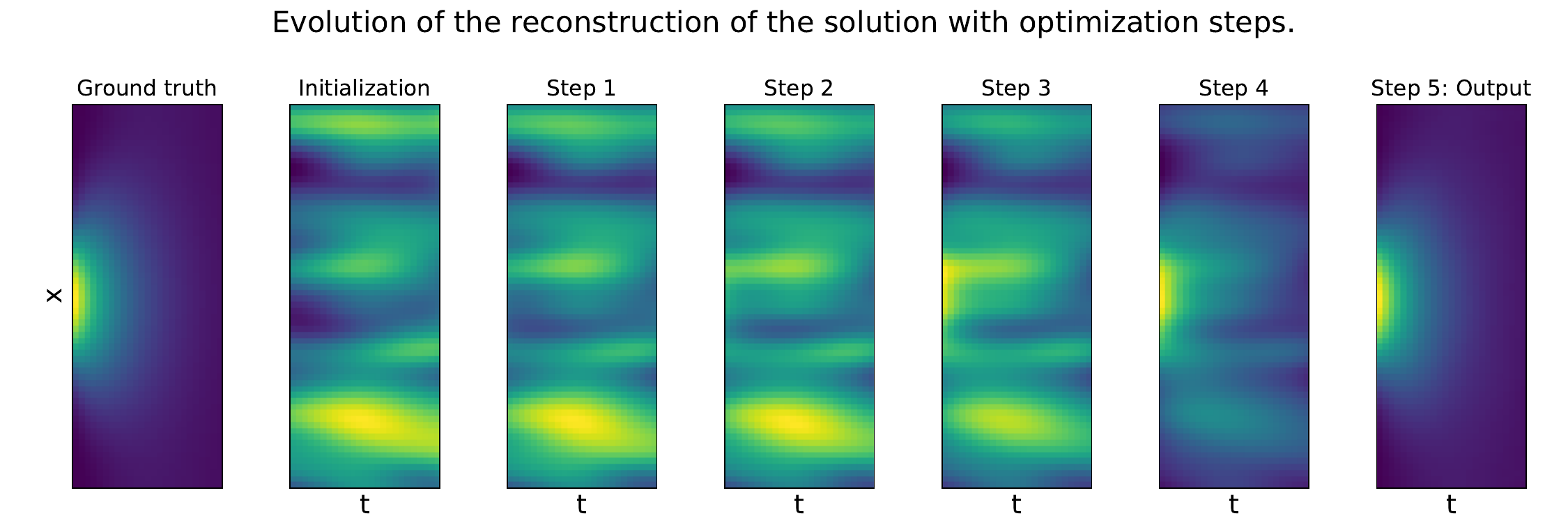}
    \caption{}
    \label{fig:ngdnlrd2}
\end{subfigure}
\caption{Reconstruction of the solution using our optimizer on the Reaction-Diffusion dataset. }
\end{figure}
\vfill
\begin{figure}[htbp]
\begin{subfigure}{\textwidth}
    \centering
    \includegraphics[width=\textwidth]{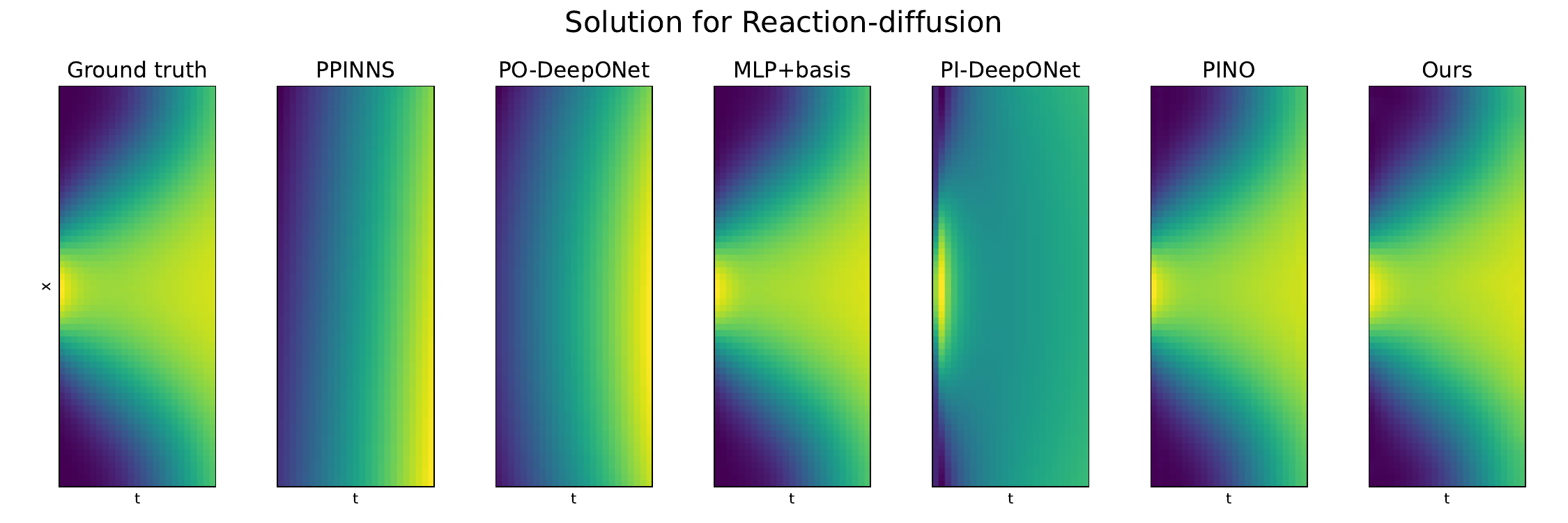}
    \caption{}
    \label{fig:baselinenlrd1}
\end{subfigure}
\vfill
\begin{subfigure}{\textwidth}
    \centering
    \includegraphics[width=\textwidth]{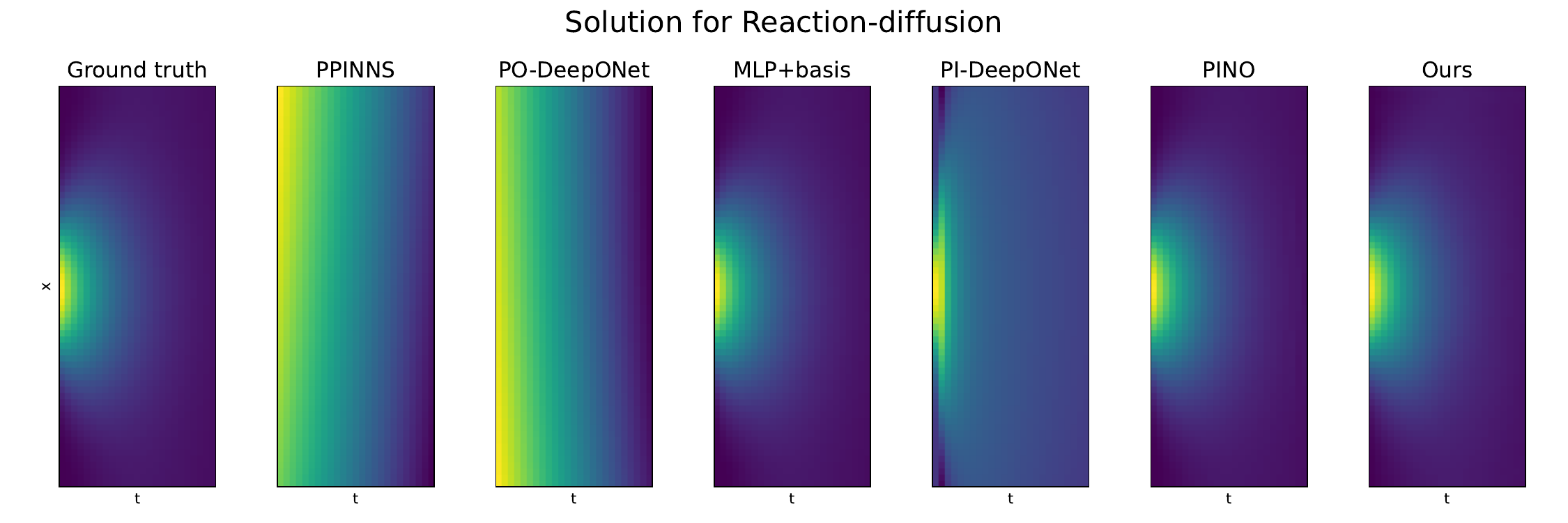}
    \caption{}
    \label{fig:baselinenlrd2}
\end{subfigure}
\caption{Visual comparison of the solutions for the Reaction-Diffusion equation.}
\end{figure}
\vfill
\begin{figure}[htbp]
    \centering
    \includegraphics[width=\textwidth]{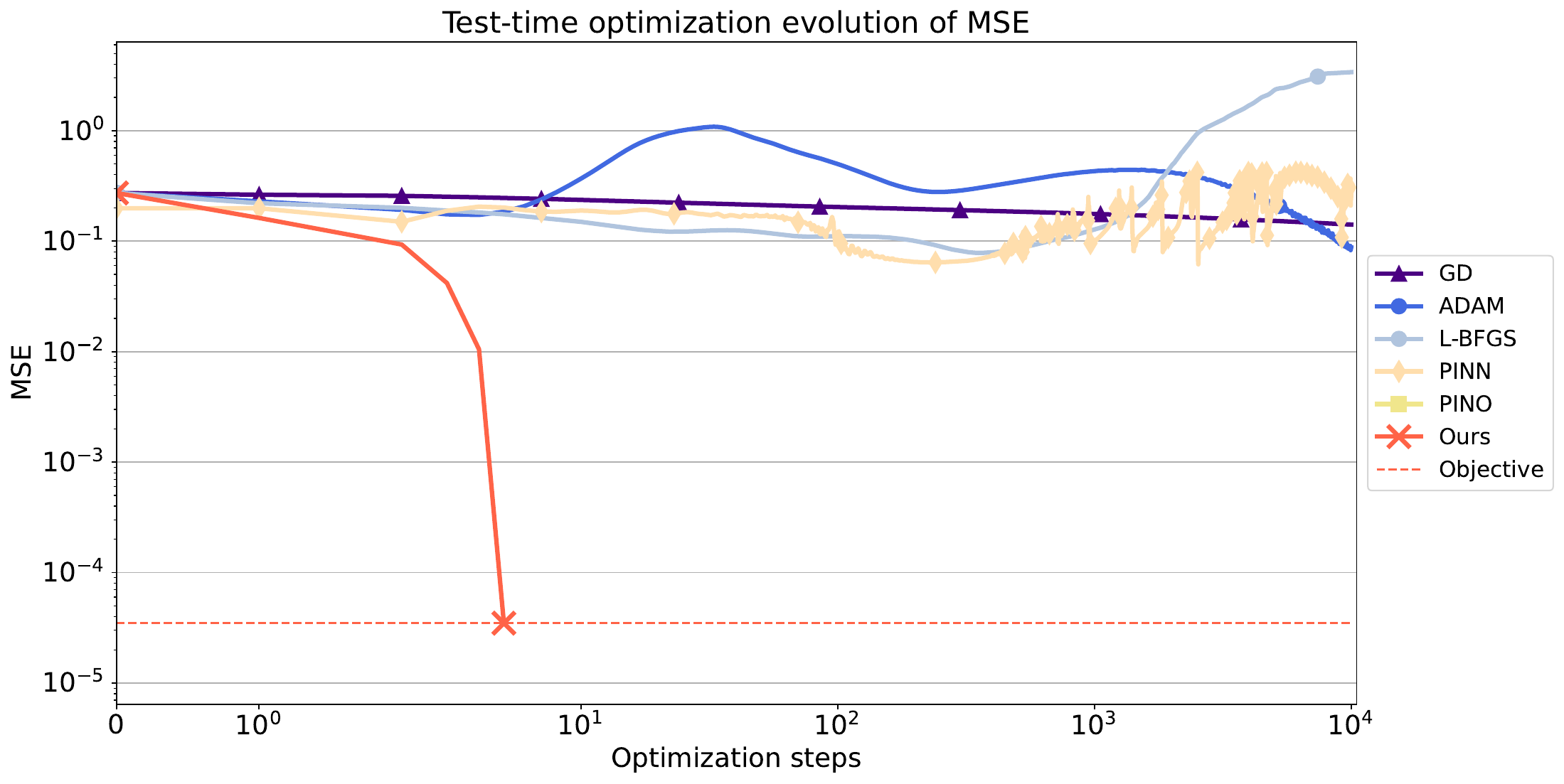}
    \caption{Test-time optimization based on the physical residual loss $\mathcal{L}_{\textnormal{PDE}}$ on \textit{NLRD}. }
    \label{fig:testoptnlrd}
\end{figure}

\clearpage
\subsection{Darcy}
\begin{figure}[htbp]
\begin{subfigure}{\textwidth}
    \centering
    \includegraphics[width=\textwidth, trim={0cm 4cm 0cm 0cm}, clip]{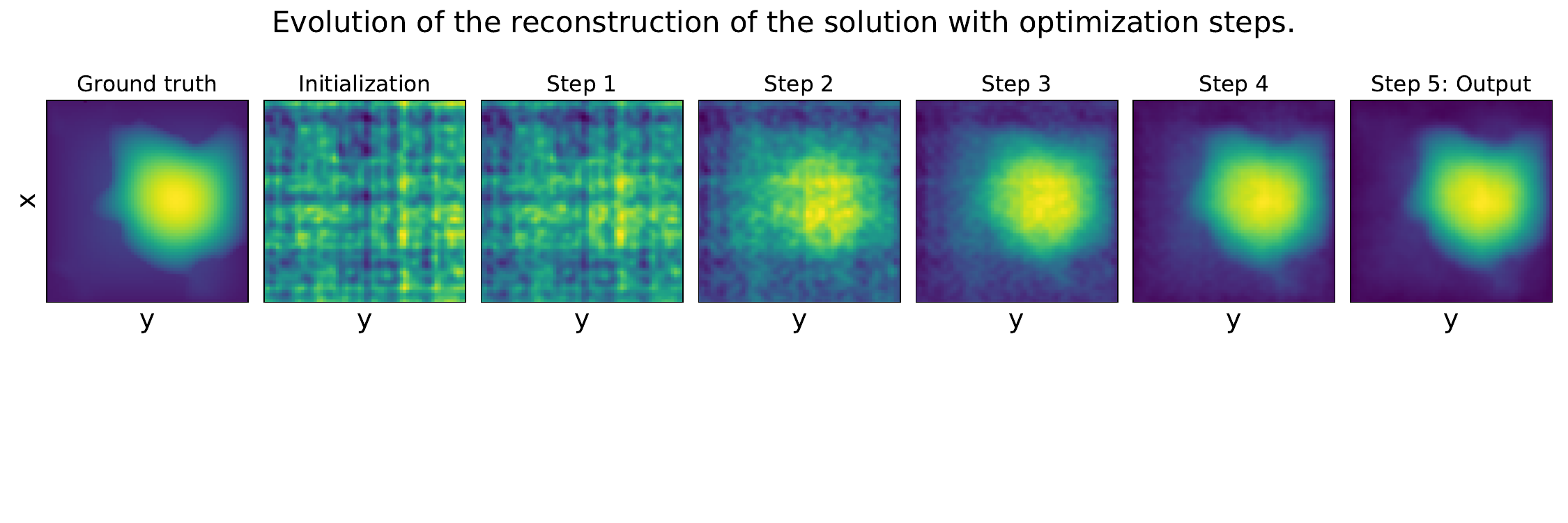}
    \caption{}
    \label{fig:ngddarcy1}
\end{subfigure}
\vfill
\begin{subfigure}{\textwidth}
    \centering
    \includegraphics[width=\textwidth, trim={0cm 4cm 0cm 0cm}, clip]{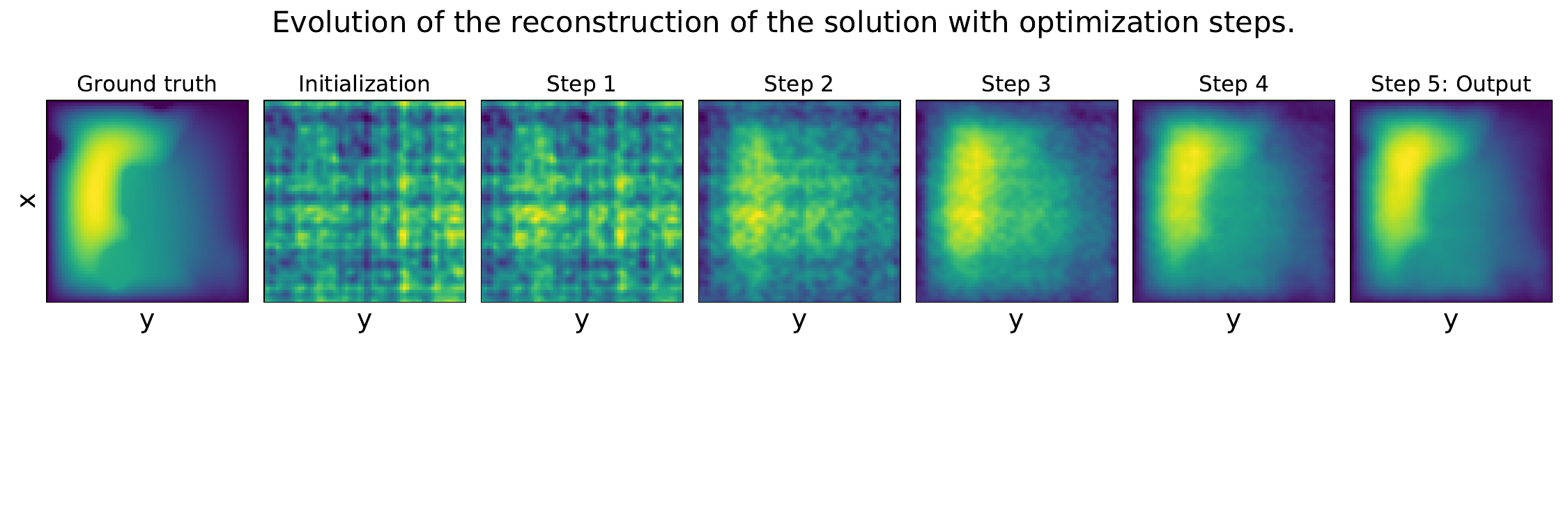}
    \caption{}
    \label{fig:ngddarcy2}
\end{subfigure}
\caption{Reconstruction of the solution using our optimizer on the Darcy dataset. }
\end{figure}
\vfill
\begin{figure}[htbp]
\begin{subfigure}{\textwidth}
    \centering
    \includegraphics[width=\textwidth, trim={0cm 2cm 0cm 0cm}, clip]{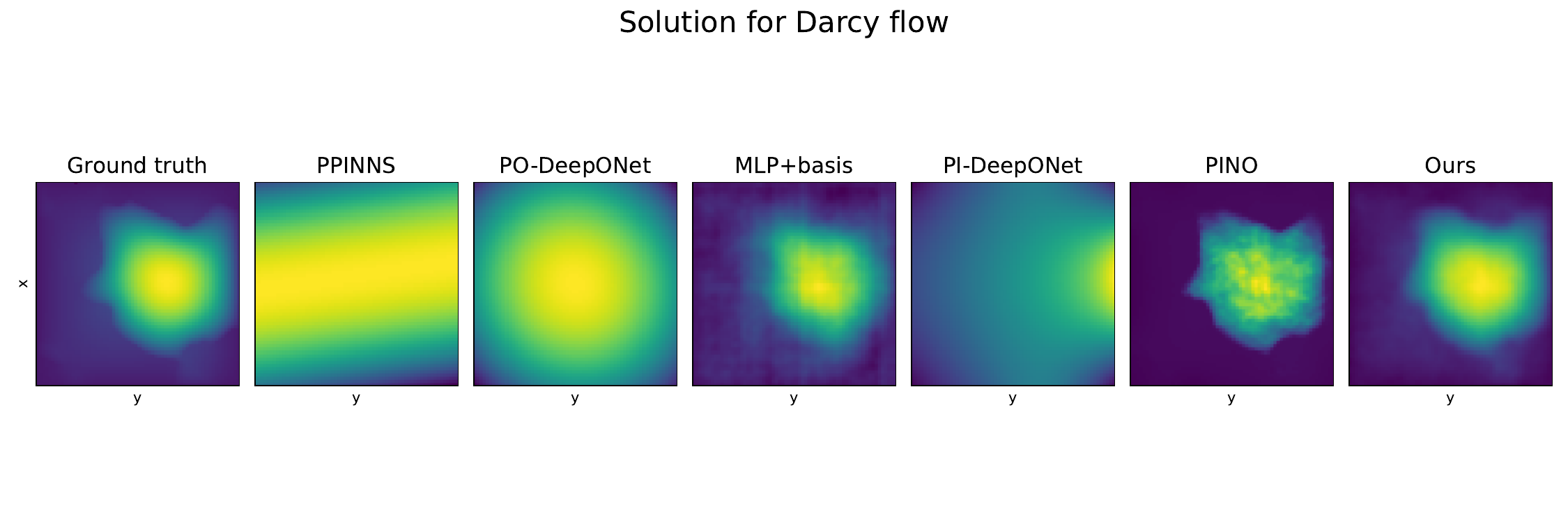}
    \caption{}
    \label{fig:baselinedarcy1}
\end{subfigure}
\vfill
\begin{subfigure}{\textwidth}
    \centering
    \includegraphics[width=\textwidth, trim={0cm 2cm 0cm 0cm}, clip]{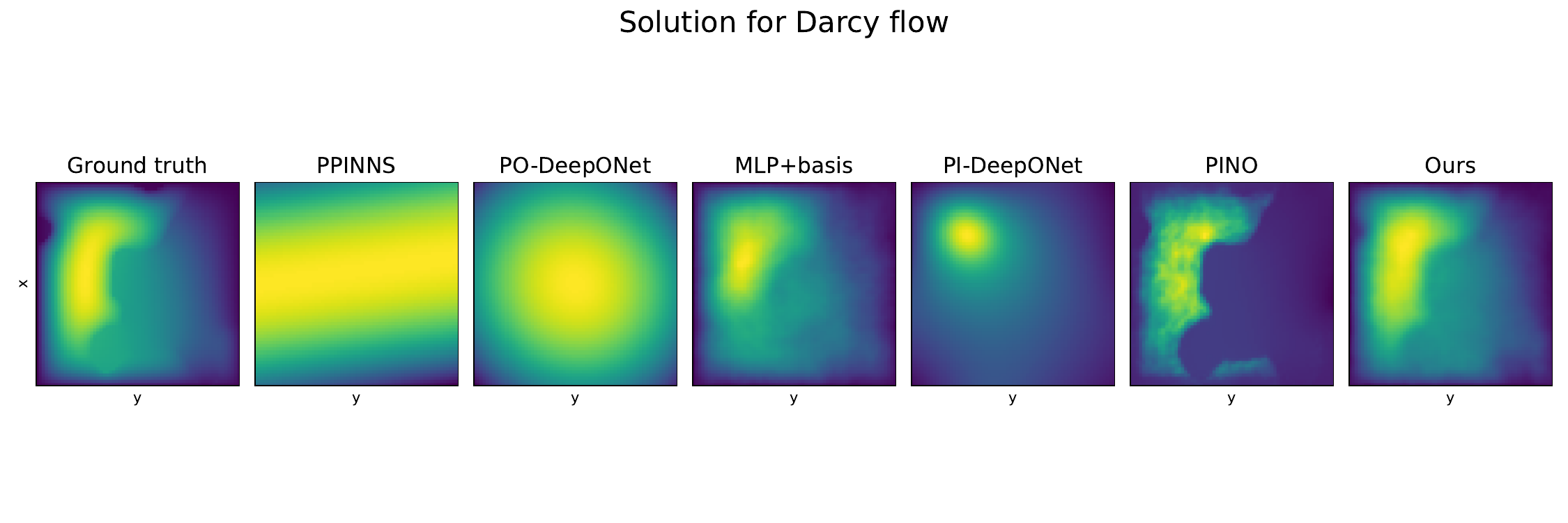}
    \caption{}
    \label{fig:baselinedarcy2}
\end{subfigure}
\caption{Visual comparison of the solutions for the Darcy equation.}
\end{figure}
\vfill
\begin{figure}[htbp]
    \centering
    \includegraphics[width=\textwidth]{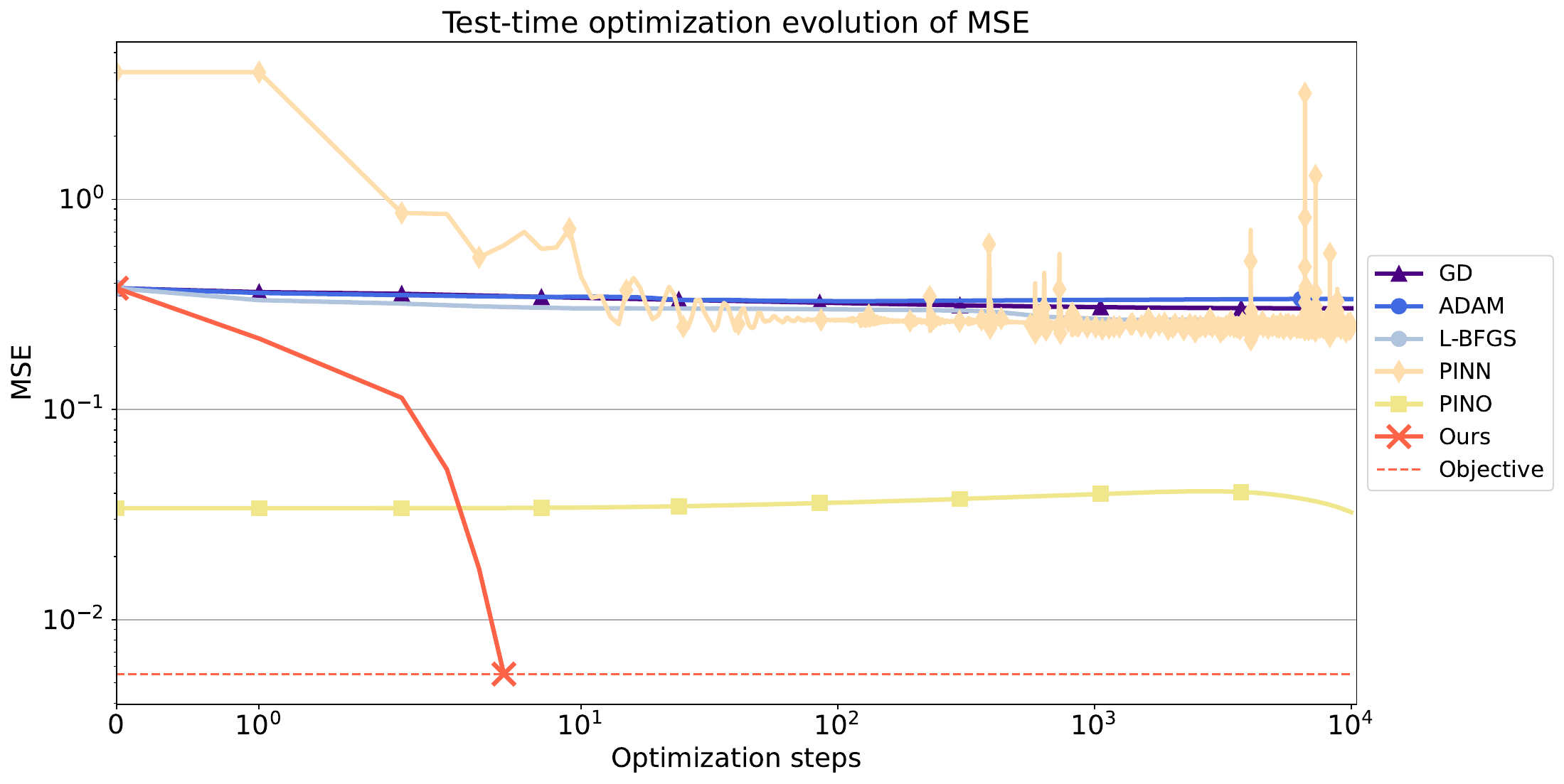}
    \caption{Test-time optimization based on the physical residual loss $\mathcal{L}_{\textnormal{PDE}}$ on \textit{Darcy}. }
    \label{fig:testoptdarcy}
\end{figure}


\clearpage
\subsection{Heat}
\begin{figure}[htbp]
\begin{subfigure}{\textwidth}
    \centering
    \includegraphics[width=\textwidth, trim={4cm 0cm 4cm 0cm}, clip]{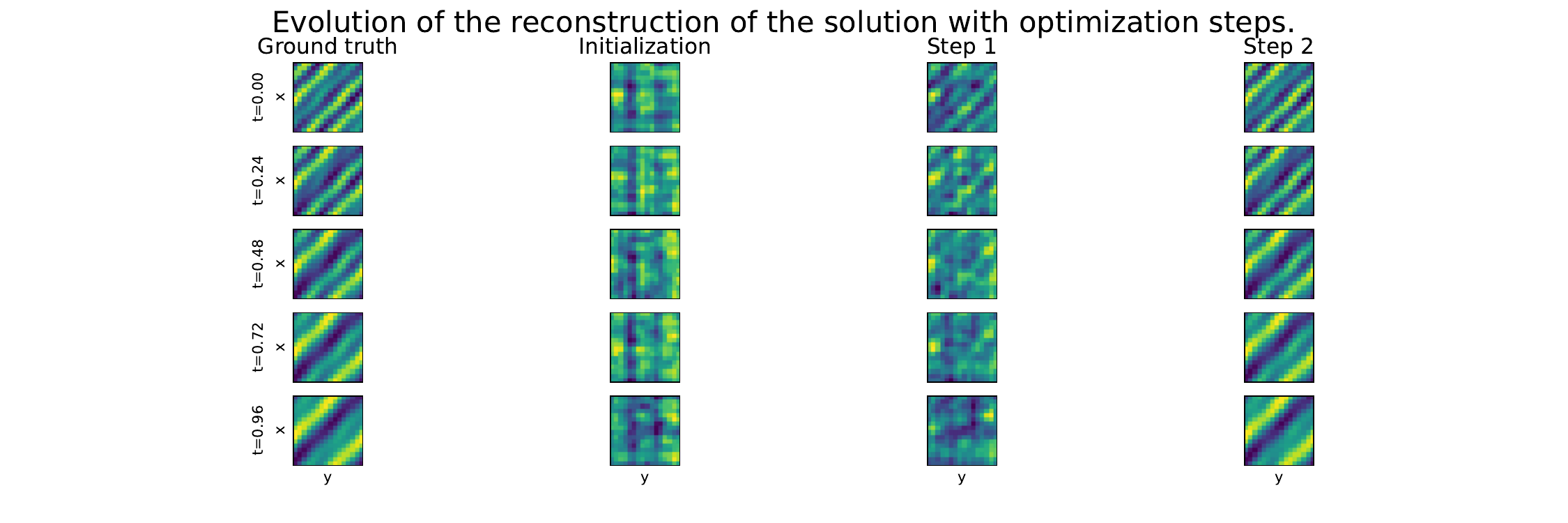}
    \caption{}
    \label{fig:ngdheat1}
\end{subfigure}
\vfill
\begin{subfigure}{\textwidth}
    \centering
    \includegraphics[width=\textwidth, trim={4cm 0cm 4cm 0cm}, clip]{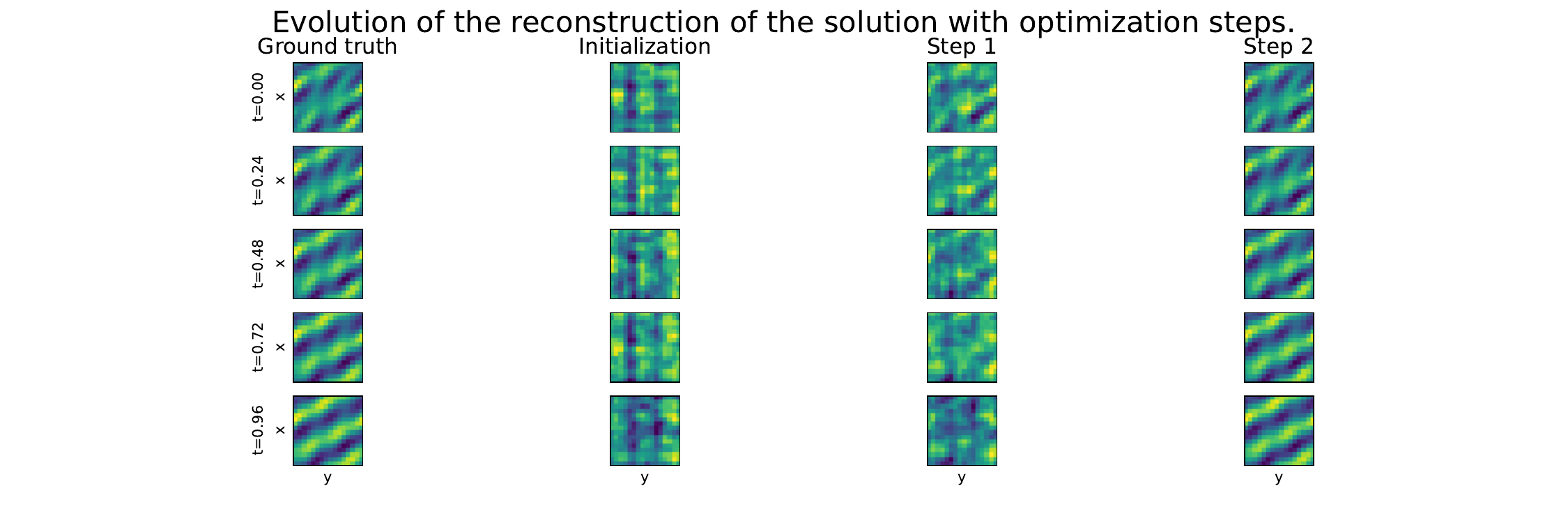 }
    \caption{}
    \label{fig:ngdheat2}
\end{subfigure}
\caption{Reconstruction of the solution using our optimizer on the Heat dataset. }
\end{figure}
\vfill
\begin{figure}[htbp]
\begin{subfigure}{\textwidth}
    \centering
    \includegraphics[width=\textwidth, trim={0cm 0cm 0cm 0cm}, clip]{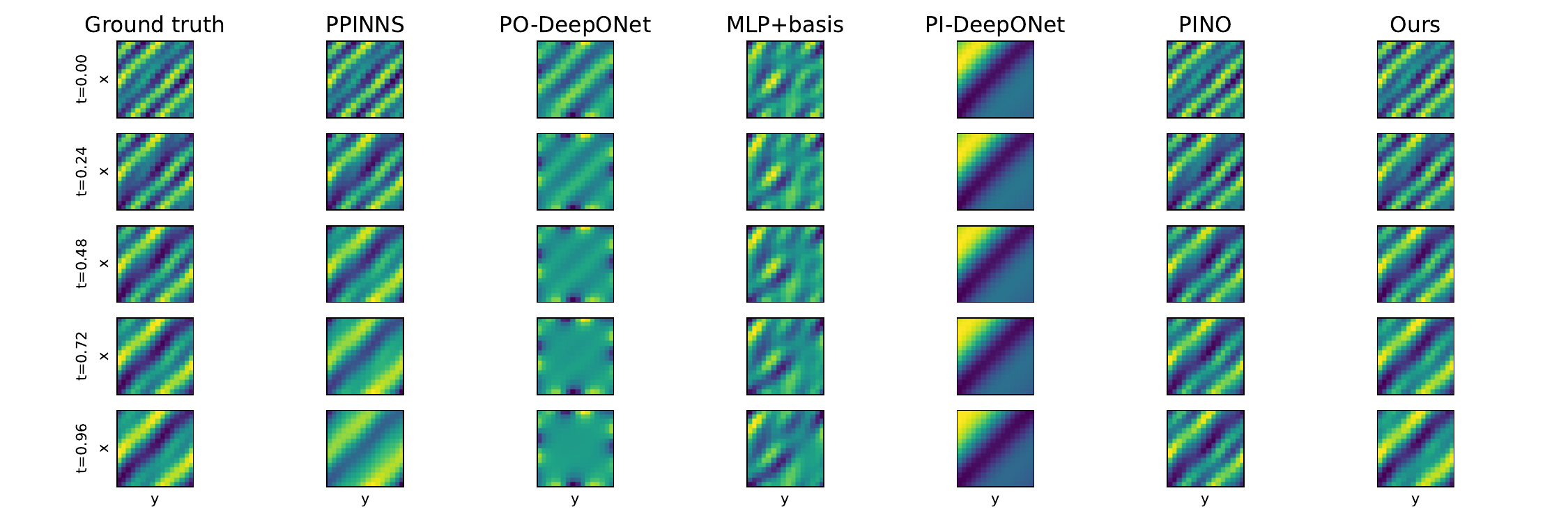}
    \caption{}
    \label{fig:baselineheat1}
\end{subfigure}
\vfill
\begin{subfigure}{\textwidth}
    \centering
    \includegraphics[width=\textwidth, trim={0cm 0cm 0cm 0cm}, clip]{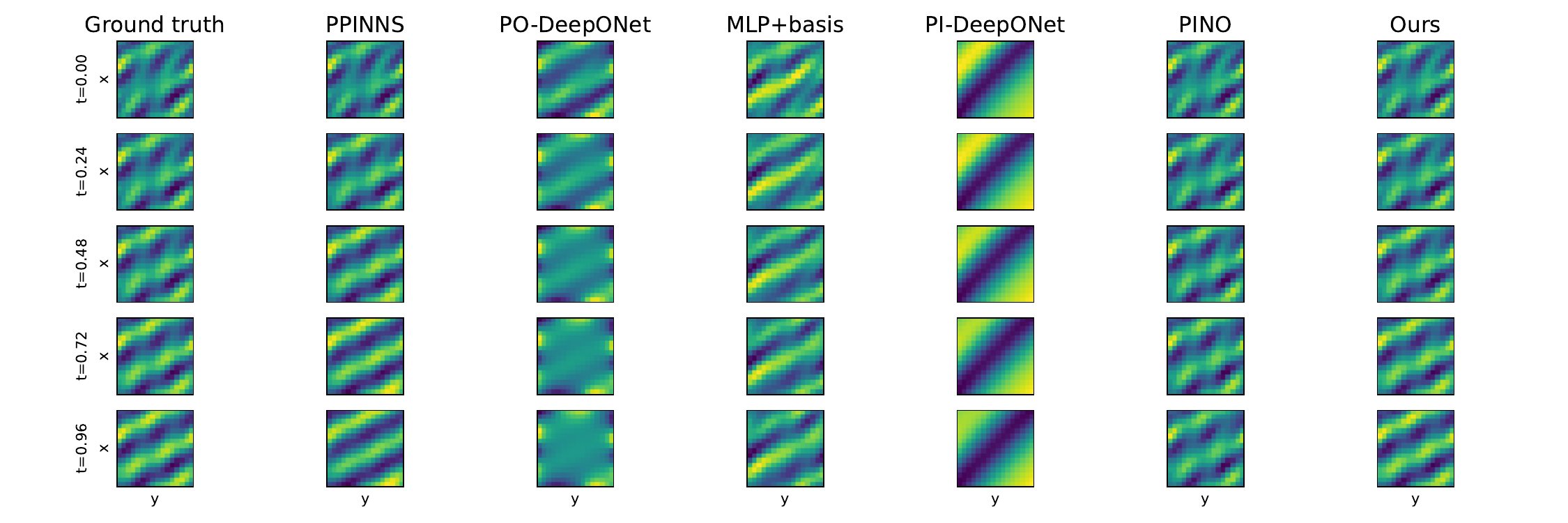}
    \caption{}
    \label{fig:baselineheat2}
\end{subfigure}
\caption{Visual comparison of the solutions for the Heat equation.}
\end{figure}
\vfill
\begin{figure}[htbp]
    \centering
    \includegraphics[width=\textwidth]{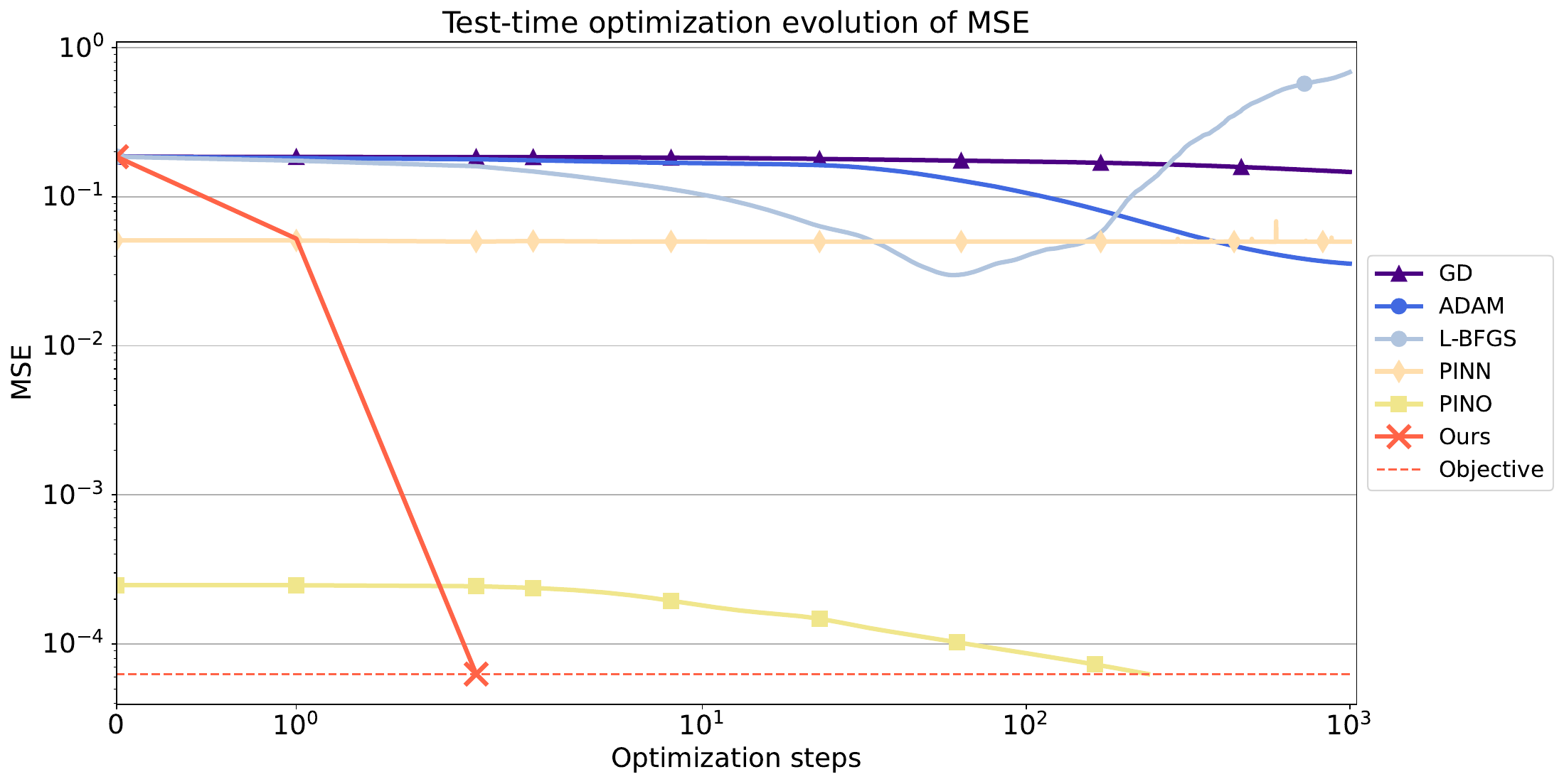}
    \caption{Test-time optimization based on the physical residual loss $\mathcal{L}_{\textnormal{PDE}}$ on \textit{Heat}. For computational reasons, this experiment has been conducted on $1,000$ steps only. }
    \label{fig:testoptheat}
\end{figure}

\clearpage
\subsection{Advection}
\begin{figure}[htbp]
\begin{subfigure}{\textwidth}
    \centering
    \includegraphics[width=\textwidth]{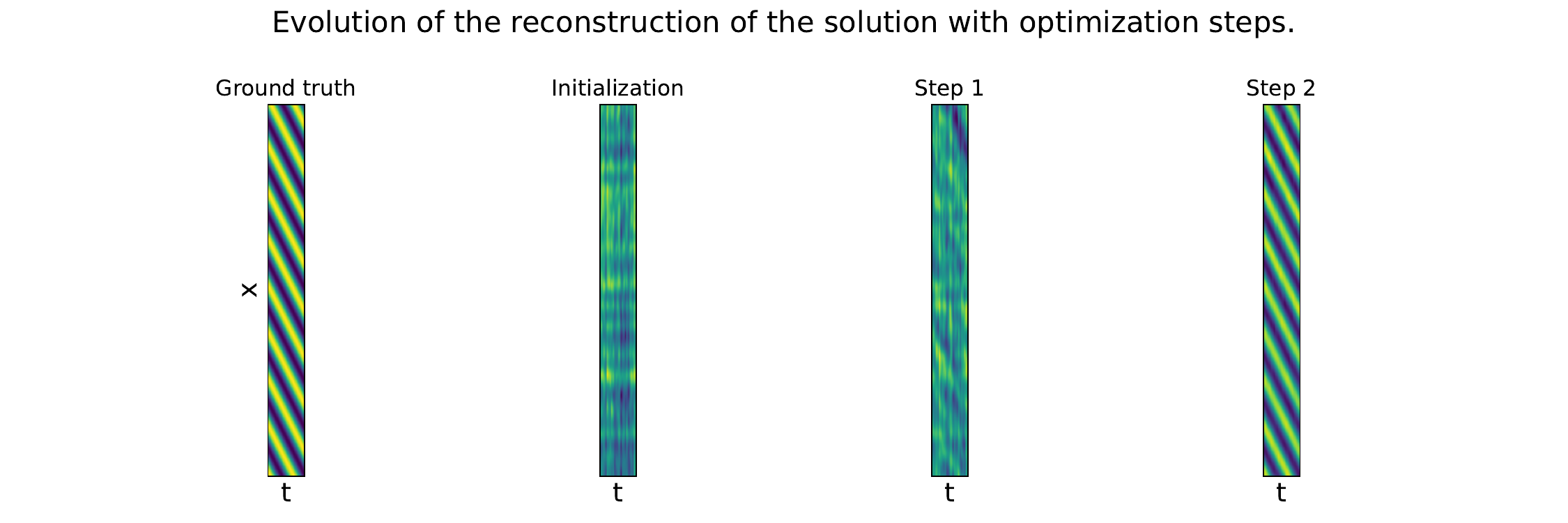}
    \caption{}
    \label{fig:ngdadvections1}
\end{subfigure}
\vfill
\begin{subfigure}{\textwidth}
    \centering
    \includegraphics[width=\textwidth]{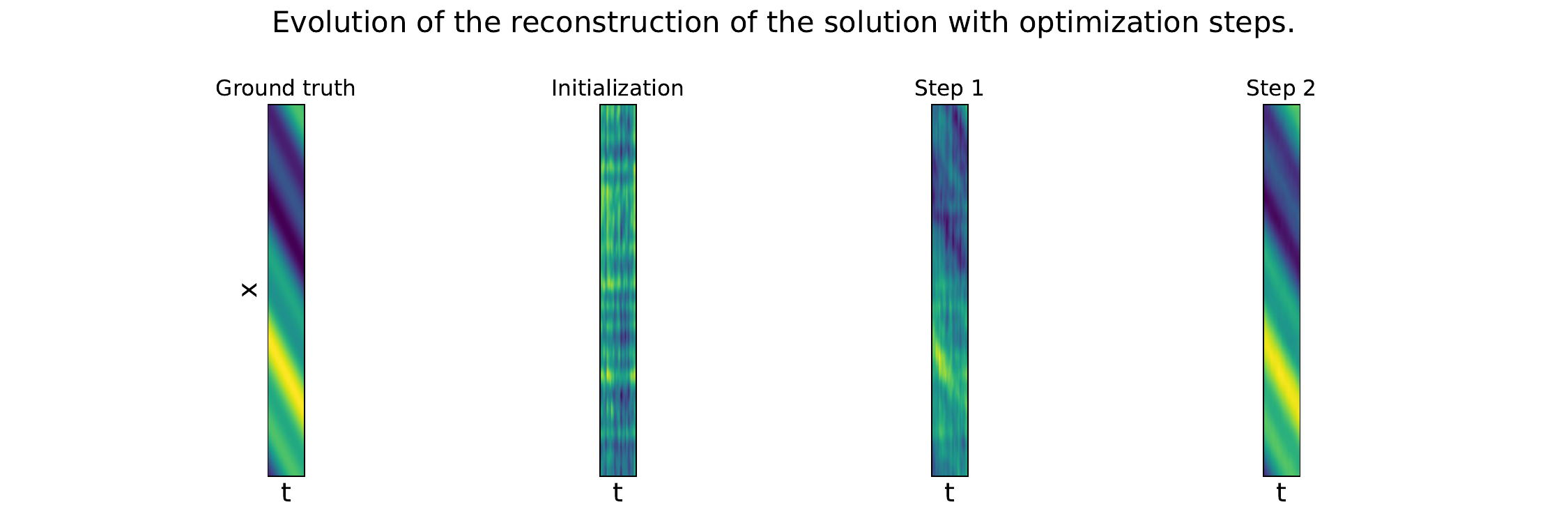}
    \caption{}
    \label{fig:ngdadvections2}
\end{subfigure}
\caption{Reconstruction of the solution using our optimizer on the Advection dataset. }
\end{figure}
\begin{figure}[htbp]
\begin{subfigure}{\textwidth}
    \centering
    \includegraphics[width=\textwidth]{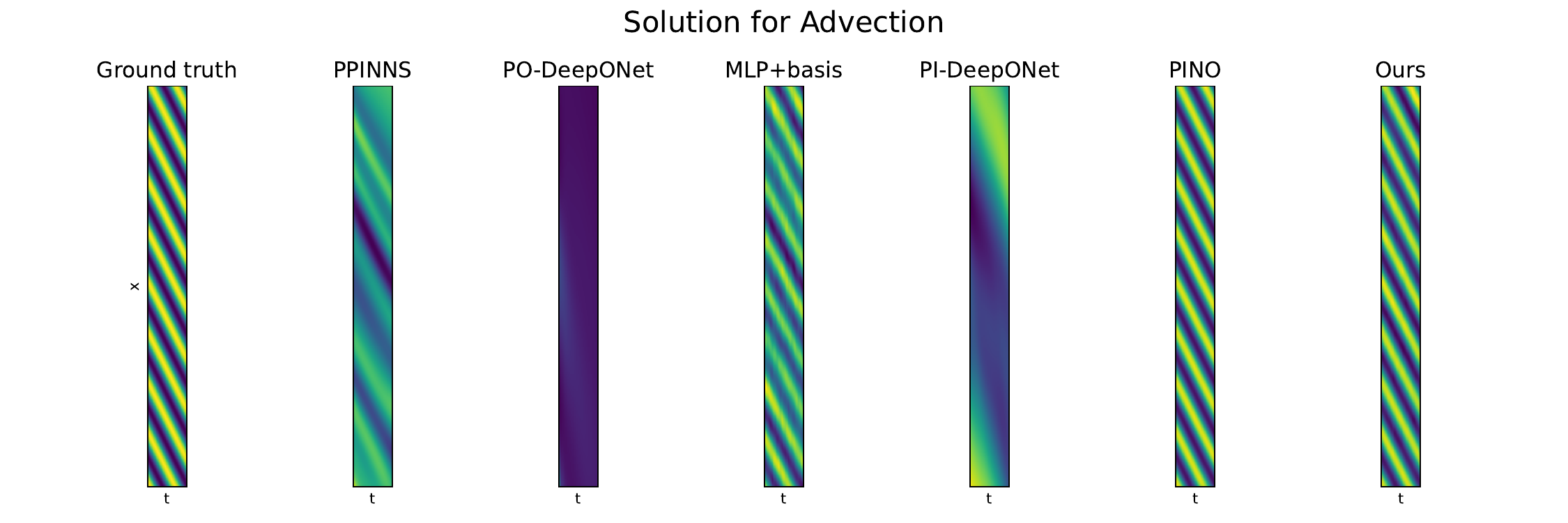}
    \caption{}
    \label{fig:baselineadvections1}
\end{subfigure}
\vfill
\begin{subfigure}{\textwidth}
    \centering
    \includegraphics[width=\textwidth]{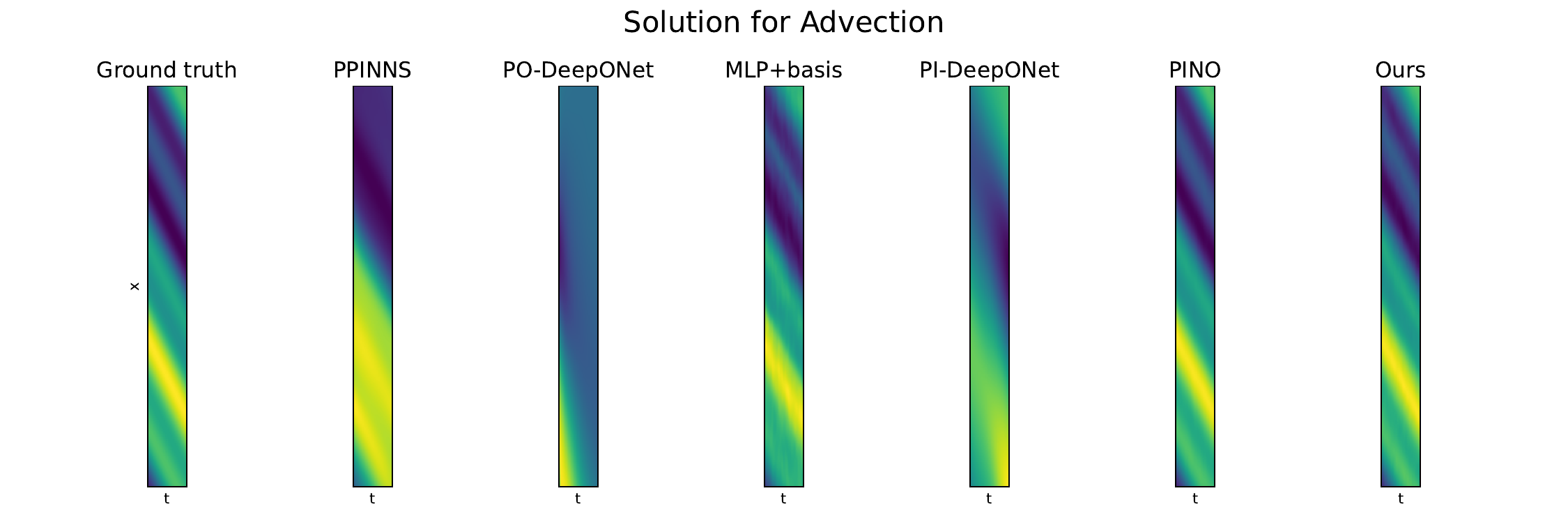}
    \caption{}
    \label{fig:baselineadvections2}
\end{subfigure}
\caption{Visual comparison of the solutions for the Advection equation.}
\end{figure}
\vfill
\begin{figure}[htbp]
    \centering
    \includegraphics[width=\textwidth]{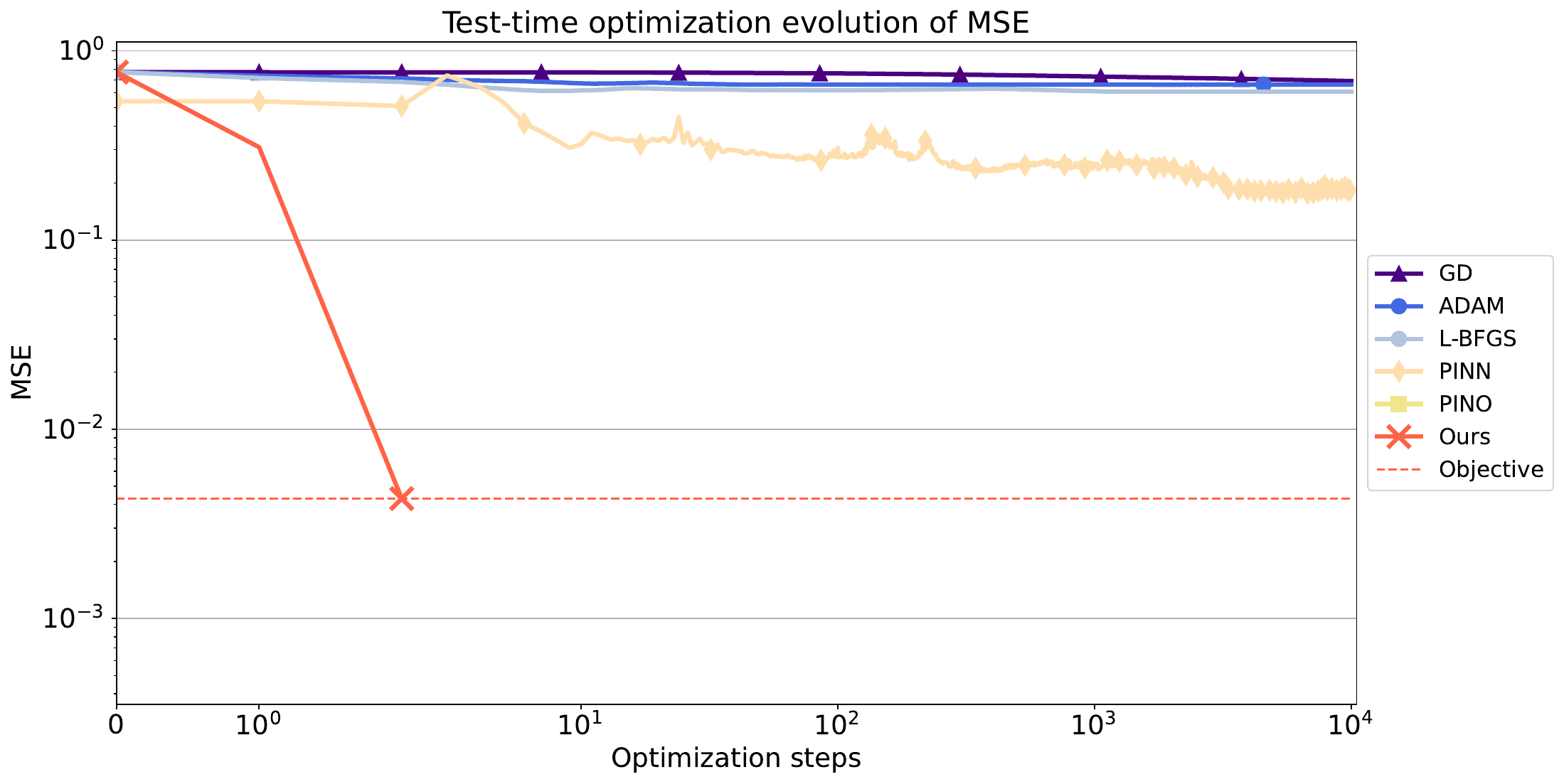}
    \caption{Test-time optimization based on the physical residual loss $\mathcal{L}_{\textnormal{PDE}}$ on \textit{Advection}. }
    \label{fig:testoptadvections}
\end{figure}

\clearpage
\subsection{Non-Linear Reaction-Diffusion with Initial Conditions}
\begin{figure}[htbp]
\begin{subfigure}{\textwidth}
    \centering
    \includegraphics[width=\textwidth]{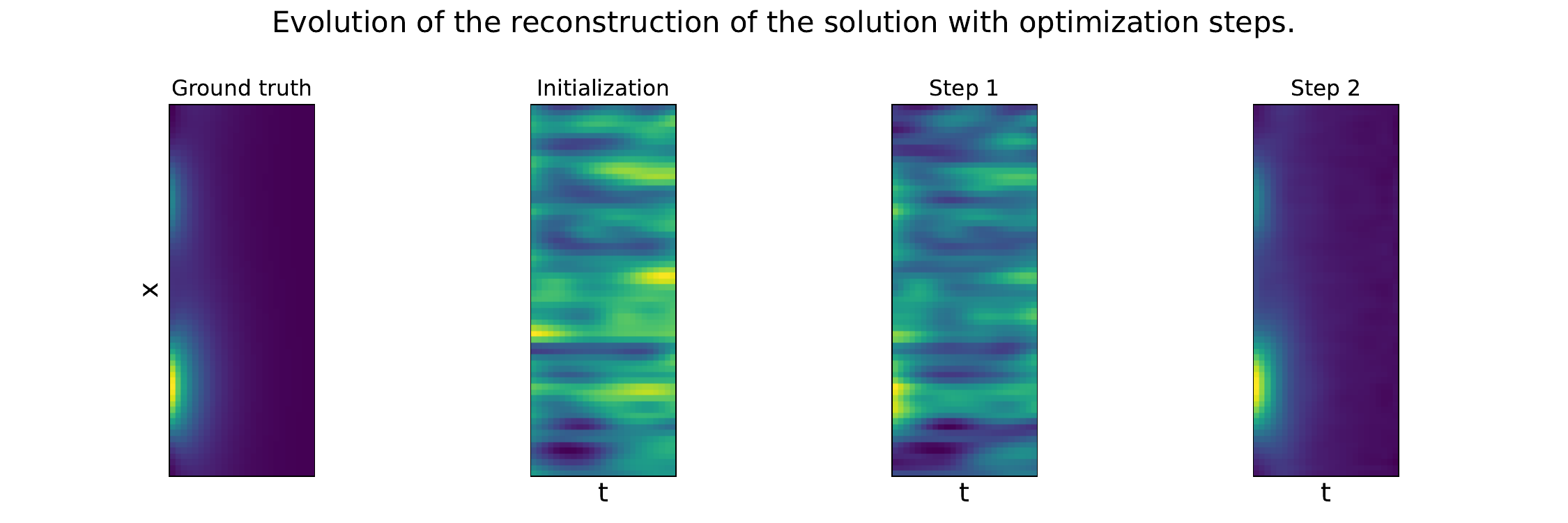}
    \caption{}
    \label{fig:ngdnlrdics1}
\end{subfigure}
\vfill
\begin{subfigure}{\textwidth}
    \centering
    \includegraphics[width=\textwidth]{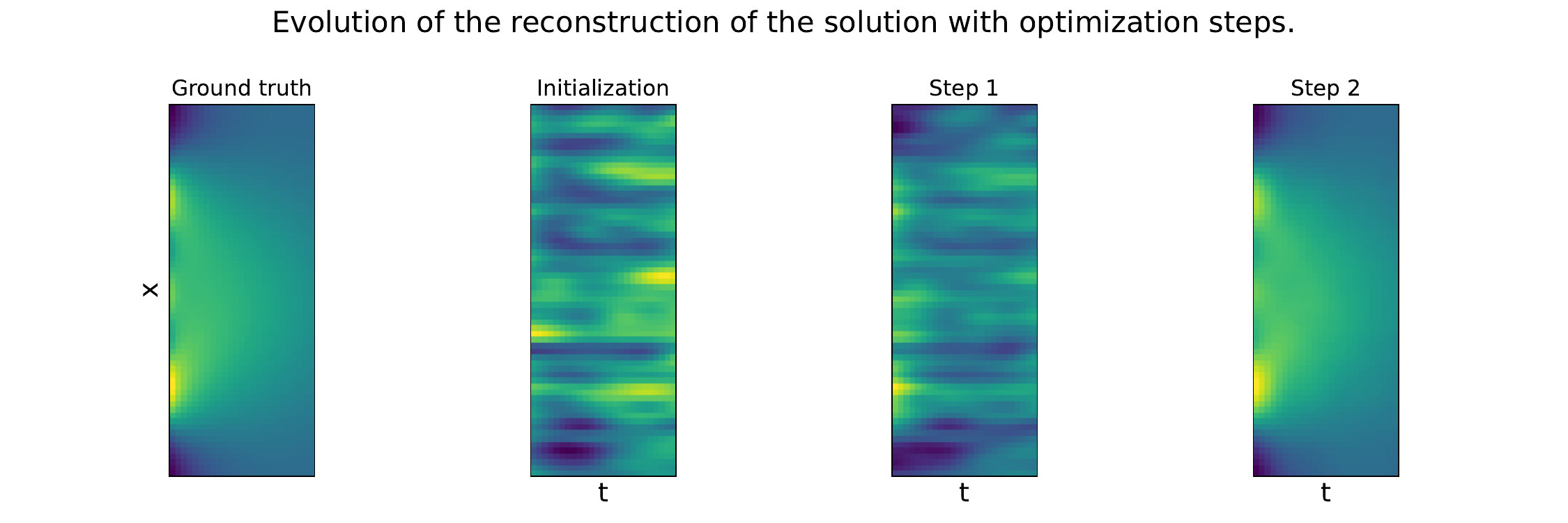}
    \caption{}
    \label{fig:ngdnlrdics2}
\end{subfigure}
\caption{Reconstruction of the solution using our optimizer on the NLRDIC dataset. }
\end{figure}
\begin{figure}[htbp]
\begin{subfigure}{\textwidth}
    \centering
    \includegraphics[width=\textwidth]{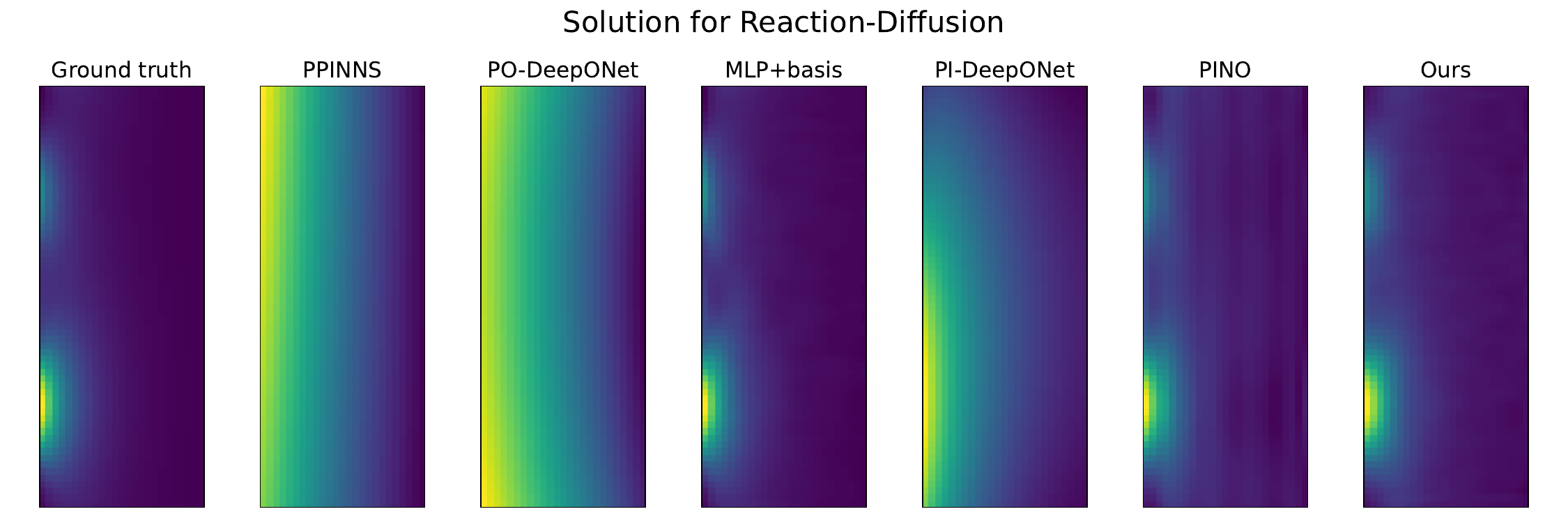}
    \caption{}
    \label{fig:baselinenlrdics1}
\end{subfigure}
\vfill
\begin{subfigure}{\textwidth}
    \centering
    \includegraphics[width=\textwidth]{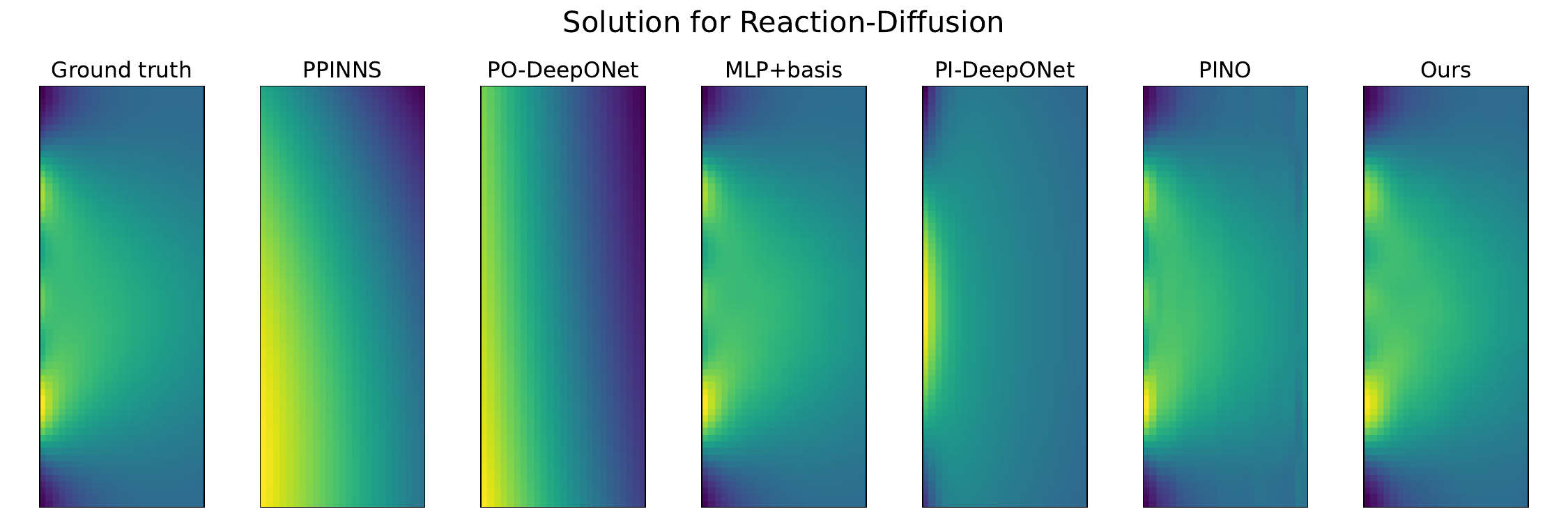}
    \caption{}
    \label{fig:baselinenlrdics2}
\end{subfigure}
\caption{Visual comparison of the solutions for the NLRD with varying IC equation.}
\end{figure}
\vfill
\begin{figure}[htbp]
    \centering
    \includegraphics[width=\textwidth]{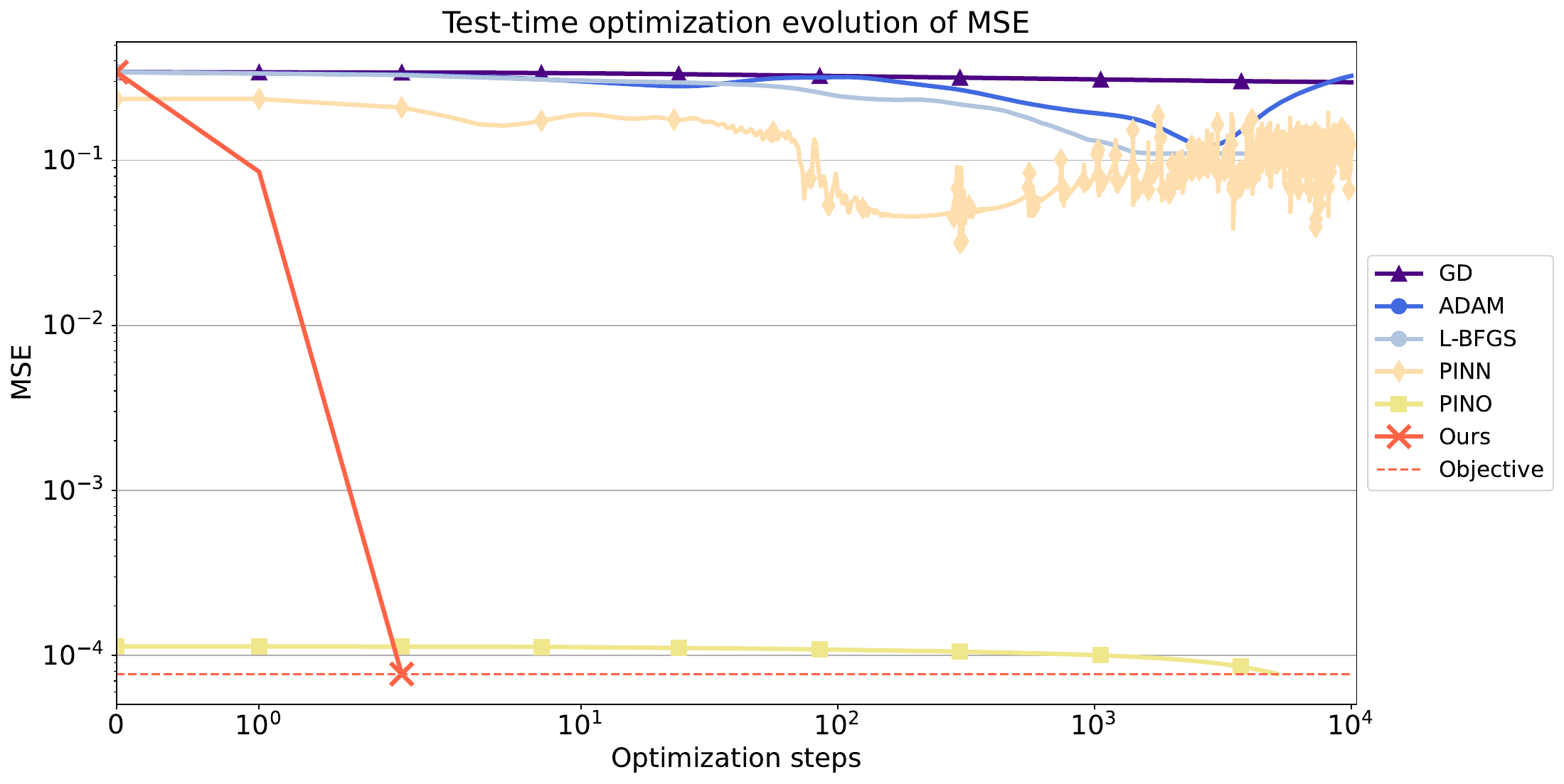}
    \caption{Test-time optimization based on the physical residual loss $\mathcal{L}_{\textnormal{PDE}}$ on \textit{NLRD} with varying IC. }
    \label{fig:testoptnlrdics}
\end{figure}

\end{document}